\documentclass[twoside,11pt]{article}

\usepackage{jmlr2e}
\usepackage{amsmath,bbm,amssymb,multirow,natbib,graphicx,makecell,subfigure,booktabs,array,url,mathtools,wrapfig,lipsum,mathrsfs,dsfont,epstopdf,bm,bbm,relsize,caption,color,algorithm,enumitem}

\usepackage[noend]{algpseudocode}
\usepackage[useLove2s,dvipsLove2s,svgLove2s,table]{xcolor}
\usepackage[colorlinks=true,
linkcolor=blue,
urlcolor=blue,
citecolor=blue]{hyperref}

\setcellgapes{5pt}

\newtheorem{conj}{Conjecture}
\newtheorem{thm}[conj]{Theorem}
\newtheorem{cor}[conj]{Corollary}

\newtheorem{ass}{Assumption}

\def\qed{\hfill\BlackBox}
\def\L{\mathcal{L}}
\def\A{\mathcal{A}}

\def\R{\mathbb{R}}
\def\wh{\widehat}
\def\wt{\widetilde}
\def\bI{\bm{I}}
\def\E{\mathcal{E}}
\def\u{\mu}
\def\I{\mathcal{I}}

\def\g{\gamma}
\def\a{\alpha}
\def\EE{\mathbb{E} }
\def\PP{\mathbb{P} }

\def\eps{\varepsilon}
\def\S{\mathcal{S}}
\def\diag{\textrm{diag}}
\def\i{\infty}
\def\r{{\infty,1}}
\def\1{\bm{1}}
\def\M{\Pi}
\def\KL{\textrm{KL}}

\def\ua{\underline{\alpha}}
\def\uu{\underline{\mu}}
\def\og{\overline{\gamma}}
\def\ug{\underline{\gamma}}

\DeclarePairedDelimiter{\floor}{\lfloor}{\rfloor}
\let\emptyset\varnothing
\makeatletter
\def\BState{\State\hskip-\ALG@thistlm}
\makeatother
\captionsetup{font={footnotesize},skip=5pt,width=0.75\textwidth}


\title{Optimal estimation of sparse topic models}


\author{Xin Bing\thanks{Department of Statistics and Data Science, Cornell University, Ithaca, NY. E-mail: \texttt{xb43@cornell.edu}.}~~~~~Florentina Bunea\thanks{Department of Statistics and Data Science, Cornell University, Ithaca, NY. E-mail: \texttt{fb238@cornell.edu}.}~~~~~Marten Wegkamp\thanks{Department of Mathematics and Department of Statistics and Data Science, Cornell University, Ithaca, NY. E-mail: \texttt{mhw73@cornell.edu}. }}

\begin{document}
	
	
	\maketitle
	
	\begin{abstract}%
		Topic models have become popular tools for dimension reduction and exploratory analysis of text data which  consists in  observed frequencies of a vocabulary of $p$ words in $n$ documents, stored in a  $p\times n$ matrix. The main premise is that the mean of this data matrix  can be factorized into a product of two non-negative matrices: a $p\times K$ word-topic matrix $A$ and a $K\times n$ topic-document matrix $W$. 
		
		This paper  studies the estimation of $A$ that is possibly element-wise sparse, and the number of topics $K$ is unknown. In this under-explored context, we derive a new minimax lower bound for the estimation of such $A$ and propose a new computationally efficient algorithm for its recovery. 
		We derive a finite sample upper bound for our estimator,  and  show that it matches the minimax lower bound in many scenarios. Our estimate adapts   to the unknown sparsity of $A$ and our analysis is valid for any finite $n$, $p$, $K$ and document lengths. 
		
		Empirical results on both synthetic data and semi-synthetic data show that our proposed estimator is a strong competitor of the existing state-of-the-art algorithms for both non-sparse $A$ and sparse $A$, and has superior performance is many scenarios of interest.  
	\end{abstract}
	\begin{keywords}
		Topic models, minimax estimation, sparse estimation, adaptive estimation, high dimensional  estimation, non-negative matrix factorization, separability, anchor words.
	\end{keywords}
	
	\section{Introduction}
	Topic modeling has been a popular and powerful statistical model during the last two decades in machine learning and natural language processing for discovering thematic structures from a corpus of documents. Topic models have wide applications beyond the context in which was originally introduced, to genetics, neuroscience and social science \citep{blei-intro}, to name just a few areas in which they have been successfully employed.

	In the computer science and machine learning literature, topic models were first introduced as {\em latent semantic indexing models} by \cite{Deerwester,Papa98, Hofmann99, Papa}.
	For uniformity and clarity, we explain our methodology in the language typically associated with this set-up.
	A corpus of $n$ documents is assumed to follow generative models based on the bag-of-word representation. Specifically, each document $X_i \in \R^p$ is a vector containing  empirical (observed) frequencies of $p$ words from a pre-specified dictionary, generated as 
	\begin{equation}\label{model_multinomial}
	X_i \sim {1\over N_i}\text{Multinomial}_p\left(N_i,  \M_i\right),\quad \text{ for each  }i\in[n] := \{1, 2, \ldots, n\}.
	\end{equation} 
	Here $N_i$ denotes the length (or the number of sampled words) in the $i$th document. 
	The expected frequency vector $\M_i \in \R^p$ is called the word-document vector, and  is  a convex combination of $K$ word-topic vectors with weights corresponding to the allocation of $K$ topics. Mathematically, one postulates that
	\begin{equation}\label{eq_M_i}
	\M_i = \sum_{k = 1}^K A_{\cdot k} W_{ki}
	\end{equation}
	where $A_{\cdot k}=(A_{1k}, \ldots, A_{pk})$ is the word-topic vector for the $k$th topic and $W_{\cdot i} = (W_{1i}, \ldots, W_{Ki})$ is the allocation of $K$ topics in this $i$th document. From a probabilistic point of view,  equation (\ref{eq_M_i}) has the conditional probability interpretation
	\begin{equation}\label{bayes}
		 \underbrace{\PP(\text{word }j\ | \text{ document }i)}_{\M_{ji}} = \sum_{k=1}^K 	\underbrace{\PP(\text{word }j\ | \text{ topic }k)}_{A_{jk}} \cdot  \underbrace{\PP(\text{topic }k\ | \text{ document }i)}_{W_{ki}}
\end{equation}
	for each $j\in [p]$, justified by  Bayes' theorem.
	 As a result, the (expected) word-document frequency matrix $\M = (\M_1, \ldots, \M_n)\in \R^{p\times n}$ has the following decomposition
	 \begin{equation}\label{model}
	 	\M = AW = A(W_1, \ldots, W_n).
	 \end{equation}
	 The entries of the  columns of  $\Pi, A$ and $W$ are probabilities, so they are non-negative and sum to one:
	\begin{equation}\label{col_sum_one}
	\sum_{j=1}^p\M_{ji} = 1, \quad \sum_{j=1}^pA_{jk}=1, \quad \sum_{k=1}^KW_{ki} = 1, \quad \text{for any $k\in [K]$ and  $i\in[n]$.}
	\end{equation}
	 Since the number of topics, $K$, is typically much smaller than $p$ and $n$, the matrix $\M$ exhibits a low-rank structure. In the topic modeling literature, the main interest is to recover the matrix $A$ when only the $p \times n$ frequency matrix $X = (X_1, \ldots, X_n)$ and the  document lengths $N_1, \ldots, N_n$ are observed. 
	
	One direction of a large body of work is of Bayesian nature, and the most commonly used prior distribution on $W$ is  the   Dirichlet distribution \citep{BleiLDA}.  Posterior inference on $A$ is then typically conducted via  variational inference  \citep{BleiLDA}, or sampling techniques involving  MCMC-type solvers \citep{MCMC}. We refer to \cite{blei-intro} for a in-depth review. 
	
	The computational intensive nature of Bayesian approaches, in high dimensions,  motivated a separate line of recent work that develops efficient algorithms, with theoretical guarantees,   from a frequentist perspective. \cite{Anandkumar} proposes an estimation method, with provable guarantees, that employs the third moments of $\M$ via a tensor-decomposition. However, the success of this approach requires the topics not be  correlated, and in many situations   there is strong evidence suggesting the contrary 
	\citep{blei2007, LM-dag}.
	
	This motivated another  line of work, similar in spirit with the work presented in this paper, which relies on the following \emph{separability} condition on $A$,  and allows for correlated topics.
	\begin{ass}[separability]\label{ass_sep}
		For each topic $k\in [K]$, there exists at least one word $j$ such that $A_{jk} >0$ and $A_{j\ell} = 0$ for any $\ell \ne k$.
	\end{ass}
	The  {separability} condition was first introduced by \cite{donohoNMF} to ensure   uniqueness in the  Non-negative Matrix Factorization (NMF) framework.   \cite{arora2012learning} introduce the  {separability} condition to the topic model literature with the interpretation that,  for each topic, there exist some words which  \emph{only} occur in this topic. These special words are called  \emph{anchor words}  \citep{arora2012learning} and guarantee    recovery of $A$,  coupled with the following condition on $W$ \citep{arora2012learning}.
	\begin{ass}\label{ass_pd_W}
		Assume the matrix $ n^{-1}WW^\top $ is strictly positive definite.
	\end{ass}
	\noindent Finding anchor words  is the first step towards the  recovery of the desired target  $A$. Many algorithms are developed for this purpose, see, for instance, \cite{arora2012learning,rechetNMF,arora2013practical,ding, Tracy}. All these works require   the number of topics $K$ be  {\em known}, yet in practice $K$ is rarely known.
	This motivated us \cite{Top} to develop
	a method that estimates $K$ consistently from the data under   the  {\em incoherence} Condition \ref{ass_w} on the topic-document matrix $W$ given in Section \ref{sec_disc_L}.
	We defer to this for further discussion of other existing methods  for finding anchor words.
	
	Despite the wide-spread interest and usage of topic models, most of the existing works are mainly devoted to the   computational aspects of estimation, and
	relatively few works  provide statistical guarantees  for  estimators of $A$. An exception is \cite{arora2012learning, arora2013practical} that provide upper bounds for the $\ell_1$-loss $\|\wh A - A\|_1 = \sum_{j=1}^p \sum_{k = 1}^K|\wh A_{jk}- A_{jk}|$ of their estimator. Their analysis allows $K$, $p$ and $N_i$ to grow with $n$. Unfortunately, the convergence rate of their estimator is not optimal  \citep{Tracy, Top}. The recent work of \cite{Tracy} is the first to  establish the minimax lower bound for the estimator of $A$ in topic models for known, fixed $K$. Their estimator provably achieves the minimax optimal rate under appropriate conditions. When $K$ is allowed to grow with $n$, the minimax optimal rate of $\|\wh A -A\|_1$ is established in \cite{Top}  and an optimal estimation procedure is   proposed. 
	
	Despite   these recent advances, all the aforementioned results are established for a \emph{fully dense} matrix $A$. In the modern big data era,   the dictionary size $p$, the number of documents $n$ and the number of topics $K$ are large, 
	as evidenced by  real data in Section \ref{sec_sim}.	
	Sparsity is likely to happen for large dictionaries ($p$)  and  when the number of topics $K$ is large, one should expect that there are many words \emph{not} occurring in all topics, that is, $A_{jk} = \PP(\text{word }j\ | \text{ topic }k)=0$ for some $k$. 
	
	To the best of our knowledge,  the minimax lower bound of  $\|\wh A - A\|_{1}$  in the topic model is unknown when the word-topic matrix $A$ is element-wise sparse and no estimation procedure exists tailored to this scenario of sparse $A$ and unknown $K$.

	\subsection{Our contributions}
	We summarize our contributions in this paper. 
	
	\paragraph{New minimax lower bound for  $\|\wh A - A\|_1$, when $A$ is sparse.} To   understand the difference of estimating a dense $A$ and a entry-wise sparse $A$ in topic models, we first establish the minimax lower bound of estimators of $A$ in Theorem \ref{thm_lb} of Section \ref{sec_lower_bound}.  It shows that 
	\[
		\inf_{\wh A} \sup_{A}\PP_{A}\left\{\|\wh A - A\|_{1} \ge c_0 \| A\|_1 \sqrt{ \| A\|_0  \over nN} \right\} \ge c_1.
	\]
	for some constants $c_0>0$ and $c_1\in (0, 1]$, by assuming $N= N_1 = N_2 = \cdots = N_n$ for ease of presentation. The infimum is taken over all estimators $\wh A$ while the supremum is over a prescribed parameter space $\A$ defined in (\ref{classA}) below. We have $\| A\|_1=K$ by (\ref{col_sum_one}) for all $A$.  The term  $\|A\|_0$   characterizes the overall sparsity of $A$, and  the minimax rate of $A$ becomes faster as $A$ gets more sparse.
	When the rows $A_{j\cdot}$ of non-anchor words $j$ are dense in the sense  $\| A_{j\cdot} \|_0 = K$, our result reduces to that in \cite{Top}. Our minimax lower bound is valid for all $p$, $K$, $N$ and $n$ and, to the best of our knowledge, the lower bound with dependency on the sparsity of $A$ is new in the topic model literature.
	
	\paragraph{A new estimation procedure for sparse $A$.} 
	To the best of our knowledge, the only minimax-optimal estimation procedure, for {\it dense} $A$ and $K$ large and unknown, is offered in \cite{Top}. While the procedure is computationally very fast, it is impractical to  adjust it in simple ways in order  to obtain a sparse estimator of $A$, that would hopefully be minimax-optimal.

	
	For instance, simply thresholding an estimator $\wh A$ to encourage sparsity will require threshold levels that vary from row to row, resulting in too many tuning parameters.
		We propose a new estimation procedure in Section \ref{sec_est_A} that adapts to this unknown sparsity. To motivate our procedure, we start with the recovery of  $A$ in the noise-free case in Section \ref{sec_noise_free_A},  under Assumptions \ref{ass_sep} and \ref{ass_pd_W}. 
	  Since   several existing algorithms, including \cite{Top},    provably select the anchor words, we mainly focus  on the estimation of the portion of $A$ corresponding to non-anchor words.

	In the presence of noise, we propose our estimator in Section \ref{sec_noise_est_A} and summarize the procedure in Algorithm \ref{alg_1}. The new algorithm requires the solution  of a quadratic program  for each non-anchor row. 
	Except for a ridge-type tuning parameter (which can often be set to zero), the procedure is devoid of any (further) tuning parameters.
	We give detailed comparisons with other methods in the topic model literature in Section \ref{sec_comp}.
	
	\paragraph{Adaption to sparsity.} We provide finite sample upper bounds on the $\ell_1$ loss of our new estimator in Section \ref{sec_upper_bound},   valid for all $p$, $K$, $n$ and $N$. As shown in Theorem \ref{thm_rate_Ahat}, our estimator   adapts to the unknown sparsity of $A$. To the best of our knowledge, our estimator is the first   computationally fast estimator shown  to   adapt to  the unknown sparsity of $A$.  We further show   in Corollary \ref{cor_opt_rate} that it is minimax optimal under reasonable scenarios.
	
	\paragraph{Simulation study.}
	In Section \ref{sec_sim}, we provide  experimental results based on both synthetic data and semi-synthetic data. 
	We compare our new estimator   with  existing state-of-the-art algorithms. The effect of sparsity on the estimation of $A$ is verified  in Section \ref{sec_sim_syn} for synthetic data, while 
	we analyze two semi-synthetic datasets based on a corpus of NIPs articles and a corpus of New York Times (NYT) articles in
	Section \ref{sec_sim_semi_syn}. 
	
	\subsection{Notation}
	We introduce notation that we   use throughout the  paper. 
		The integer set $\{1,\ldots,n\}$ is denoted by $[n]$. We use $\1_d$ to denote the $d$-dimensional vector with entries equal to $1$ and use $\{e_1, \ldots, e_K\}$  to denote the canonical basis vectors in $\R^K$. For a generic set $S$, we denote $|S|$ as its cardinality. For a generic vector $v\in \R^d$, we let $\|v\|_q$ denote the vector $\ell_q$ norm, for $q= 0,1,2,\ldots,\infty$, and let $\textrm{supp}(v)$ denote its support. We write $\|v\|_2 = \|v\|$ for brevity. We denote by $\diag(v)$ a $d\times d$ diagonal matrix with diagonal elements equal to $v$. For a generic matrix $Q\in \R^{d\times m}$, we write $\|Q\|_{{1}} = \sum_{1\le i\le d,1\le j\le m}|Q_{ij}|$ and $\|Q\|_{\r}=\max_{1\le i\le d}\sum_{1\le j\le m}|Q_{ij}|$.  For the submatrix of $Q$, we let $Q_{i\cdot}$ and $Q_{\cdot j}$ be the $i$th row and $j$th column of $Q$. For a set $S$, we let $Q_S$ and $Q_{\cdot S}$ denote its $|S|\times m$ and $m\times |S|$ submatrices. 
	For a symmetric matrix $Q$, we denote its smallest eigenvalue by $\lambda_{\min}(Q)$. We use $a_n \lesssim b_n$ to denote there exists an absolute constant $c>0$ such that $a_n \le cb_n$, and write $a_n \asymp b_n$ if there exists two absolute constants $c, c'>0$ such that 
	$c b_n \le a_n \le c'b_n$. In the probabilities of our results, we might write $c''a_n$ as $O(a_n)$ for some absolute constant $c''>0$. Finally, we write $a_n = o_p(b_n)$ if $a_n / b_n \to 0$ with probability tending to $1$.
	
	For a given word-topic matrix $A$, we let  $I := I(A)$ be the set of anchor words, and $\mathcal{I}$ be its partition relative to the $K$ topics. That is, \begin{eqnarray}\label{def_pure}
	I_k := \{j\in[p]: A_{jk}  >  0,  \ A_{j\ell} = 0\ \text{ for all } \ell \ne k\},\quad 
	I := \bigcup_{k=1}^{K} I_k,\quad  \I :=  \left\{I_1, \ldots, I_K \right\}.
	\end{eqnarray}
	We further write $J := [p]\setminus I$ to denote the set of non-anchor words. For the convenience of our analysis, we assume all documents have the same number of sampled words, that is, $N:= N_1 = \cdots = N_n$, while our results can be extended to the general case.

	\section{Minimax lower bounds of $\|\wh A - A\|_{1}$}  \label{sec_lower_bound}
	In this short section, we establish the minimax lower bound of $\|\wh A -A\|_{1}$ based on model (\ref{model}) for any estimator $\wh A$ of $A$ over the parameter space 
	\begin{align}\label{classA}
	\A  := \Bigl\{ A\in\R_+^{p\times K}:&\ A^\top \bm{1}_p = \bm{1}_K,~   \text{$A$ satisfies Assumption \ref{ass_sep} with $\| A\|_0 \le nN$}\Bigr\}.
	\end{align}
	To prove the lower bound, it suffices to choose one particular $W$. We let 
	\begin{equation}\label{def_W0}
	W^0 = \{\underbrace{e_1, \ldots, e_1}_{n_1}, \underbrace{e_2, \ldots, e_2}_{n_2}, \ldots, \underbrace{e_K, \ldots, e_K}_{n_K}\}
	\end{equation}
	with $\sum_{k =1}^{K}n_k = n$ and $|n_k-n_{k'}| \le 1$ for $k, k' \in [K]$. Note that $W^0$ satisfies Assumption \ref{ass_pd_W}.
	Denote by $\PP_{A}$ the joint distribution of $(X_1,\ldots,X_n)$ under model (\ref{model}), for the chosen $W^0$. 
	
	\begin{thm}\label{thm_lb}
		Under topic model (\ref{model}), assume (\ref{model_multinomial}).
		Then, there exist constants $c_0>0$ and $c_1\in (0, 1]$ such that
		\begin{align}\label{lower_bound}
			\inf_{\wh A} \sup_{A\in \A }\PP_{A}\left\{ \|\wh A - A\|_{1}  \ge c_0   \|A\|_1\sqrt{\|A\|_0 \over nN}  \right\} \ge c_1.
		\end{align}
		The infimum is taken over all estimators $\wh A$ of $A$. 
	\end{thm}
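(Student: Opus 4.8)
\emph{Proof sketch.} The plan is a hypercube (Assouad) argument: I exhibit a finite sub-family of $\A$, all built on the single design $W^0$ of (\ref{def_W0}), on which no estimator can beat the target rate. Fix $K$ and one integer $s\ge 2$ with $Ks\le nN$ and $s\le p-K+1$ (such an $s$ exists in the non-degenerate regime $nN\gtrsim K$, $p\ge K+1$; otherwise one argues directly on a point-mass family); every matrix below satisfies $\|A\|_1=K$ by the column-sum constraint and $\|A\|_0=Ks$, so the event in (\ref{lower_bound}) reduces to $\{\|\wh A-A\|_1\ge c_0 K\sqrt{Ks/(nN)}\}$. Reserve $K$ distinct ``anchor'' words $a_1,\dots,a_K$ and a disjoint set $B$ of $s-1$ ``active'' words, setting all remaining rows of $A$ to zero. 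With $\gamma:=1/(2s)$ and a perturbation $\delta\in(0,\gamma/2]$ to be fixed, index the family by $\omega=(\omega_{kj})_{k\in[K],\,j\in B}\in\Omega:=\{-1,1\}^{K(s-1)}$ via $A^\omega_{jk}=\gamma+\omega_{kj}\delta$ for $j\in B$, $A^\omega_{a_k k}=1-\sum_{j\in B}(\gamma+\omega_{kj}\delta)$, and $A^\omega_{a_\ell k}=0$ for $\ell\ne k$. Since $(s-1)(\gamma+\delta)<1$ and $\delta\le\gamma/2$, all entries are nonnegative, each column sums to $1$, each $a_k$ is an anchor word for topic $k$ so Assumption~\ref{ass_sep} holds, and $\|A^\omega\|_0=K+(s-1)K=Ks\le nN$; hence $A^\omega\in\A$ for every $\omega\in\Omega$. (If $s-1$ is odd, drop one coordinate of $\omega$; this is immaterial.)

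Next I would control the information. Under $W^0$, the documents in topic-group $k$ are i.i.d.\ $N^{-1}\mathrm{Multinomial}_p(N,A^\omega_{\cdot k})$ and the $K$ groups are mutually independent, so whenever $\omega,\omega'$ differ in exactly one coordinate $(k,j)$,
\[
\KL\!\bigl(\PP_{A^\omega}\,\|\,\PP_{A^{\omega'}}\bigr)=n_k\,N\,\KL\!\bigl(A^\omega_{\cdot k}\,\|\,A^{\omega'}_{\cdot k}\bigr)\;\le\; n_k\,N\sum_t\frac{(A^\omega_{tk}-A^{\omega'}_{tk})^2}{A^{\omega'}_{tk}}\;\le\; n_k\,N\cdot\frac{16\,\delta^2}{\gamma},
\]
since the two columns differ only at word $j$ and at the anchor $a_k$, each by $2\delta$, and every denominator is of order $\gamma$. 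Using $n_k\le\ceil{n/K}\le 2n/K$ and $\gamma=1/(2s)$, the choice $\delta^2\asymp K/(s\,nN)$, i.e.\ $\delta\asymp\sqrt{K/(s\,nN)}$, makes the right-hand side at most $1/4$; the side condition $\delta\le\gamma/2$ then follows from $Ks\le nN$ after the absolute constants are fixed. By Pinsker's inequality, $\|\PP_{A^\omega}-\PP_{A^{\omega'}}\|_{\mathrm{TV}}\le 1/2$ for every neighbouring pair.

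Finally I would apply Assouad's lemma on $\Omega$. Setting $\wh\omega_{kj}=\sgn(\wh A_{jk}-\gamma)$, a disagreement $\wh\omega_{kj}\ne\omega_{kj}$ forces $|\wh A_{jk}-A^\omega_{jk}|\ge\delta$, hence $\|\wh A-A^\omega\|_1\ge\delta\,\rho(\wh\omega,\omega)$ where $\rho$ is Hamming distance on $\Omega$. Assouad's inequality with the TV bound above gives $|\Omega|^{-1}\sum_{\omega\in\Omega}\EE_{A^\omega}[\rho(\wh\omega,\omega)]\ge c'\,K(s-1)$, uniformly over all estimators, for an absolute $c'>0$. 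Since $\rho\le K(s-1)$ always, bounding $\rho$ by $\tfrac{c'}{2}K(s-1)$ on $\{\rho\le\tfrac{c'}{2}K(s-1)\}$ and by $K(s-1)$ otherwise yields $|\Omega|^{-1}\sum_{\omega\in\Omega}\PP_{A^\omega}\{\rho(\wh\omega,\omega)>\tfrac{c'}{2}K(s-1)\}\ge c'/2$, and therefore $\sup_{\omega\in\Omega}\PP_{A^\omega}\{\rho(\wh\omega,\omega)>\tfrac{c'}{2}K(s-1)\}\ge c'/2$. On that event $\|\wh A-A^\omega\|_1\ge\delta\cdot\tfrac{c'}{2}K(s-1)\asymp Ks\,\delta\asymp K\sqrt{Ks/(nN)}=\|A^\omega\|_1\sqrt{\|A^\omega\|_0/(nN)}$, which is (\ref{lower_bound}) with $c_1=c'/2$ and $c_0$ an absolute constant.

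The one delicate point is the construction, not the ensuing bookkeeping: the constraint $A^\top\1_p=\1_K$ caps the common level of the active entries at $\gamma\lesssim 1/s$, while rate-matching against $\|A\|_1\sqrt{\|A\|_0/(nN)}$ forces $\gamma\gtrsim 1/s$. Taking $\gamma\asymp 1/s$ threads this needle and simultaneously keeps each per-coordinate KL of order $K/(s\,nN)$, so that $\delta$ may be taken of the optimal order $\sqrt{K/(s\,nN)}$; the remaining ingredients --- KL tensorization, Pinsker, Assouad, and the passage from an expected-Hamming bound to an in-probability statement --- are routine.
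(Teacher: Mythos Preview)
Your argument is correct and reaches the same rate by a closely related but distinct route: you run Assouad's lemma on the full hypercube $\{-1,1\}^{K(s-1)}$, while the paper extracts a Varshamov--Gilbert packing from an analogous hypercube and then applies Tsybakov's multiple-hypothesis reduction (Theorem~2.5 in \cite{np_sasha}). The two reductions are equivalent in strength here. On the construction side, the paper pairs the perturbations within each column (placing $[w_k,-w_k]$ on the support so that column sums are preserved automatically and the anchor entries stay fixed), whereas you perturb every non-anchor entry independently and absorb the imbalance in the anchor row; both keep the per-coordinate KL of order $nN\delta^2/\gamma\asymp 1$ with $\gamma\asymp 1/s$ and $\delta\asymp\sqrt{K/(snN)}$. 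One cosmetic point: in your $\chi^2$ bound the anchor denominator is of order~$1$, not~$\gamma$, but that only helps, so the stated bound $16\delta^2/\gamma$ stands. Your Assouad route avoids the VG combinatorics and the even/odd case split the paper carries out; the paper's VG--Tsybakov route delivers the in-probability statement directly, without the expected-Hamming-to-probability conversion you add at the end.
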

	

	\begin{remark}{\rm 
	The   estimate constructed in the next section achieves this lower bound in many scenarios.
		The lower bound rate   of $\|\wh A- A\|_1$ in (\ref{lower_bound}) becomes faster as $\| A\|_0$ decreases, that is, if $A$ becomes more sparse.
		Since each of the $K$ columns of $A$ sum to one, we always have $\| A\|_1=K$.
		If the submatrix $A_J$, corresponding to the non-anchor words, is dense in the sense that  $\|A_J\|_0 = K|J|$,  Theorem \ref{thm_lb}  reduces to the result in   \cite[Theorem 6]{Top} for  $K=K(n)$,  and the result in  \cite[Theorem 2.2]{Tracy} for fixed $K$. 
}	\end{remark}

	\section{Estimation of $A$}\label{sec_est_A}
	In this section, we present our procedure for estimating $A$ when a subset of anchor words $L=\bigcup_{k=1}^KL_k$ and its partition $\L = \{L_1, \ldots, L_K\}$ are given. Moreover, we assume that, for each $k\in [K]$, 
	$
	L_k \subseteq I_{\pi(k)}$
	for some group permutation $\pi: [K] \to [K]$. For simplicity of presentation, we assume $\pi$ is identity such that 
	\begin{equation}\label{def_L}
	L_k \subseteq I_{k}, \qquad \text{for each } k\in [K].
	\end{equation}
	We discuss methods for selecting $L$ and $\L$ in  Section \ref{sec_disc_L}. We start with the noise-free case, that is,  we  observe the expected word-document frequency matrix $\M$,  in Section \ref{sec_noise_free_A}. Motivated by the developed algorithm in the noise-free case that recovers $A$, we propose the estimation procedure of $A$ in Section \ref{sec_noise_est_A} when we   have access to $X$ only.
	
	\subsection{Recovery of $A$ in the noise-free case} \label{sec_noise_free_A}
	Suppose that $\M$ is given and write 
	 $D_\M := n^{-1}\diag(\M \1_n)$ and $D_W := n^{-1}\diag(W\1_n)$.  We recover $A$ via its row-wisely normalized version \begin{equation}\label{def_B}
 	B=D_\M^{-1}AD_W
	 \end{equation}
	 as $B$ enjoys the following three properties:
	\begin{equation}\label{prop_B}
		\textrm{supp}(B) = \textrm{supp}(A),\quad B_{jk}\in [0,1],\quad \|B_{j\cdot}\|_1 = 1, \qquad\text{for all }j\in[p], k\in[K]. 
	\end{equation}
The row-wise sum-to-one property is critical in the later estimation step to adapt to the unknown sparsity of $B$ (or equivalently, the sparsity of $A$).
		From $\L= \{L_1, \ldots, L_K\}$ and (\ref{prop_B}), we can directly recover $B_L$ by setting 
	\[
		B_{i\cdot} = e_k, \qquad \text{for any }i\in L_k, k\in [K].
	\]
	To recover $B_{L^c}$ with $L^c:=[p]\setminus L$, let 
	\[
	R := D_\M^{-1}\Theta D_\M^{-1}  = 
	B\left(D_W^{-1}{1\over n}W W^\top D_W^{-1}\right)B^\top := BMB^\top 
	\]
  be a normalized version of $$\Theta := n^{-1}\M\M^\top .$$ 
  Since $R$ has the  decomposition 
	\[
		R_{LL} = B_LMB_L^\top ,\qquad R_{L^cL} = B_{L^c}MB_L^\top .
	\]
	and Assumption \ref{ass_pd_W} implies $M$ is invertible,  
	we arrive at the expressions 
	\begin{eqnarray}
		M &=&
			(B_L^\top B_L)^{-1}B_L^\top R_{LL}B_L(B_L^\top B_L)^{-1}, \label{een}\\
			B_{L^c} &=& R_{L^cL}B_L(B_L^\top B_L)^{-1}M^{-1}. \label{twee}
	\end{eqnarray}
	Display (\ref{twee}) implies that $$M B_{L^c}^\top  = ( B_L^\top  B_L)^{-1} B_L^\top  R_{LL^c} :=H,$$ whence 
	$M\beta =h$ for each column $\beta$ of $B_{L^c}^\top $ (which is a  {\em row} of $B_{L^c}$) and corresponding column $h$ of $H$. Given $M$ and $H$, the solution $\beta$ of the equation $M\beta=h$ is  the minimizer of $\beta^\top  M \beta - 2\beta^\top  h$ over $\beta\ge 0$ and $\|\beta\|_1=1$.
	This formulation will be used in the next subsection.
	
	After recovering $B^\top  = (B_L^\top , B_{L^c}^\top )$, display (\ref{def_B}) implies that $A$ can be recovered by normalizing columns of $D_\M B$ to unit sums.

	\subsection{Estimation of $A$ in the   noisy case}\label{sec_noise_est_A}
	The estimation procedure of $A$ follows the same idea of the noise-free case. 
		We first estimate $B$ defined in (\ref{def_B}) by using the estimate  
	\begin{equation}\label{def_R_hat}
	\wh R = D_X^{-1} 	\wh\Theta  D_X^{-1}
	\end{equation}
	of $R$, based on $D_X = n^{-1}\diag(X\1_n)$ and the unbiased estimator 
	\begin{equation}\label{est_Theta}
	\wh\Theta = {1\over n}\sum_{i =1}^n\left[
	{N_i \over N_i - 1}X_iX_i^\top  - {1\over N_i-1} \textrm{diag}(X_i)\right]
	\end{equation}
of the matrix $\Theta$.
We estimate $B_L$ by 
	\begin{equation}\label{est_BI}
	  \wh B_{i\cdot } = e_k, \quad \text{for any }i\in L_k, k\in [K].
	\end{equation}
	Based on 	\begin{equation}\label{est_M_h}
		\wh M =  (\wh B_L^\top \wh B_L)^{-1} \wh B_L^\top \wh R_{LL} \wh B_L(\wh B_L^\top \wh B_L)^{-1},\qquad 
		\wh H =  (\wh B_L^\top \wh B_L)^{-1} \wh B_L^\top  \wh R_{LL^c}, 
	\end{equation}
	we estimate row-by-row the remainder of the matrix $B$. We compute,   for each $j\in L^c$,  
	\begin{alignat}{2}\label{est_BJ_1}
	\wh B_{j\cdot } &= 0, &&\quad \text{if }(D_X)_{jj}\le  7\log(n\vee p) / (nN),\\\label{est_BJ_2}
	\wh B_{j\cdot } &=\arg\min_{\beta \ge 0,\ \|\beta\|_1 = 1}\beta^\top  (\wh M  + \lambda \bI_K)\beta - 2\beta^\top  \wh h^{(j)}, &&\quad \text{otherwise,}
	\end{alignat}
	where $\wh h^{(j)}$ is the corresponding column of $\wh H$.
	We set $\lambda = 0$ whenever $\wh M$ is invertible and otherwise choose $\lambda$  large enough
	such that $\wh M + \lambda \bI_K$ is invertible. We detail the exact rate of $\lambda$ when $\wh M$ is not invertible in Section \ref{sec_upper_bound}.
	Finally, we  estimate $A$ via normalizing $D_X \wh B$ to unit column sums.
	
	\begin{remark}{\rm 
		In our procedure, the hard-thresholding step in (\ref{est_BJ_1}) is critical to obtain the optimal rate of the final estimator that does not rely    on a lower bound condition on the word-frequencies. In contrast,  the analysis of \cite{arora2013practical} requires a lower bound for all word-frequencies. The thresholding level in (\ref{est_BJ_1}) is carefully chosen  from the element-wise control of the difference $D_X- D_\M$.
}	\end{remark}
	
	For the reader's convenience, the estimation procedure is summarized in Algorithm \ref{alg_1}.  
	\begin{algorithm}[H]
		\caption{Sparse Topic Model solver (STM)
		}\label{alg_1}
		\begin{algorithmic}[1]
			\Require frequency data matrix $X\in\R^{p\times n}$ with document lengths $N_1, \ldots, N_n$;  the partition of anchor words $\L$
			\Procedure{}{}
			\State compute $D_X = n^{-1}\diag(X\1_n)$, $\wh \Theta$ from (\ref{est_Theta}) and $\wh R$ from (\ref{def_R_hat})
			\State compute $\wh B_L$ from (\ref{est_BI})
			\State compute $\wh M$ and $\wh H$ from (\ref{est_M_h})
			\State solve $\wh B_{L^c}$ from (\ref{est_BJ_1}) -- (\ref{est_BJ_2}) by using $\lambda$ in (\ref{rate_lambda_data})
			\State compute $\wh A$ by normalizing $D_X\wh B$ to unit column sums
			\State \Return $\wh A$
			\EndProcedure
		\end{algorithmic}
	\end{algorithm}
	
	\subsection{Comparison with existing methods}\label{sec_comp}
	In this section, we provide   comparisons between our estimation procedure and two existing methods, which are seemingly close to our procedure.
	
	 \paragraph{Comparison with    \cite{arora2013practical}.} 
	 This algorithm also  estimates the same target $B$ defined in (\ref{def_B}) first. For a given set $L$ of anchor words, there are two main differences for estimating $B$.  
	 \begin{enumerate}
	     \item The algorithm in \cite{arora2013practical}  uses \emph{only one} anchor word per topic to estimate $B$ whereas our estimation procedure utilizes all anchor words. The benefit of using multiple anchor words per topic is substantial and verified in our simulation in Section \ref{sec_sim}.
	     \item  The algorithm in \cite{arora2013practical} is based on different quadratic programs with more parameters ($pK$ versus $K^2$).
	     This makes it more computationally intensive and less accurate than the algorithm proposed here.
	     This is verified in our simulations in Section \ref{sec_sim_semi_syn}.
  { Specifically, 
	 write
	 $\wt \Theta := D_{\Theta}^{-1}\Theta= D_{\Theta}^{-1}  A (n^{-1}WW^\top  ) A^\top  $, $Q:= (n^{-1}WW^\top )A^\top $ and $\wt Q := D_Q^{-1}Q$ with  $D_{\Theta} = \diag(\Theta \1_p)$ and $D_Q = \diag(Q\1_p)$. \cite{arora2013practical} utilizes the following observation
	 \[
	 \wt\Theta = D_{\Theta}^{-1}A Q = D_{\Theta}^{-1}AD_Q \wt Q = B \wt Q
	 \]
	 by noting that 
	 $
	 D_{\Theta} = D_\M $ and $D_Q = D_W 
	 $
	 from (\ref{col_sum_one}). 
	 Based on the observation that 
	 $\wt \Theta_{j\cdot}\in \R^p$ is a convex combination of $\wt \Theta_{\wt L} = \wt Q\in \R^{K\times p}$ for any $j\in [p]\setminus \wt L$, 
	 \cite{arora2013practical} proposes to estimate $B_{j\cdot}$ by solving 
	 \begin{equation}\label{awr_est_B}
	 \wh B_{j\cdot} = \arg\min_{\beta\ge 0, \|\beta\|_1 = 1} \left\|\wh {\wt \Theta}_{j\cdot} - \beta^\top \wh {\wt Q}\right\|^2
	 \end{equation}
	 where $\wh{\wt \Theta}_{j\cdot}$ and $\wh{\wt Q}$ are the corresponding estimates of $\wt \Theta_{j\cdot}$ and $\wt Q$.
	 The matrix $\wt Q$ contains $p\times K$  entries, while    our estimation procedure in (\ref{est_BJ_2}) only requires to estimate $M\in \R^{K\times K}$ which has fewer parameters. The analysis of \cite{arora2013practical} only holds for invertible   estimates  $\wt Q\wt Q^\top $ and the rate of the estimator from (\ref{awr_est_B}) depends on $\lambda_{\min}(\wt Q\wt Q^\top )$. Our result holds as long as $\lambda_{\min}(M)>0$ due to the ridge-type estimator in (\ref{est_BJ_2}) and the rate of our estimator in (\ref{est_BJ_2}) depends on $\lambda_{\min}(M)$. Lemma \ref{lem_QQ} in the Appendix shows that  
	 \[
	  \lambda_{\min}(M) \lambda_{\min}(n^{-1}WW^\top ) \min_{k\in [K], i\in I_k}A_{ik}^2
	 \le \lambda_{\min}(\wt Q\wt Q^\top ) \le \lambda_{\min}(M).
	 \]
	 Since $0<\lambda_{\min}(n^{-1}WW^\top ) \le 1/K$ as shown in Lemma \ref{lem_lbd_min_C} and $0<\min_{i\in I_k, k\in [K]}A_{ik}^2 < 1$, it is easy to see that $\lambda_{\min}(\wt Q\wt Q^\top )$ could be much smaller comparing to $\lambda_{\min}(M)$. This suggests that our procedure in (\ref{est_BJ_2}) should be more accurate than (\ref{awr_est_B}), which is confirmed in our simulations in Section \ref{sec_sim_semi_syn}.} 
	
	 \end{enumerate} 
	 
	 \paragraph{Comparison  with \cite{Top}.}
	 Although both methods are based on the normalized second moment $R$, they differ significantly in estimating $A$. 
	 \begin{enumerate}
	     \item 
	 The algorithm in \cite{Top}   uses $R$ {\it only} to estimate the anchor words and relies on $\Theta$ for the estimation of $B$. Specifically, by observing
	\[
	\Theta_{\cdot \wt L} := ACA_{\wt L} = AA_{\wt L}^{-1}A_{\wt L}CA_{\wt L} = AA_{\wt L}^{-1} \Theta_{\wt L\wt L} := \wt A \Theta_{\wt L\wt L}
	\]
	with $\wt L$ being a set that contains one anchor word per topic and $A_{\wt L}\in \R^{K\times K}$ being a diagonal matrix, \cite{Top} proposes to first estimate 
	$
	\wt A
	$ 
	by $\wh \Theta_{\cdot \wt L}\wh \Omega $. Here 
	$\wh \Omega$ is an estimator of $\Theta_{\wt L\wt L}^{-1}$ obtained  via solving a linear program.
	Instead of $\wt A$, we propose here to first estimate  $B$ defined in (\ref{def_B}).
	This is a different scaled version of $A$ with more desirable structures  (\ref{prop_B}). 
	\item
	Furthermore,
	our estimation of $B$ is done row-by-row via quadratic programming instead of simple matrix multiplication. While this is more computationally expensive than estimating $\Theta_{\wt L\wt L}^{-1}$, it gives more accurate row-wise control of $\wh B - B$.
	This control is the key to obtain a faster rate of $\|\wh A -A\|_1$ that adapts to the unknown sparsity. 
	\item
	Finally, we emphasize that it is impractical to modify the estimator of \cite{Top} to adapt to the sparsity of $A$. For instance,   further thresholding the estimator of $\wt A$ to encourage sparsity, will require the thresholding levels to vary row-by-row. This would  involve too many tuning parameters. \end{enumerate}

	\section{Upper bounds of $\|\wh A -A\|_1$}\label{sec_upper_bound}
	To simplify notation and properly adjust the scales, for each $j\in[p]$ and  $k\in[K]$, we define
	\begin{equation}\label{def_alpha_gamma}
	\u_j := {p\over n}\sum_{i =1}^n \M_{ji},\qquad {\g_k} := {K\over n}\sum_{i=1}^nW_{ki},\qquad \alpha_j := p\max_{1\le k\le K} A_{jk}, 
	\end{equation}
	such that $\sum_{j=1}^p \u_j = p$, $\sum_{k=1}^K\g_k = K$ and $p\le \sum_{j=1}^p \alpha_j \le pK$ from (\ref{col_sum_one}). For given set $L$ satisfying (\ref{def_L}), we further set 
	\begin{equation}\label{def_oaua}
	 \uu_L =  \min_{i\in L}\u_i, \quad \og = \max_{1\le k\le K} \g_k,\quad
	 \ug = \min_{1\le k\le K} \g_k,\quad \ua_L =\min_{i\in L}\alpha_i,\quad \rho_j = \alpha_j / \ua_L.
	\end{equation}
	For future reference, we note that 
	\[ 
		\og \geq 1\ge \ug.
	\] 
	As our procedure depends whether the inverse of $\wh M$ defined in (\ref{est_M_h}) exists, we first give a critical bound on the control for the operator norm of  $\wh M - M$ and provide insight on the choice of  $\lambda$ in (\ref{est_BJ_2}).

	\begin{lemma}\label{lem_M_hat}
		Consider the topic model (\ref{model}) under  assumption \ref{ass_sep}  and
		\begin{equation}\label{cond_Pi_I}
			\min_{i\in L}{1\over n}\sum_{i =1}^n \M_{ji} \ge  {c_0\log(n\vee p)\over N},\qquad 	\min_{i\in L}\max_{1\le i\le n}\M_{ji} \ge  {c_1\log^2(n\vee p)\over N}
		\end{equation}
		for some sufficiently large constants $c_0, c_1>0$. 
		Then, with probability $1-O((n\vee p)^{-1})$, we have 
		\begin{equation}\label{rate_M_op}
			\|\wh M - M\|_{\rm{\rm{\rm{op}}}} \lesssim {K \over \ug}\sqrt{pK\log(n\vee p) \over \uu_LnN}.
		\end{equation}
	\end{lemma}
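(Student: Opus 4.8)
The plan is to control $\wh M - M$ by tracking how the estimation error propagates through the formula $\wh M = (\wh B_L^\top \wh B_L)^{-1}\wh B_L^\top \wh R_{LL}\wh B_L(\wh B_L^\top \wh B_L)^{-1}$, and to reduce everything to element-wise control of $\wh R - R$ on the anchor-word block. First I would observe that, because $\wh B_L$ and $B_L$ are both built from the same partition $\L$ via $\wh B_{i\cdot} = B_{i\cdot} = e_k$ for $i \in L_k$, we actually have $\wh B_L = B_L$ exactly. Consequently $(\wh B_L^\top \wh B_L)^{-1} = (B_L^\top B_L)^{-1} = \diag(|L_1|^{-1}, \ldots, |L_K|^{-1})$ is deterministic, and
\[
\wh M - M = (B_L^\top B_L)^{-1} B_L^\top (\wh R_{LL} - R_{LL}) B_L (B_L^\top B_L)^{-1}.
\]
Taking operator norms, $\|\wh M - M\|_{\rm op} \le \|(B_L^\top B_L)^{-1}\|_{\rm op}^2 \, \|B_L\|_{\rm op}^2 \, \|\wh R_{LL} - R_{LL}\|_{\rm op}$. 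Since $B_L^\top B_L$ is diagonal with entries $|L_k|$, this collapses to $\|\wh M - M\|_{\rm op} \le (\min_k |L_k|)^{-1}\|\wh R_{LL} - R_{LL}\|_{\rm op}$. Alternatively, and more usefully for getting the stated rate, one can write $B_L^\top(\wh R_{LL} - R_{LL})B_L$ as a $K\times K$ matrix whose $(k,\ell)$ entry is $\sum_{i\in L_k}\sum_{i'\in L_\ell}(\wh R - R)_{ii'}$, so that after the $(B_L^\top B_L)^{-1}$ scaling each entry of $\wh M - M$ is the \emph{average} of $(\wh R - R)_{ii'}$ over $i\in L_k$, $i'\in L_\ell$. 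Then $\|\wh M - M\|_{\rm op} \le K \max_{k,\ell}|(\wh M - M)_{k\ell}| \le K\max_{i,i'\in L}|(\wh R - R)_{ii'}|$.

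The main work is then the element-wise bound on $\wh R_{LL} - R_{LL}$. Recall $\wh R = D_X^{-1}\wh\Theta D_X^{-1}$ and $R = D_\M^{-1}\Theta D_\M^{-1}$, so for $i,i'\in L$,
\[
\wh R_{ii'} - R_{ii'} = \frac{\wh\Theta_{ii'}}{(D_X)_{ii}(D_X)_{i'i'}} - \frac{\Theta_{ii'}}{(D_\M)_{ii}(D_\M)_{i'i'}}.
\]
I would split this into (i) the numerator fluctuation $\wh\Theta_{ii'} - \Theta_{ii'}$, controlled via Bernstein-type concentration using that $\wh\Theta$ in (\ref{est_Theta}) is an unbiased estimator of $\Theta$ assembled from bounded multinomial increments, and (ii) the denominator fluctuation $(D_X)_{ii} - (D_\M)_{ii}$, which is an average of $n$ bounded independent coordinates and concentrates around $(D_\M)_{ii} = n^{-1}\sum_i \M_{ji} =: \bar\mu_i/p$. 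The lower-bound half of condition (\ref{cond_Pi_I}) — namely $\min_{i\in L} n^{-1}\sum_i \M_{ji} \gtrsim \log(n\vee p)/N$ — is exactly what guarantees $(D_X)_{ii} \asymp (D_\M)_{ii}$ with high probability, so the denominators do not blow up; the second, $\max$-type half of (\ref{cond_Pi_I}) controls the variance proxy in the Bernstein bound for the numerator so that the sub-Gaussian tail kicks in at the right scale. Carrying these through, each $|\wh R_{ii'} - R_{ii'}|$ is of order $\frac{1}{(D_\M)_{ii}(D_\M)_{i'i'}}\sqrt{\frac{(\text{variance proxy})\log(n\vee p)}{nN}}$ plus lower-order terms; bounding $(D_\M)_{ii}, (D_\M)_{i'i'}$ below by $\uu_L/p$ (recall $\uu_L = \min_{i\in L}\mu_i$ and $\mu_i = (p/n)\sum_i\M_{ji}$) and the relevant variance proxy appropriately in terms of $\uu_L/p$ yields $\max_{i,i'\in L}|\wh R_{ii'} - R_{ii'}| \lesssim \frac{p}{\uu_L}\sqrt{\frac{(\uu_L/p)\log(n\vee p)}{nN}} = \sqrt{\frac{p\log(n\vee p)}{\uu_L nN}}$ up to the factors that produce the final $K/\ug$ and $\sqrt{K}$ multipliers (the $1/\ug$ enters when one passes from $R_{LL}$-type quantities to $M$ via the identity $R_{LL} = B_L M B_L^\top$ and uses $\lambda_{\min}$-type lower bounds on the $W$-Gram matrix, cf. Lemma \ref{lem_lbd_min_C}; the extra $K$ comes from the operator-norm-to-entrywise conversion above). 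Combining, $\|\wh M - M\|_{\rm op} \lesssim \frac{K}{\ug}\sqrt{\frac{pK\log(n\vee p)}{\uu_L nN}}$, which is (\ref{rate_M_op}).

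I expect the main obstacle to be the denominator, i.e.\ handling $\wh R$'s dependence on the random $D_X$ rather than the deterministic $D_\M$. One cannot simply Taylor-expand $x\mapsto 1/x$ without first establishing a high-probability lower bound on $(D_X)_{ii}$ for all $i\in L$ simultaneously, and then the cross terms between numerator and denominator errors must be shown to be lower order — this is where the precise form of condition (\ref{cond_Pi_I}), and in particular having it hold on the anchor set $L$ rather than globally, is essential. A secondary technical point is getting the variance proxy in the Bernstein bound for $\wh\Theta_{ii'} - \Theta_{ii'}$ tight enough: the estimator involves both the rank-one term $\frac{N_i}{N_i-1}X_iX_i^\top$ and the diagonal correction $\frac{1}{N_i-1}\diag(X_i)$, and a naive bound on the fourth moments of multinomial counts would cost extra factors of $N$; one needs to use that $\M_{ji}$ is small (it is a probability over $p$ words) together with the $\max$-part of (\ref{cond_Pi_I}) to keep the effective variance at the $\uu_L/(p\,nN)$ scale. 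Everything else is a union bound over the $|L|^2 \le p^2$ pairs, absorbed into the $\log(n\vee p)$, and routine algebra to collect the $K$, $\ug$, $\uu_L$ factors.
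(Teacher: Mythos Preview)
Your reduction is exactly the paper's: since $\wh B_L=B_L$, one has
\[
(\wh M-M)_{ka}=\frac{1}{|L_k||L_a|}\sum_{i\in L_k,\,i'\in L_a}(\wh R_{ii'}-R_{ii'}),
\]
and the paper then invokes its Lemma~\ref{lem_delta} (the element-wise bound $|\wh R_{ii'}-R_{ii'}|\le c_1\delta_{ii'}$, borrowed from \cite{Top}) together with Lemma~\ref{lem_delta_II} to finish. So the strategy is the same; the issue is in your bookkeeping of the constants.

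First, your crude step $\|\wh M-M\|_{\rm op}\le K\max_{i,i'\in L}|\wh R_{ii'}-R_{ii'}|$ would \emph{not} deliver the stated rate: it loses a factor of order $\sqrt{K\og/\ug}$. The paper instead uses $\|\wh M-M\|_{\rm op}\le\|\wh M-M\|_{\infty,1}=\max_k\sum_{a}|(\wh M-M)_{ka}|\le\max_k\max_{i\in L_k}\sum_a\max_{i'\in L_a}\delta_{ii'}$ and then, in the proof of Lemma~\ref{lem_delta_II}, exploits the identity $\sum_{a=1}^K C_{ka}=\g_k/K$ via Cauchy--Schwarz on $\sum_a\sqrt{C_{ka}}$. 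That sum-then-Cauchy--Schwarz step is what converts the $K$ summands into a single $\sqrt{\g_k}$ and produces exactly $(K/\ug)\sqrt{pK\log d/(\uu_L nN)}$; a pure max bound cannot recover this.

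Second, your explanation of the $1/\ug$ factor is off. There is no further ``passing from $R_{LL}$ to $M$'' once you have written $(\wh M-M)_{ka}$ as an average of $\wh R-R$ entries, and Lemma~\ref{lem_lbd_min_C} plays no role here. The $1/\ug$ (and in fact a $K^2/(\g_k\g_a)$ prefactor) is baked into $\delta_{ii'}$ itself: for anchor words $i\in L_k$ one has $(D_\M)_{ii}=\mu_i/p=\alpha_i\g_k/(pK)$, so the $1/((D_\M)_{ii}(D_\M)_{i'i'})$ in $\wh R-R$ contributes $p^2K^2/(\alpha_i\alpha_{i'}\g_k\g_a)$. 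That is where both the $1/\ug$ and the extra $\sqrt{K}$ in the final bound originate, not from any operator-norm conversion or eigenvalue argument. Your variance heuristic and the role of condition~(\ref{cond_Pi_I}) are otherwise correct.
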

	
	\begin{remark}
	{\rm 	 \cite{arora2013practical} observe  that the smallest frequency of anchor words plays an important role in the estimation of $A$.  Condition (\ref{cond_Pi_I}) prevents the frequency of anchor words from being too small and   also appeared in  \cite{Top}.
}	\end{remark}
	
	In case the matrix  $\wh M$ cannot be inverted, we select $\lambda \ge \|\wh M - M\|_{\rm{\rm{op}}}$ in (\ref{est_BJ_2}).  Lemma \ref{lem_M_hat} thus suggests to choose $\lambda$ as 
	 \begin{equation}\label{rate_lambda_thm}
	 \lambda = c\cdot {K\over \ug }\sqrt{pK\log(n\vee p) \over \uu_L nN},
	 \end{equation}
	 for some absolute constant $c>0$. 	Let $\wh A$ be obtained via choosing $\lambda$ as (\ref{rate_lambda_thm}). The following theorem states the upper bound of $\|\wh A -A\|_1$.  Our procedure, its theoretical performance  and its proof differ from those in \cite{Top}.   While the proof borrows some preliminary lemmas from \cite{Top}, it requires a 
	 more refined analysis (see Lemmas \ref{lem_h_hat} -- \ref{lem_Rem_13} in the Appendix).\\

	 \noindent 
	We define 
	$s_j = \|A_{j\cdot}\|_0$ for $j\in [p]$, $s_J = \sum_{j \in J}s_j$ and  $\wt s_J:= \sum_{j\in L^c}( \alpha_j / \ua_L ) s_j= \sum_{j\in L^c} \rho_js_j$.

	\begin{thm}\label{thm_rate_Ahat} 
		Under model (\ref{model}), assume  Assumptions \ref{ass_sep}, \ref{ass_pd_W}  
		with $\lambda_{\min} := \lambda_{K}(n^{-1} WW^\top )>0$ and 
	  (\ref{cond_Pi_I}).  
		Then, with probability $1-O((n\vee p)^{-1})$, we have
		\begin{align*}
		\|\wh A - A\|_1 \lesssim {\rm I + II + III},
		\end{align*}
		where 
		\begin{align*}
	{\rm	I } &=  {K\over \ug}\sqrt{p\log(n\vee p) \over nN} + {pK\log(n\vee p) \over \ug nN} \\
	{\rm 	II} & =   {\og^2 \over \ug K\lambda_{\min}} \Biggl\{
		\max\left\{s_J+|I|-|L|, \wt s_J\right\}\left({K\log(n\vee p)\over \ug n N} + \sqrt{p\log^4(n\vee p) \over \uu_LnN^3}\right)\\
		& ~\quad\qquad \qquad + K\sqrt{\max\left\{
		s_J+|I|-|L|, \wt s_J\right\}{\log(n\vee p) \over \ug nN}}
		\Biggr\}\\
		{\rm III }&= K\sqrt{K\wt s_J \cdot {\og \over \ug}\cdot {\log(n\vee p) \over \ug nN}}
		\end{align*}
		Furthermore, if 
		\begin{equation}\label{cond_C}
			\lambda_{\min} \ge c_2{\og^2 \over \ug}\sqrt{p\log(n\vee p)\over \uu_L KnN}
		\end{equation}
		for some sufficiently large constant $c_2>0$, then, with probability $1-O((n\vee p)^{-1})$, $\wh A$ obtained via $\lambda =0$ enjoys the same rate with ${\rm III} = 0$.
	\end{thm}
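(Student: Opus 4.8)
The argument passes from $\wh A$ to the row-normalized matrix $\wh B$ of (\ref{def_B}), bounds $\|\wh B_{j\cdot}-B_{j\cdot}\|_1$ row by row through the variational characterization of the quadratic program (\ref{est_BJ_2}), and then reassembles the rows. Since $A$ is the column normalization of $D_\M B$ (indeed $D_\M B=AD_W$, so $\|(D_\M B)_{\cdot k}\|_1=(D_W)_{kk}=\g_k/K\ge\ug/K$) and $\wh A$ that of $D_X\wh B$, the elementary comparison of column-normalized non-negative vectors gives, on the high-probability event we work on,
\[
\|\wh A-A\|_1\ \lesssim\ \frac{K}{\ug}\,\|D_X\wh B-D_\M B\|_1\ \le\ \frac{K}{\ug}\Bigl(\|(D_X-D_\M)B\|_1+\|D_X(\wh B-B)\|_1\Bigr).
\]
Using $\|B_{j\cdot}\|_1\le 1$, the first summand equals $\tfrac{K}{\ug}\sum_{j}|(D_X)_{jj}-(D_\M)_{jj}|$, which a Bernstein bound for the multinomial counts controls by $\mathrm{I}$. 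In the second summand, the rows $j\in L$ contribute exactly $0$ because $\wh B_L=B_L$ by (\ref{est_BI}) and (\ref{prop_B}); the rows $j\in L^c$ with $(D_X)_{jj}\le 7\log(n\vee p)/(nN)$ contribute at most $(D_X)_{jj}$ each by (\ref{est_BJ_1}), summing (over $\le p$ indices, times $K/\ug$) also into $\mathrm{I}$; and on the event of Lemma~\ref{lem_M_hat}, passing the threshold forces $(D_\M)_{jj}$ to be of the right order, so the remaining rows are exactly where the bounds below are valid.

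For such a row $j$, set $\beta^{*}=B_{j\cdot}^{\top}$, $\delta=\wh B_{j\cdot}^{\top}-\beta^{*}$, and recall $M\beta^{*}=h^{(j)}$. Testing the optimality condition of (\ref{est_BJ_2}) against the feasible vector $\beta^{*}$ gives $\delta^{\top}(\wh M+\lambda\bI_K)\delta\le\bigl\langle(\wh h^{(j)}-h^{(j)})+(M-\wh M)\beta^{*}-\lambda\beta^{*},\,\delta\bigr\rangle\le\mathrm{err}_j\,\|\delta\|_1$, where $\mathrm{err}_j:=\|\wh h^{(j)}-h^{(j)}\|_\infty+\|\wh M-M\|_{\rm op}\|\beta^{*}\|_2+\lambda$. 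On the event of Lemma~\ref{lem_M_hat} one has $\lambda\ge\|\wh M-M\|_{\rm op}$, hence $\wh M+\lambda\bI_K\succeq M\succeq\lambda_{\min}(M)\bI_K$ with $\lambda_{\min}(M)\ge K^{2}\lambda_{\min}/\og^{2}$ (congruence of $n^{-1}WW^{\top}$ by $D_W^{-1}$ and $(D_W)_{kk}\le\og/K$); and since $\wh B_{j\cdot}$ and $\beta^{*}$ lie in the simplex while $\mathrm{supp}(\beta^{*})=\mathrm{supp}(A_{j\cdot})$ has $s_j$ elements, $\|\delta\|_1\le 2\|\delta_{\mathrm{supp}(\beta^{*})}\|_1\le 2\sqrt{s_j}\,\|\delta\|_2$. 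Combining these in the two natural ways yields the trio of row bounds
\[
\|\wh B_{j\cdot}-B_{j\cdot}\|_1\ \lesssim\ \min\Bigl\{\,2,\ \frac{s_j\,\mathrm{err}_j}{\lambda_{\min}(M)},\ \sqrt{\frac{s_j\,\mathrm{err}_j}{\lambda_{\min}(M)}}\,\Bigr\}.
\]

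I would then insert the rates of Lemma~\ref{lem_M_hat} (for $\|\wh M-M\|_{\rm op}$) and Lemma~\ref{lem_h_hat} (for $\|\wh h^{(j)}-h^{(j)}\|_\infty$, whose leading $j$-dependence is through $\rho_j=\alpha_j/\ua_L$ once the effects of $D_X^{-1}-D_\M^{-1}$, of $\wh\Theta-\Theta$, and of the averaging over $L_k$ are separated) together with the choice (\ref{rate_lambda_thm}) of $\lambda$ into this trio, and sum over $j\in L^{c}$ against the weights $(D_X)_{jj}\asymp(D_\M)_{jj}\asymp\alpha_j/p$ (up to $\og,\ug$ factors). Cauchy--Schwarz across $j$, using $\sum_{j\in L^{c}}\alpha_j\le pK$, $\sum_{j\in L^{c}}\alpha_j s_j=\ua_L\wt s_J$, and $\sum_{j\in L^{c}}s_j=s_J+|I|-|L|$ (anchor words outside $L$ have $s_j=1$), then makes $\max\{s_J+|I|-|L|,\wt s_J\}$ and $\wt s_J$ appear: the bound $\|\delta_j\|_1\lesssim s_j\,\mathrm{err}_j/\lambda_{\min}(M)$ produces the pieces of $\mathrm{II}$ linear in $\max\{s_J+|I|-|L|,\wt s_J\}$, the bound $\|\delta_j\|_1\lesssim\sqrt{s_j\,\mathrm{err}_j/\lambda_{\min}(M)}$ the square-root pieces, and the $\lambda$-term in $\mathrm{err}_j$ produces $\mathrm{III}$; multiplying by the $K/\ug$ of the first step and using $1/\lambda_{\min}(M)\le\og^{2}/(K^{2}\lambda_{\min})$ recovers the prefactor $\og^{2}/(\ug K\lambda_{\min})$ of $\mathrm{II}$. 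For the last assertion, comparing the rate of Lemma~\ref{lem_M_hat} with $\lambda_{\min}(M)\ge K^{2}\lambda_{\min}/\og^{2}$ shows that (\ref{cond_C}) with $c_2$ large enough forces $\|\wh M-M\|_{\rm op}\le\tfrac12\lambda_{\min}(M)$, so $\wh M$ is invertible and $\wh M\succeq\tfrac12\lambda_{\min}(M)\bI_K$; the whole argument then runs verbatim with $\lambda=0$, which removes the $\lambda$-term and hence $\mathrm{III}$.

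The main obstacle is this last step: establishing the sharp $\rho_j$-weighted control of $\|\wh h^{(j)}-h^{(j)}\|_\infty$ (Lemma~\ref{lem_h_hat}), which requires carefully disentangling the normalization error, the $\wh\Theta-\Theta$ fluctuation, and the anchor-averaging effect, and then choosing, for each $j$, the appropriate member of the trio of row bounds so that the combinatorial quantities $\max\{s_J+|I|-|L|,\wt s_J\}$ and $\wt s_J$---rather than the crude $K|L^{c}|$---emerge; this is exactly what produces the adaptation to the unknown sparsity of $A$.
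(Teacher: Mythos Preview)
Your overall architecture matches the paper's proof: pass to $B$, compare column-normalized nonnegative vectors to convert $\|\wh A-A\|_1$ into $(K/\ug)\,\|D_X\wh B-D_\M B\|_1$, split off the thresholded rows via Lemma~\ref{lem_mu}, use the basic inequality of the quadratic program together with the simplex cone relation $\|\Delta^{(j)}\|_1\le 2\sqrt{s_j}\,\|\Delta^{(j)}\|$ on the remaining rows, and invoke Lemma~\ref{lem_M_hat} only to guarantee the conditioning event $\wh M+\lambda\bI_K\succeq\lambda_{\min}(M)\,\bI_K$. The last assertion under~(\ref{cond_C}) is handled exactly as you say.

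Where your plan diverges---and this is the gap you yourself flag as the ``main obstacle''---is in the control of the linear error. You propose $\langle\wh h^{(j)}-h^{(j)},\Delta^{(j)}\rangle\le\|\wh h^{(j)}-h^{(j)}\|_\infty\|\Delta^{(j)}\|_1$ and $\langle(M-\wh M)B_{j\cdot},\Delta^{(j)}\rangle\le\|\wh M-M\|_{\rm op}\|B_{j\cdot}\|\,\|\Delta^{(j)}\|$, bundling them into an $\mathrm{err}_j$. The paper does \emph{not} pass to these norms. It keeps the ratios $|(\Delta^{(j)})^\top(\wh h^{(j)}-h^{(j)})|/\|\Delta^{(j)}\|$ and $|(\Delta^{(j)})^\top(h^{(j)}-\wh MB_{j\cdot})|/\|\Delta^{(j)}\|$ intact, multiplies by $\sqrt{s_j}\,\mu_j/p$, and sums over $j$ \emph{before} bounding. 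Inside Lemmas~\ref{lem_h_hat} and~\ref{lem_M_beta}, each $k$th component of $(\mu_j/p)(\wh h^{(j)}_k-h^{(j)}_k)$ and of $(\mu_j/p)(\wh M_{k\cdot}-M_{k\cdot})^\top B_{j\cdot}$ is split into a piece $T_1^{(jk)}$ uniform in $k$ (paired with $\|\Delta^{(j)}\|_1\le 2\sqrt{s_j}\|\Delta^{(j)}\|$) and a piece $T_2^{(jk)}$ that is \emph{square-summable jointly in $j$ and $k$} (paired with $\|\Delta^{(j)}\|$ via Cauchy--Schwarz). The identities $\sum_{j}\psi_{jk}=\gamma_k/K$ and $\sum_j\mu_j/p=1$ then give $\sum_{j,k}(T_2^{(jk)})^2\lesssim K^2\log d/(\ug nN)$, and this is exactly what produces the term $K\sqrt{\max\{s_J+|I|-|L|,\wt s_J\}\log d/(\ug nN)}$ in~II.

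Your $\ell_\infty/\|\cdot\|_{\rm op}$ route loses here: applying $\langle v,\Delta\rangle\le\|v\|_\infty\|\Delta\|_1\le 2\sqrt{s_j}\|v\|_\infty\|\Delta\|$ to the $T_2$ piece inserts an extra $\sqrt{s_j}$ compared with $\langle v,\Delta\rangle\le\|v\|_2\|\Delta\|$, and after summing this typically costs up to a $\sqrt K$ factor. In particular, Lemma~\ref{lem_h_hat} is \emph{not} a bound on $\|\wh h^{(j)}-h^{(j)}\|_\infty$ with $\rho_j$-dependence, and the operator-norm bound on $\wh M-M$ from Lemma~\ref{lem_M_hat} is used \emph{only} for the event $\E_M$; the rate contribution of $(M-\wh M)B_{j\cdot}$ comes instead from the dedicated Lemma~\ref{lem_M_beta}, which exploits $B_{ja}=(p/\mu_j)A_{ja}\gamma_a/K$ to cancel the weight $\mu_j/p$ and obtain $\rho_j\psi_{jk}$-type entrywise bounds. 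So your plan is correct in outline, but to reach the stated rate you must replace the $\ell_\infty/\|\cdot\|_{\rm op}$ step by this mixed $\ell_2/\ell_\infty$ componentwise analysis; everything else (Term~I, the thresholded rows, the $\lambda$-term via Lemma~\ref{lem_lbd_beta}, and the $\lambda=0$ case) goes through as you describe.
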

	\begin{remark}{ \rm 
	The estimation error of $A$ consists of three parts: I, II and III. Each part reflects 
	errors made at different stages of our estimation procedure.
	Recall that $\wh A$ first uses a hard-thresholding step in (\ref{est_BJ_1}) and then relies on the estimates $\wh B$ and $D_X$ of $B$ and $D_\M$, respectively. The first term in ${\rm I}$ quantifies the error of $D_X - D_\M$, while the second term is due to the hard-thresholding step. The second term ${\rm II}$ is due to the error of $\wh B_j - B_j$ for those $j \in [p]\setminus L$ that pass  the test  (\ref{est_BJ_1}). Finally, ${\rm III}$ stems from the error incurred by  the regularization choice of $\lambda$.  
}	\end{remark}

	\begin{remark}{ \rm 
	Condition (\ref{cond_C}) is  a lower bound for the smallest eigenvalue of the matrix $  n^{-1}WW^\top $. If it holds, we can set $\lambda = 0$ with high probability and the rate of $\|\wh A - A\|_1$ is improved by (at most) a factor of $\sqrt{K(\og / \ug)}$. Under (\ref{cond_Pi_I}), inequality (\ref{cond_C}) follows from
			\[
				\lambda_{\min} \ge {c_2\over \sqrt{c_0}} {\og^2 \over \ug}\sqrt{1\over  Kn}.
  			\]
}	\end{remark}
\medskip
	
	The following corollary provides sufficient conditions that guarantee that our estimator $\wh A$ constructed in Section \ref{sec_noise_est_A} achieves the optimal minimax rate.

		\begin{cor}[Attaining the minimax rate]\label{cor_opt_rate}
	Consider the topic model  (\ref{model}) with  Assumptions \ref{ass_sep} and  \ref{ass_pd_W}.  Suppose further that
			\begin{enumerate}
				\item[(i)] $\| A\|_0\log(n\vee p)\lesssim nN$;
				\item[(ii)]  $\og \asymp \ug$,\quad  $\lambda_{\min}\asymp 1/K$;
				\item[(iii)] $\sum_{j \in L^c}\rho_j s_j \lesssim s_J+|I|$
			\end{enumerate}
			hold. Further, assume (\ref{cond_Pi_I}) holds with the condition on $\min_{i\in L}n^{-1}\sum_{i = 1}^n\M_{ji}$ replaced by
			\begin{equation}\label{cond_Pi_I_new}
			\min_{i\in L}{1\over n}\sum_{i =1}^n \M_{ji} \ge  c_0\max\left\{1, {(s_J+|I|-|L|)\log^2(n\vee p) \over K^2N}\right\}{\log(n\vee p)\over N}.
			\end{equation}
			Then, with probability $1-O((n\vee p)^{-1})$, we have
		\[
			\|\wh A - A\|_1 
			\lesssim 
			\| A\|_1 \sqrt{\| A\|_0\log(n\vee p) \over  nN}.
		\]
	\end{cor}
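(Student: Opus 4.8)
The plan is to feed conditions (i)--(iii) and the strengthened anchor-word bound (\ref{cond_Pi_I_new}) into the finite-sample bound of Theorem~\ref{thm_rate_Ahat} and to show that each of ${\rm I},{\rm II},{\rm III}$ is at most a constant multiple of $K\sqrt{\|A\|_0\log(n\vee p)/(nN)}$; since $\|A\|_1=K$ this is exactly the claimed rate. Note first that (\ref{cond_Pi_I_new}) implies the first inequality in (\ref{cond_Pi_I}) (the factor $\max\{1,\cdot\}$ is at least $1$), so Theorem~\ref{thm_rate_Ahat} is applicable. I would use throughout that each anchor word has a single nonzero entry, so $\|A\|_0=s_J+|I|$ with $|I|\ge K$ and (as every word occurs in at least one topic) $\|A\|_0\ge p$; that $n\ge K$, since the $K\times n$ matrix $W$ must have full row rank for Assumption~\ref{ass_pd_W} to hold; and that $p\log(n\vee p)\le\|A\|_0\log(n\vee p)\lesssim nN$ by (i).

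\emph{Prefactors and the term ${\rm III}$.} Condition (ii) gives $\og\asymp\ug$, which, since $\og\ge1\ge\ug$ always, forces $\og\asymp\ug\asymp1$; hence every occurrence of $\og,\og^2,\ug$ in ${\rm I},{\rm II},{\rm III}$ can be dropped, and together with $\lambda_{\min}\asymp1/K$ the prefactor $\og^2/(\ug K\lambda_{\min})$ of ${\rm II}$ becomes a constant. Under (ii) the hypothesis (\ref{cond_C}) of the last part of Theorem~\ref{thm_rate_Ahat} reduces to $\uu_LnN\gtrsim Kp\log(n\vee p)$; since $\uu_L=p\min_{j\in L}n^{-1}\sum_{i=1}^n\M_{ji}\gtrsim p\log(n\vee p)/N$ by (\ref{cond_Pi_I_new}) and $n\ge K$, we get $\uu_LnN\gtrsim pn\log(n\vee p)\ge pK\log(n\vee p)$, with numerical constants absorbed by the large constant in (\ref{cond_Pi_I_new}). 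So I may take $\lambda=0$ and ${\rm III}=0$.

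\emph{The term ${\rm I}$ and the benign parts of ${\rm II}$.} With $\ug\asymp1$, ${\rm I}\asymp K\sqrt{p\log(n\vee p)/(nN)}+pK\log(n\vee p)/(nN)$; since $p\log(n\vee p)/(nN)\lesssim1$ the second summand is at most the first, which is $\le K\sqrt{\|A\|_0\log(n\vee p)/(nN)}$ because $p\le\|A\|_0$. For ${\rm II}$, put $m:=\max\{s_J+|I|-|L|,\wt s_J\}$; condition (iii) says $\wt s_J=\sum_{j\in L^c}\rho_js_j\lesssim s_J+|I|=\|A\|_0$, and $s_J+|I|-|L|\le\|A\|_0$, so $m\lesssim\|A\|_0$. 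Hence $K\sqrt{m\log(n\vee p)/(nN)}\lesssim K\sqrt{\|A\|_0\log(n\vee p)/(nN)}$ and, using (i), $mK\log(n\vee p)/(nN)\lesssim\|A\|_0K\log(n\vee p)/(nN)\lesssim K\sqrt{\|A\|_0\log(n\vee p)/(nN)}$. The only term left is $m\sqrt{p\log^4(n\vee p)/(\uu_LnN^3)}$.

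\emph{The $\uu_L$-dependent term (the main obstacle).} This is the delicate part, and is precisely why (\ref{cond_Pi_I_new}) enlarges the lower bound on anchor frequencies by the factor $\max\{1,(s_J+|I|-|L|)\log^2(n\vee p)/(K^2N)\}$; I expect it to be the crux. I would split according to whether $\bar s:=s_J+|I|-|L|$ or $\wt s_J$ attains $m$, and, in the first case, according to whether $\bar s\log^2(n\vee p)\ge K^2N$. When $m=\bar s$ and $\bar s\log^2(n\vee p)\ge K^2N$, (\ref{cond_Pi_I_new}) yields $\uu_L\gtrsim p\bar s\log^3(n\vee p)/(K^2N^2)$, which turns the term into $\lesssim K\sqrt{\bar s\log(n\vee p)/(nN)}\le K\sqrt{\|A\|_0\log(n\vee p)/(nN)}$; when $\bar s\log^2(n\vee p)<K^2N$ I would instead use only $\uu_L\gtrsim p\log(n\vee p)/N$ together with $\bar s^2\le\bar s\cdot K^2N/\log^2(n\vee p)\le\|A\|_0K^2N/\log^2(n\vee p)$, which gives the same bound. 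In the remaining case $m=\wt s_J$ I would combine (iii) with the lower bound $A_{jk}\gtrsim K\log(n\vee p)/N$ valid for every anchor word $j\in L_k$ — a consequence of (\ref{cond_Pi_I_new}) and $\og\asymp1$ via the identity $n^{-1}\sum_{i=1}^n\M_{ji}=A_{jk}\g_k/K$ — so that $\ua_L\gtrsim pK\log(n\vee p)/N$, hence $\rho_j\lesssim N/(K\log(n\vee p))$ for all $j$ and $\wt s_J\lesssim\bar s\,N/(K\log(n\vee p))$; inserting this bound on $\wt s_J$ and the matching lower bound on $\uu_L$ closes the estimate. Collecting everything would then give $\|\wh A-A\|_1\lesssim{\rm I}+{\rm II}\lesssim K\sqrt{\|A\|_0\log(n\vee p)/(nN)}=\|A\|_1\sqrt{\|A\|_0\log(n\vee p)/(nN)}$ with probability $1-O((n\vee p)^{-1})$, which is the claim. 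The bookkeeping in the $m=\wt s_J$ case — balancing $\wt s_J$ against $\bar s$ and $\|A\|_0$ so that the calibrated bound on $\uu_L$ exactly compensates — is the step I expect to demand the most care; the rest is essentially mechanical once Theorem~\ref{thm_rate_Ahat} is in hand.
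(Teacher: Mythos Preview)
Your approach is correct and is precisely the route the paper intends: the corollary is stated without an explicit proof, so it is meant to follow by plugging conditions (i)--(iii) and (\ref{cond_Pi_I_new}) into Theorem~\ref{thm_rate_Ahat}, exactly as you do. Your verification that (\ref{cond_C}) holds under (ii) and (\ref{cond_Pi_I_new}) (so that $\lambda=0$ and ${\rm III}=0$), your bound on ${\rm I}$ via $p\le\|A\|_0$, and your reduction of the easy parts of ${\rm II}$ using $m\lesssim\|A\|_0$ are all clean and correct. Your two-case treatment of the $\uu_L$-dependent term when $m=\bar s$ is exactly the computation the strengthened condition (\ref{cond_Pi_I_new}) was designed for, and you are right that this term is the crux.

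One small remark on the sub-case $m=\wt s_J$ that you flag as delicate: your bound $\rho_j\lesssim N/(K\log(n\vee p))$ gives $\wt s_J\lesssim \bar s\,N/(K\log(n\vee p))$, but combining this with the $\uu_L$ bound alone can overshoot when $N$ is large relative to $K\log(n\vee p)$. The cleaner way to close this sub-case is to also use condition~(iii) directly, $\wt s_J\lesssim s_J+|I|=\|A\|_0$, and take the minimum of the two bounds; when $N\lesssim K\log(n\vee p)$ your $\rho_j$-bound yields $\wt s_J\lesssim\bar s$ and you fall back on the $m=\bar s$ analysis, while for larger $N$ one uses $\wt s_J\lesssim\|A\|_0$ together with the strengthened lower bound on $\uu_L$. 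Your instinct that this sub-case requires the most bookkeeping is accurate.
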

	\begin{remark}[Conditions in Corollary \ref{cor_opt_rate}]\mbox{}{\rm
	\begin{enumerate}
		\item  Condition $(i)$ is natural (up to the multiplicative  logarithmic factor) as $\| A\|_0$ is the effective number of parameters to estimate while $nN$ is the total sample size. 
		\item  The first part of condition $(ii)$, $\ug \asymp \og$, requires that all topics have similar frequency. The ratio $\og / \ug$ is called the \emph{topic imbalance}  \citep{arora2012learning} and is expected to affect the estimation rate of $A$.
		\item
		The second part
		of condition $(ii)$, $\lambda_{\min} \asymp 1/K$, requires that topics are not too correlated. This is expected even for known $W$, playing the same role of the design matrix in the classical regression setting. 
		\item Condition $(iii)$ puts a mild constraint on the word-topic matrix $A$ between the selected anchor words and the other words
		(anchor and non-anchor). It is implied by
		\[
		\sum_{j \in L^c} {s_j \over \sum_{j \in L^c}s_j}\|A_{j\cdot}\|_\i  \lesssim \min_{i\in L}\|A_{i\cdot}\|_1,
		\]
		which in turn is implied by
		\[
		\max_{1 \le k\le K}\PP\left\{\text{word }j \ |\ \text{topic }k\right\}~ \lesssim~ \sum_{k = 1}^K\PP\left\{
		\text{word }i \ |\ \text{topic k}
		\right\}
		\]
		for any $i\in L$ and $j \notin L$. The latter condition  prevents the selected anchor words from being much less  frequent than the other words. 
		
		\item 
		Finally, condition (\ref{cond_Pi_I_new}) strengthens (\ref{cond_Pi_I}) by requiring a slightly larger lower bound for the frequency of selected anchor words. It is implied by 
		\[
		N \ge {\|A\|_0 \log^2(n\vee p)  \over K^2} \ge {(s_J + |I| - |L|) \log^2(n\vee p) \over K^2}
		\]
		under (\ref{cond_Pi_I}). As discussed in \cite{arora2012learning, arora2013practical, Top}, usage of  infrequent anchor words often leads to inaccurate estimation of $A$.
	\end{enumerate}
	}

	\end{remark}


	\section{Practical aspects of the algorithm}\label{sec_disc_L}
	We  discuss two practical concerns of our proposed algorithm in Section \ref{sec_noise_est_A}:
	\begin{enumerate}
	    \item Selection of the number of topics $K$ and subset of anchor words $L$
	    \item Data-driven choice of the tuning parameter $\lambda$ in (\ref{rate_lambda_thm}).
	\end{enumerate}
	
	\paragraph{Selection of $K$ and $L$.}
	Several existing algorithms  with theoretical guarantees for finding anchor words in the topic model exist.
	Most methods rely on finding the vertices of a simplex structure,
	{\em provided that the number of topics $K$ is known beforehand}.
		For known $K$, \cite{rechetNMF} make clever use of the appropriately defined simplex structure  on $\Theta=n^{-1} \Pi \Pi^\top $ implied by Assumption \ref{ass_sep}.
		%
		However, their method needs to solve a linear program in dimension  $p\times p$, which 
		becomes rapidly computationally intractable.   \cite{arora2013practical} proposes a faster combinatorial  algorithm which returns  one anchor word per topic. The returned anchor words are shown to be \emph{close to} anchor words within a specified tolerance level. Recently, \cite{Tracy} proposes another algorithm for finding anchor words by utilizing the simplex structure of the singular vectors of the word-document frequency matrix.  However, their algorithm runs much slower than that of \cite{arora2013practical}. 
		
	In practice, $K$ is rarely known in advance. This situation is addressed in \cite{Top}. This work proposes a  method that provably estimates $K$ from the data, provided that the topic-document matrix $W$ satisfies the following incoherence condition.
	\begin{ass}\label{ass_w}
		The inequality 
		\[ \cos\left( \angle( W_{i\cdot},   W_{j\cdot} ) \right)< \frac{\zeta_i}{\zeta_j} \wedge  \frac{\zeta_j}{\zeta_i}\qquad \text{for all $1\le i \ne j\le K$},\]
		holds, with $\zeta_i:= \| W_{i\cdot} \|_2 / \|W_{i\cdot} \|_1$.
	\end{ass}
	This additional assumption is not needed in the aforementioned work when $K$ is known. When columns of $W$ are i.i.d. samples of Dirichlet distribution, Assumption \ref{ass_w} holds with high probability under mild conditions on the hyper-parameter of Dirichlet distribution \cite[Lemma 25 in the Supplement]{Top}.
	  In addition to the estimation of $K$, the algorithm in \cite{Top} estimates both the set and the partition of  \emph{all} anchor words for each topic. This sets it further apart from 
	  \cite{arora2013practical}, as the latter only recovers \emph{one} approximate anchor word for each topic. The algorithm of finding anchor words in \cite{Top} is optimization-free and runs as fast as that in \cite{arora2013practical}.
	
	Hence, for selecting $L$, we can use Algorithm 4 in \cite{arora2013practical} when $K$ is known and   Algorithm 2 in \cite{Top} if $K$ is known or needs to be estimated. 

	\paragraph{Data-driven choice of $\lambda$.}  The precise rate for $\lambda$ in (\ref{rate_lambda_thm}) contains unknown  quantities $\bar{\gamma}$,  $\ug$ and $\uu_L$.
	We proceed via  cross-validation over a specified grid. 
	We prove in Lemma \ref{lem_mu_hat} in the Appendix that $|\min_{i\in L}(D_X)_{ii} - \uu_L/p| = o_p(\sqrt{\log(n\vee p) / (nN)})$ with $D_X = n^{-1}\diag(X\1_n)$. We recommend the following procedure for selecting $\lambda$. For some constant $c_0$ (our empirical study suggests the choice $c_0 = 0.01$),
	we take 
	\[
		t^* = \arg\min\left\{t\in \{0,1,2,\ldots\}:\, \wh M + \lambda(t) \text{ is invertible}\right\},
	\]
	with
	 \begin{equation}\label{rate_lambda_data}
	 \lambda(t) = t \cdot c_0\cdot K\left({K\log (n\vee p) \over [\min_{i\in L}(D_{X})_{ii}]n}\cdot  \frac{1}{n}\sum_{i=1}^n {1\over N_i}\right)^{1/2}.\\
	 \end{equation}
	
	\section{Experimental results}\label{sec_sim}
	
	In this section, we report on the empirical performance of the new algorithm proposed and compare it with existing competitors on both synthetic and semi-synthetic data. 
	
	\paragraph{Notation.} 	Recall that $n$ denotes the number of documents, $N$ denotes the number of words drawn from each document, $p$ denotes the dictionary size, $K$ denotes the number of topics, and $|I_k|$ denotes the cardinality of anchor words for topic $k$. We write $\xi:= \min_{k\in [K],i\in I_k}A_{ik}$ for the minimal frequency of anchor words. 
	Larger values of $\xi$ are more favorable for estimation. 
	
	\paragraph{Methodology.}
	For competing algorithms, we consider Latent Dirichlet Allocation (LDA)  \citep{BleiLDA}\footnotemark\footnotetext{We use the code of LDA from \cite{lda} implemented via the fast collapsed Gibbs sampling with the default of 1,000 iterations}, the algorithm (AWR) proposed in \cite{arora2013practical} and the TOP algorithm proposed in \cite{Top}. We use the default values of hyper-parameters for all algorithms. Both LDA and AWR need to specify the number of topics $K$.  In our proposed Algorithm \ref{alg_1} (STM), we choose $\lambda$ according to (\ref{rate_lambda_data}) and we select the anchor words either via AWR with specified $K$ or via TOP \citep{Top}, and proceed with the estimation of $A$ as described in Section \ref{sec_noise_est_A}. We name the resulting estimates
	 STM-AWR and STM-TOP, respectively.
	
	\subsection{Synthetic data}\label{sec_sim_syn}
	In this section, we use synthetic data to demonstrate the effect of the sparsity of $A$ on the estimation error $\|\wh A-A\|_1/K$ for AWR, TOP, STM-AWR and STM-TOP. Both AWR and STM-AWR are given the correct $K$, while TOP and STM-TOP estimate $K$.

	To simulate synthetic data, we  generate  $A$ satisfying Assumption \ref{ass_sep} by the following strategy. 
	\begin{itemize}[topsep = 8pt]\setlength{\itemsep}{1pt}
	    \item Generate  anchor words by $A_{ik}:=\xi$ for any $i\in I_k$ and $k\in [K]$.
	    \item Each entry of non-anchor words is sampled from   Uniform$(0,1)$.
	    \item Normalize each sub-column  $A_{Jk}\subset A_{\cdot k}$ to have sum  $1- \sum_{i\in I}A_{ik}$. 
	    \item Draw  columns of $W$   from the symmetric Dirichlet distribution with parameter $0.3$. 
	    \item Simulate $N$ words   from $\text{Multinomial}_p(N; AW)$. 
	\end{itemize} 
	
	\noindent
	To change the sparsity of $A$,  we randomly set $s= \floor{\eta   K}$ entries of each row in $A_J$ to zero, for a given sparsity proportion $\eta \in (0, 1)$.
	Normalizing the thresholded matrix gives $A(\eta)$ and the sparsity of $A(\eta)$ is calculated as  $s(\eta) = \|A(\eta)\|_0 / (pK)$. 
		We set 
		\begin{itemize}
		\centering
		\item[] 
		$N = 1500,\ p = n = 1000,\ K = 20,\ |I_k| = p / 200$ and $\xi =  K / p$. 	\end{itemize} 
		\noindent
	For each $\eta \in \{0, 0.1, 0.2, \ldots, 0.9\}$, we generate $50$ datasets based on $A(\eta)$ and report in Figure \ref{fig_syn} the average estimation errors $\|\wh A-A(\eta)\|_1/K$ of the four different algorithms.  The x-axis represents the corresponding sparsity level $s(\eta)$. Since the selected anchor words are up to a group permutation, we align the columns of $\wh A$ before calculating the estimation error.\\
	
	\noindent
	{\bf Conclusion.} STM-TOP has the best performance overall. Both STM-AWR and STM-TOP perform increasingly better as $A$ becomes  sparser.  The performance of AWR    improves only if the sparsity level is sufficiently large, say $s(\eta) < 0.5$. As expected, TOP does not adapt to the sparsity. 
	
	\begin{figure}[ht]
		\centering
		\includegraphics[width =.4\textwidth]{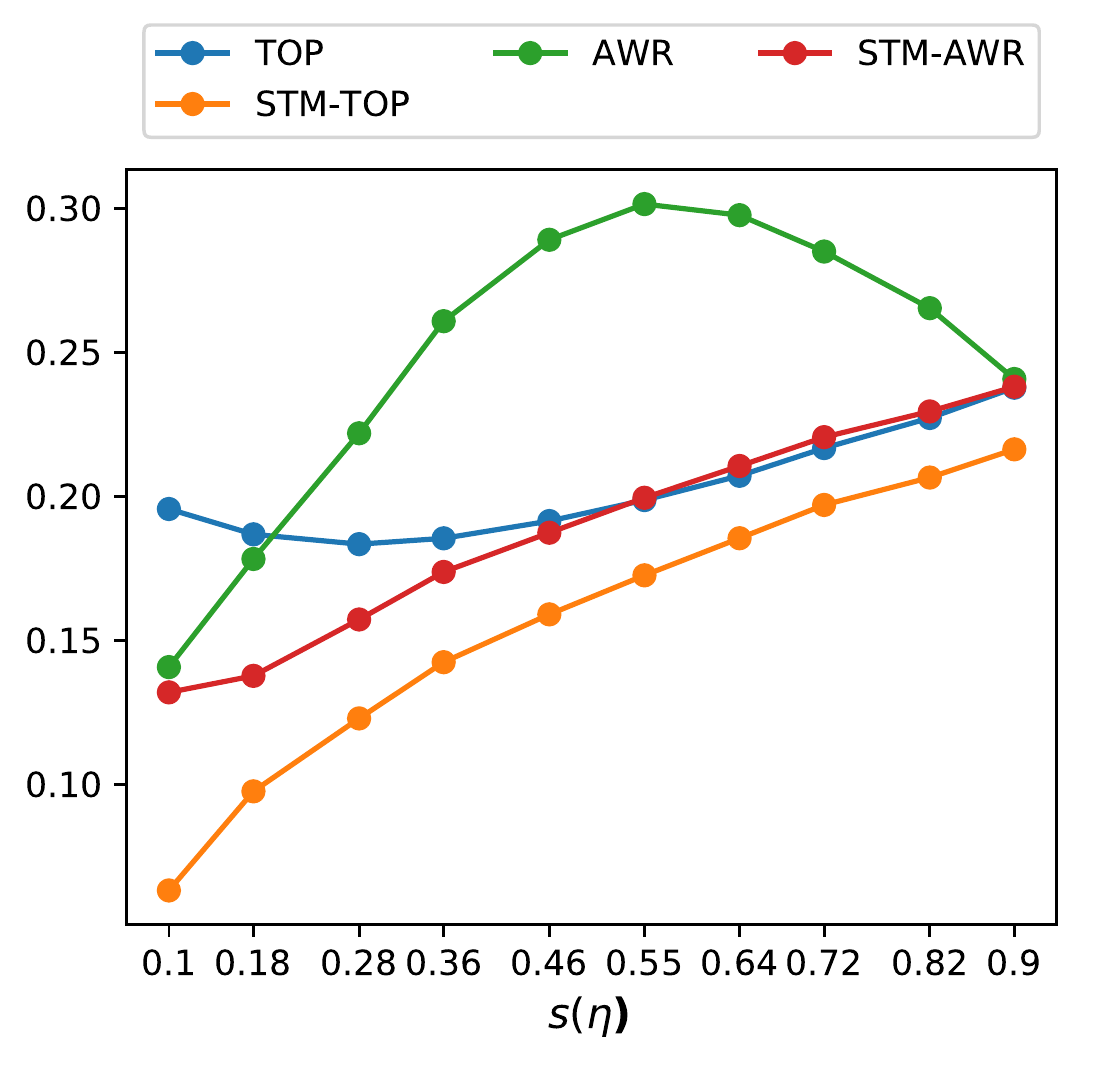}
		\caption{Plots of the estimation error  $\|\wh A -A(\eta)\|_1/K$ for $\eta\in \{0, 0.1, 0.2, \ldots, 0.9\}$. }
		\label{fig_syn}
	\end{figure}
	
	\subsection{Semi-synthetic data}\label{sec_sim_semi_syn}
	
	We evaluate 
	two real-world datasets, a corpus of NIPs articles and a corpus of New York Times (NYT) articles \citep{ucldata}. Following \citep{arora2013practical}, 
	\begin{enumerate}[topsep = 3mm, itemsep = 0pt]
	    \item We removed common stopping words and rare words occurring in less than
	150 documents.
	\item For each preprocessed dataset, we  apply LDA with $K = 100$ and obtain an estimated word-topic matrix $A^{(0)}$. 
	\item For each document $i\in [n]$, we generate the topics $W_i$ from a specified distribution. 
	\item We sample $N$ words from Multinomial$_p(N; A^{(0)}W)$.  
	\end{enumerate}

	\subsubsection{NIPs corpus}
	After this preprocessing stop, the NIPs dataset consists of  
	 $n = 1,500$ documents with dictionary size $p = 1,253$ and mean document length $847$.  
	 \begin{enumerate}
	     \item  
	We set $N = 850$ and vary $n\in \{2000, 4000, 6000, 8000, 10000\}$ for generating semi-synthetic data.
	\item 
 While  the estimated $A^{(0)}$ from LDA does not have exact zero entries, we calculate \emph{the approximate sparsity level} of $A$ by 
	\begin{equation}\label{s_A}
	\textbf{sparsity} = {1\over pK}\sum_{j=1}^p\sum_{k=1}^K 1 {\left\{A_{jk} \ge 10^{-3} p^{-1} \right\}} \approx 0.696.
	\end{equation}
	 The calculated \textbf{sparsity} indicates that the posterior $A^{(0)}$ from LDA has many entries close to $0$. 
	 \item 
	As in \cite{arora2013practical}, we manually add $|I_k| = m$ anchor words for each topic with $m \in \{1, 5\}$. After adding $m$ anchor words, we re-normalize the columns to obtain $A^{(m)}$.
	\item 
	The columns of $W$ are generated from the symmetric Dirichlet distribution with parameter 0.03. We sample $N$ words from Multinomial$_p(N; A^{(m)}W)$. 
	 \end{enumerate}
	For each combination of $n$ and $m$, we generate $20$ datasets and
	the average  estimation errors $\|\wh A - A\|_1/K$ of  different algorithms are shown in Figure \ref{fig_nips_diri}. The bars represent the standard deviations across 20 repetitions. Again, LDA, AWR and STM-AWR are given the correct $K$, while TOP and STM-TOP estimate $K$.\\
	
	\noindent 
	{\bf Conclusion.}  
	STM-TOP has  best overall performance and STM-AWR has the second best result. LDA is dominated by all other algorithms, although increasing the number of iterations might boost the performance of LDA. Both STM-TOP and TOP have better performance when one has more anchor words.
	
	\begin{figure}[H]
		\centering
		\includegraphics[width =\textwidth]{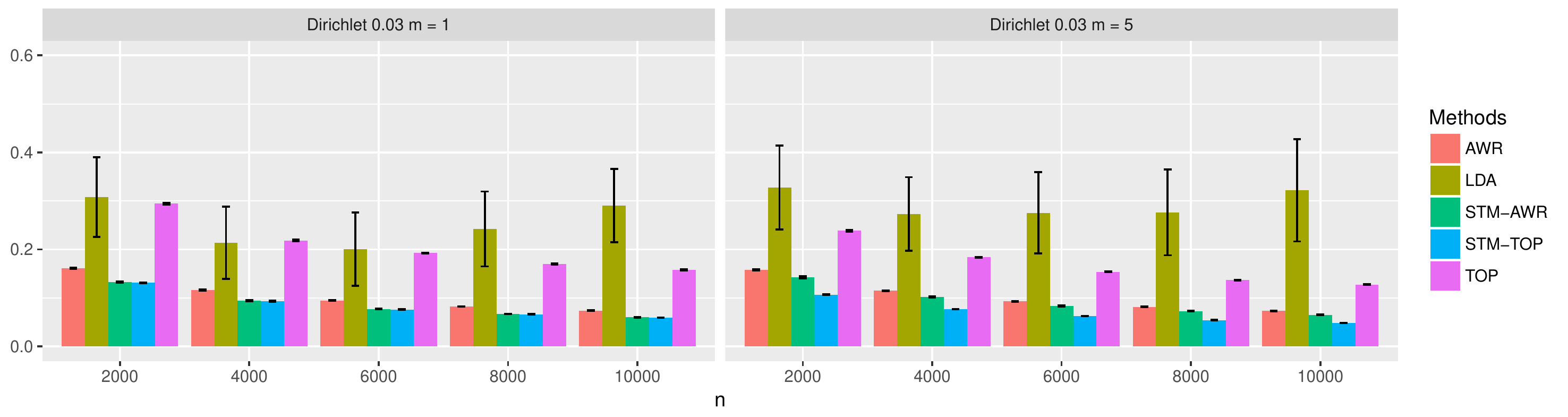}
		\vspace{-5mm}
		\caption{Plots of the estimation errors $\|\wh A -A\|_1/K$}
		\label{fig_nips_diri}
		\vspace{-5pt}
	\end{figure}

	We also investigate   the effect of the correlation among topics on the estimation of $A$. 
	Following \cite{arora2013practical}, we simulate $W$ from a log-normal distribution with block diagonal covariance matrix and different within-block correlation. To construct the block diagonal covariance structure, we divide 100 topics into 10 groups. For each group, the off-diagonal elements of the covariance matrix of topics is set to $\rho$, while the diagonal entries are set to 1. The parameter $\rho \in  \{0.03, 0.3\}$ reflects the magnitude of correlation among topics. We take the case $m = 1$ and the estimation errors of the algorithms are shown in Figure \ref{fig_nips_corr}. \\
	
	\noindent 
	{\bf Conclusion.} STM-TOP has the best performance in all settings. As long as the number of documents $n$ is large, STM-AWR is more robust to the correlation among topics than AWR. LDA and AWR are comparable. 
	
	\begin{figure}[ht]
		\centering
		\includegraphics[width =\textwidth]{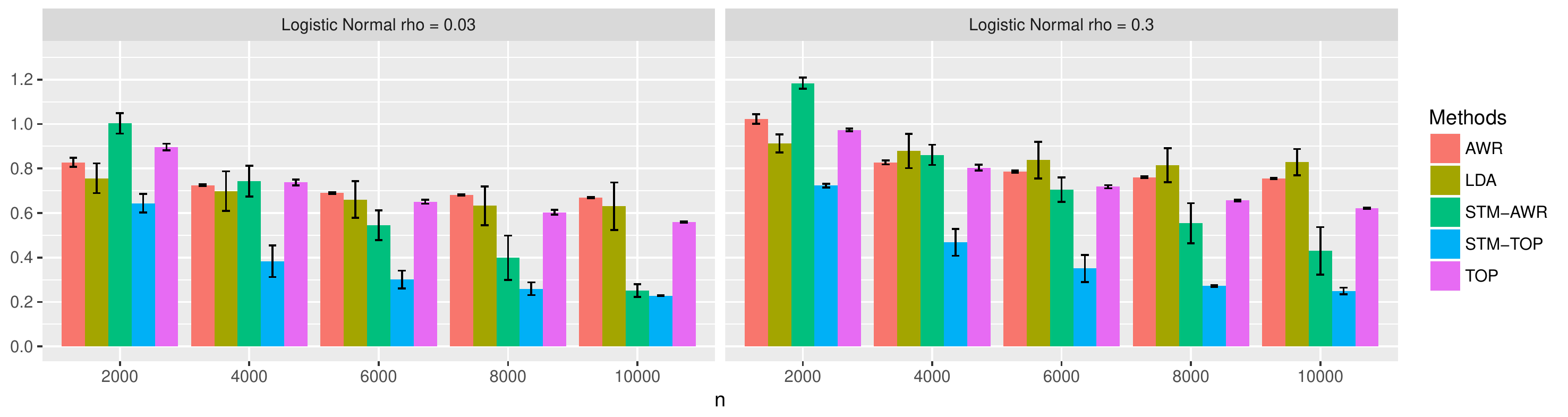}
		\vspace{-5mm}
		\caption{Plots of the estimation errors $\|\wh A -A\|_1/K$ for $\rho = 0.03$ and $\rho = 0.3$.}
		\label{fig_nips_corr}
		\vspace{-5pt}
	\end{figure}
	
	Finally, we report the running times of the various  algorithms in Table \ref{tab_time}. As one can see, LDA is the slowest and does not scale well with $n$. On the other hand, TOP is the fastest and the other three algorithms (AWR, STM-AWR and STM-TOP) have comparable running times.
	
	\begin{table}[ht]
		\centering
		\caption{Running time (seconds) of different algorithms.}
		\label{tab_time}
		\begin{tabular}{lccccc}
			\hline
			& TOP & STM-TOP & AWR & STM-AWR & LDA \\ 
			\hline
			n = 2000 & 35.2 & 614.3 & 393.8 & 500.7 & 1918.7 \\ 
			n = 4000 & 32.8 & 611.2 & 447.0 & 466.2 & 3724.5 \\ 
			n = 6000 & 41.8 & 610.9 & 455.0 & 416.7 & 5616.6 \\ 
			n = 8000 & 44.7 & 605.1 & 458.4 & 463.5 & 7358.8 \\ 
			n = 10000 & 52.0 & 609.0 & 482.8 & 517.9 & 9130.6 \\ 
			\hline
		\end{tabular}
	\end{table}
	
	\subsubsection{New York Times (NYT) dataset}
	After the same preprocessing step, the NYT dataset cotains  $n = 299,419$ documents with dictionary size $p = 3,079$ and mean document length $210$.  We   choose $N = 300$ and vary $n\in\{30000, 40000, \ldots, 70000\}$. The estimated $A^{(0)}$ from LDA has $\textbf{sparsity} \approx 0.679$ calculated from (\ref{s_A}). As in the NIPs corpus earlier, we manually add $|I_k| = m \in \{ 1, 5\}$ anchor words per topic. For each $m$ and $n$, we generate $20$  datasets where columns of $W$ are generated from the symmetric Dirichlet distribution with parameter $0.03$. The average estimation errors $\|\wh A-A\|_1/K$ are shown in Figure \ref{fig_nyt_diri}. We also study the effect of correlation among topics on the estimation errors for the case $m=1$ and with the columns of $W$  generated from the log-normal distribution with block diagonal correlation and $\rho = \{0.1, 0.3\}$. The result is shown in Figure \ref{fig_nyt_corr}.\\ 
	
	\noindent
	{\bf Conclusion.}
		From Figure \ref{fig_nyt_diri}, in the presence of anchor words, we see that  STM-TOP has the best overall performance and STM-AWR outperforms AWR. 
		The errors of STM-TOP and TOP decrease if more anchor words are introduced.
		In Figure \ref{fig_nyt_corr}, STM-TOP outperforms the other algorithms in all cases. TOP has the second best performance while the other three algorithms are comparable.
	
	\begin{figure}[ht]
		\centering
		\includegraphics[width =\textwidth]{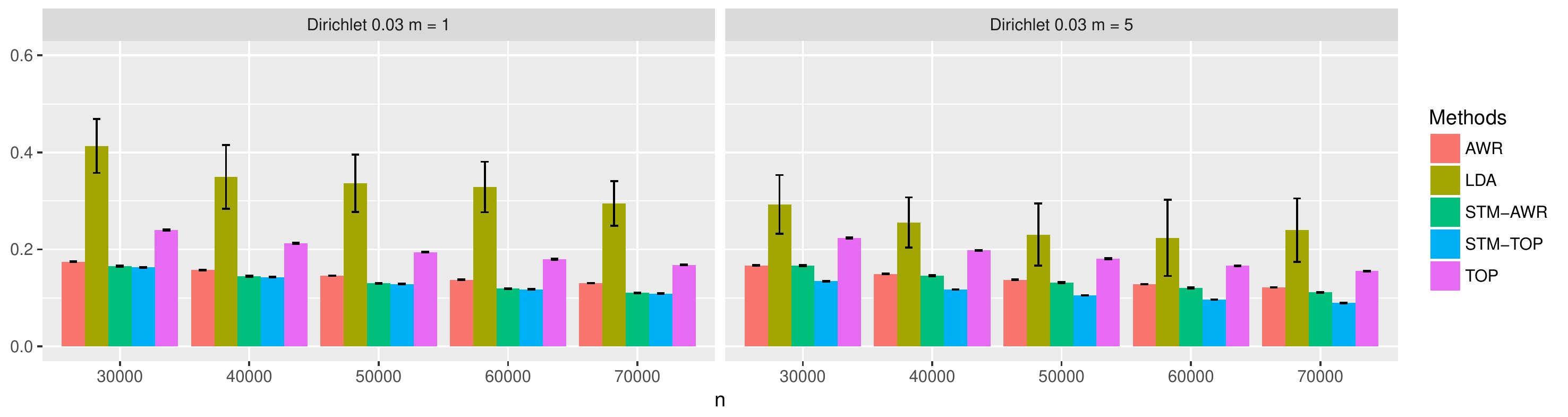}
		\vspace{-5mm}
		\caption{Plots of the estimation errors $\|\wh A -A\|_1/K$}
		\label{fig_nyt_diri}
		 \vspace{-5pt}
	\end{figure}
		\begin{figure}[ht]
		\centering
		\includegraphics[width =\textwidth]{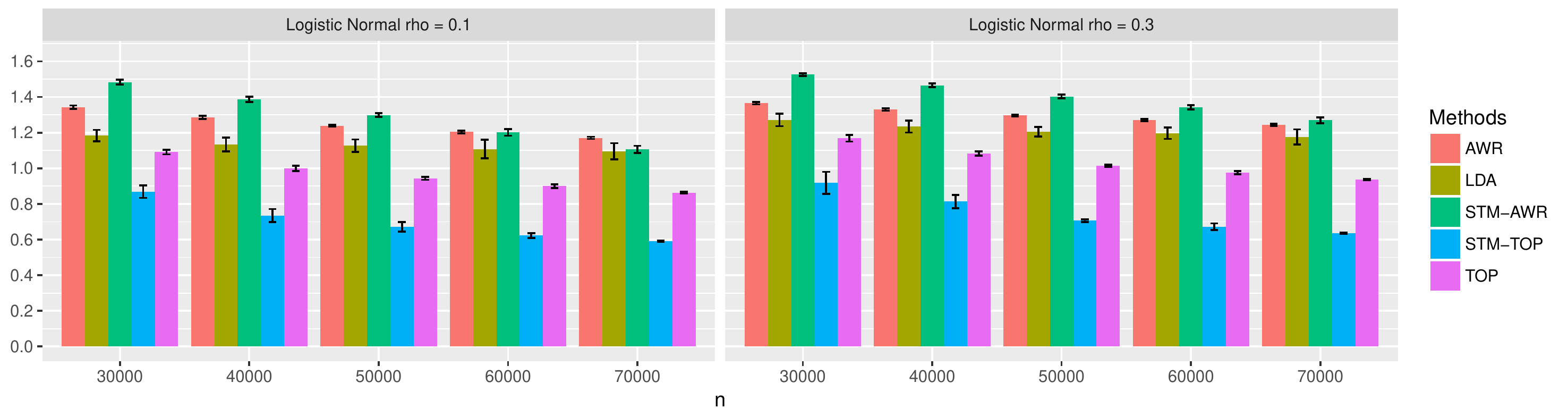}
		\vspace{-5mm}
		\caption{Plots of the estimation errors $\|\wh A -A\|_1/K$ for $\rho = 0.1$ and $\rho = 0.3$.}
		\label{fig_nyt_corr}
	\end{figure}

	\section{Conclusion} We have studied  estimation of the word-topic matrix $A$ when it is possibly entry-wise sparse and the number of topics $K$ is unknown, under the \emph{separability} condition. A new minimax lower bound of $\|\wh A-A\|_1$ is derived and a computationally efficient procedure (STM) for estimating $A$ is proposed.
	The estimator provably achieves the minimax lower bound (modulo a logarithmic factor) and adapts to the unknown sparsity. Extensive simulations corroborate the superior performance of our new estimation procedure in tandem  with the existing algorithm  in \cite{Top} for selecting anchor words.
	
	
	\newpage
	
	\appendix
		
	\section{Proofs}
	The proofs rely on some lemmas  in \cite{Top}. For the reader's convenience, we restate them in Section \ref{sec_proofs_Top} and use similar notations for simplicity. 
	\subsection{Notations and two useful expressions}
	From the topic model specifications, the matrices $\M$, $A$ and $W$ are all scaled as 
	\begin{equation}\label{orig_sum_to_one}
	\sum_{j =1}^p\M_{ji} = 1,\quad\sum_{j =1}^pA_{jk} = 1,\quad 
	\sum_{k=1}^KW_{ki} = 1	
	\end{equation}
	for any $1\le j\le p$, $1\le i\le n$ and $1\le k\le K$. 
	In order to adjust  their scales properly, we denote 
	\begin{equation}\label{def_uag}
	m_{j} = p\max_{1\le i\le n}\M_{ji},~~ \u_j = {p\over n} \sum_{i=1}^n\M_{ji},~~ \alpha_j = p\max_{1\le k\le K} A_{jk}, ~~ \g_k = {K\over n}\sum_{i=1}^nW_{ki},
	\end{equation}
	so that 
	\begin{equation}\label{sum_to_1}
	\sum_{k =1}^K\g_k =K, \qquad \sum_{j =1}^p\u_j = p.
	\end{equation}
	Recall that $\rho_j = \a_j / \ua_L$ and $\wt s_J := \sum_{j\in L^c}\rho_js_j$. We define 
	\begin{equation}\label{mu_hat}
		{\wh \mu_j \over p} = {1\over n}\sum_{t = 1}^n X_{jt},\qquad \text{for all }1\le j\le p.
	\end{equation}
	We write $d:= n\vee p$ throughout the proof. Finally, note that Assumption \ref{ass_w} implies $K<n$.
	
	From model specifications (\ref{orig_sum_to_one}) and (\ref{def_uag}), we derive three useful facts that  are later repeatedly invoked.
	
	\begin{itemize}
		\item[(a)] For any $j\in [p]$, by using (\ref{def_uag}),
		\begin{equation}\label{eq_mu}
		\u_j = {p\over n}\sum_{i =1}^n\M_{ji} = {p\over n}\sum_{i =1}^n\sum_{k =1}^KA_{jk}W_{ki} = {p\over K }\sum_{k =1}^KA_{jk}\g_k ~~\Rightarrow~~{p\over  K} \sum_{k = 1}^KA_{jk}\cdot \ug\le \u_j\le \alpha_j.
		\end{equation}
		In particular, for any $j\in I_k$ with any $k\in [K]$, 
		\begin{equation}\label{eq_mu_I}
		\u_j = {p\over n}\sum_{i =1}^n\sum_{k =1}^KA_{jk}W_{ki} = {p\over K }A_{jk}\g_k\overset{(\ref{def_uag})}{=}{\alpha_j\g_k \over K}.
		\end{equation}
		\item[(b)] For any $j\in [p]$,
		\begin{equation}\label{eq_m}
		m_j \overset{(\ref{def_uag})}{=} {p}\max_{1\le i\le n}\M_{ji} = {p}\max_{1\le i\le n}\sum_{k =1}^KA_{jk}W_{ki}\le  p\max_{1\le k\le K}A_{jk} \overset{(\ref{def_uag})}{=} \alpha_j ~~ \Rightarrow ~~ \u_j \le m_j \le \alpha_j,
		\end{equation}
		by using $0\le W_{ki}\le 1$ and $\sum_k W_{ki} = 1$ for any $k\in[K]$ and $i\in [n]$. 
		\item[(c)] For any $j\in [p]$ and $k\in [K]$, define 
		\begin{equation}\label{def_psi}
		\psi_{jk} = \sum_{a=1}^K A_{ja}C_{ak} \text{ with } C= n^{-1} WW^\top.
		\end{equation}
		We have 
		\begin{equation}\label{eq_psi_jk}\sum_{j=1}^p \psi_{jk} = \sum_{j =1}^p \sum_{a=1}^KA_{ja}C_{ak} = \sum_{a=1}^K C_{ak} =   {1\over n}\sum_{t = 1}^n\sum_{a=1}^KW_{kt}W_{at}
		\overset{(\ref{orig_sum_to_one})}{ =} {1\over n}\sum_{t = 1}^nW_{kt} \overset{(\ref{def_uag})}{= }{\g_k \over K}.
		\end{equation}
	\end{itemize}

	\subsection{Useful results from \cite{Top}}\label{sec_proofs_Top}
	Let $\eps_{ji} := X_{ji} - \M_{ji}$, for $1\le i\le n$ and $1\le j\le p$ and assume $N_1 =\ldots = N_n = N$ for ease of presentation since similar results for different $N$ can be derived by using the same arguments. 
	
	\begin{lemma}\label{lem_t1}
		With probability $1-2d^{-1}$, we have
				\begin{equation}\label{eq_lem_t1}
			{1\over n}\left|\sum_{i =1}^n \eps_{ji} \right| > 2{\sqrt{\u_j\log(d)\over npN}} + {4\log(d) \over nN},	\quad \text{uniformly in $1\le j\le p$.}
				\end{equation}
		If  $\min_{1\le j\le p}\u_j/p\ge \log(d)/(nN)$ holds, with probability $1-2d^{-1}$,
		\[
		{1\over n}\left|\sum_{i =1}^n \eps_{ji} \right| \le 6\sqrt{\u_j\log(d)\over npN},	\quad \text{uniformly in $1\le j\le p$.}
		\]
	\end{lemma}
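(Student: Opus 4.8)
The plan is to recognize the two displays as a Bernstein tail bound for a sum of independent bounded increments, made uniform over the $p$ words by a union bound, with the second (sharper) inequality following from the first by an algebraic simplification on the same event. The first thing I would do is make the multinomial sampling explicit: for each document $i$ represent the $N$ sampled tokens as i.i.d.\ categorical draws $Z_{i1},\dots,Z_{iN}$ with $\PP(Z_{i\ell}=j)=\M_{ji}$, so that $X_{ji}=N^{-1}\sum_{\ell=1}^N\1\{Z_{i\ell}=j\}$ and hence
\[
\frac1n\sum_{i=1}^n\eps_{ji}=\frac1{nN}\sum_{i=1}^n\sum_{\ell=1}^N\bigl(\1\{Z_{i\ell}=j\}-\M_{ji}\bigr).
\]
For a \emph{fixed} coordinate $j$ the $nN$ summands are independent across both $i$ and $\ell$, centered, bounded by $1$ in absolute value, with variances $\M_{ji}(1-\M_{ji})\le\M_{ji}$, so their total variance is at most $\sum_{i=1}^nN\M_{ji}=nN\u_j/p$ by the definition of $\u_j$. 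Bernstein's inequality then gives, for every $t>0$,
\[
\PP\left(\Bigl|\frac1n\sum_{i=1}^n\eps_{ji}\Bigr|>t\right)\le 2\exp\left(-\frac{(nNt)^2/2}{\,nN\u_j/p+nNt/3\,}\right).
\]

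Next I would calibrate $t$: forcing the exponent to be at least $2\log d$ is a quadratic inequality in $nNt$, and using $\sqrt{a+b}\le\sqrt a+\sqrt b$ one checks that it suffices to take $nNt\ge\tfrac43\log d+2\sqrt{(nN\u_j/p)\log d}$, i.e.\ $t\ge 2\sqrt{\u_j\log d/(npN)}+\tfrac43\log d/(nN)$, which is implied by the threshold in the first display since $4/3<4$. Hence each word violates the bound with probability at most $2d^{-2}$, and since $p\le d$ a union bound over $j\in[p]$ gives the first claim with probability at least $1-2pd^{-2}\ge 1-2d^{-1}$. For the second inequality, under $\min_j\u_j/p\ge\log d/(nN)$ one has $\log d/(nN)\le\sqrt{(\u_j/p)(\log d/(nN))}=\sqrt{\u_j\log d/(npN)}$ for every $j$, so the additive term is absorbed into the sub-Gaussian term and $2\sqrt{\u_j\log d/(npN)}+4\log d/(nN)\le 6\sqrt{\u_j\log d/(npN)}$ on the same event.

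The only real work is the constant chase in the Bernstein step (the quadratic in $nNt$ and the square-root split), so I do not expect a genuine obstacle. The one point that must be stated carefully is that, although within a document the token counts across the $p$ words are coupled by the multinomial constraint, for a single fixed coordinate $j$ the indicators $\1\{Z_{i\ell}=j\}$ are i.i.d.\ Bernoulli$(\M_{ji})$, so the representation as a sum of $nN$ independent terms — and the resulting variance proxy $nN\u_j/p$, which is what produces the $\sqrt{nN}$ rate — is legitimate. (I read the left-hand side of the first display as being bounded \emph{above} by the stated threshold.)
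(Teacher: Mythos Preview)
Your proof is correct and is precisely the standard Bernstein-plus-union-bound argument one expects here. Note that the paper does not actually prove this lemma: it is simply restated from \cite{Top} in the section ``Useful results from \cite{Top}'', so there is no proof in the paper to compare against --- but the argument in \cite{Top} is exactly the one you sketch (decompose the multinomial into $nN$ independent Bernoulli increments, apply Bernstein with variance proxy $nN\mu_j/p$, calibrate the deviation level to make the exponent $2\log d$, and take a union bound over the $p$ words). You also correctly flagged that the ``$>$'' in the first display is a typo for ``$\le$''.
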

		
	\begin{lemma}\label{lem_t2} Recall $\Theta=n^{-1} \Pi \Pi^\top$.
		With probability $1-2d^{-1}$,
		\[
		{1\over n}\left|\sum_{i =1}^n \M_{\ell i}\eps_{ji} \right| \le {\sqrt{6m_\ell\Theta_{j\ell}\log(d)\over npN}}+{2m_\ell \log(d) \over  npN},	\quad \text{uniformly in  $1\le j, \ell \le p$.}
		\]
	\end{lemma}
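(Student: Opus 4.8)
The plan is to recognize $\sum_{i=1}^n \M_{\ell i}\eps_{ji}$ as a sum of $nN$ independent, mean-zero, bounded random variables and then invoke Bernstein's inequality, closing with a union bound over the pairs $(j,\ell)$. First I would unpack the sampling mechanism in (\ref{model_multinomial}): since $NX_{ji}$ counts how often word $j$ appears among the $N$ word draws of document $i$, we may write $NX_{ji}=\sum_{r=1}^N Y_{ir}$, where $Y_{ir}$ is the indicator that the $r$-th sampled word of document $i$ equals $j$, so that the $Y_{ir}$ are independent across $r\in[N]$ and $i\in[n]$ with $Y_{ir}\sim\text{Bernoulli}(\M_{ji})$ (only coordinate $j$ is ever used, and across documents the $X_i$ are independent, so this representation is harmless). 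Because $\M_{\ell i}$ is deterministic, this gives $\sum_{i=1}^n \M_{\ell i}\eps_{ji}=\sum_{i=1}^n\sum_{r=1}^N U_{ir}$ with $U_{ir}:=N^{-1}\M_{\ell i}(Y_{ir}-\M_{ji})$, a sum of $nN$ independent mean-zero terms.

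Next I would record the two quantities Bernstein needs. For the uniform bound, $|U_{ir}|\le \M_{\ell i}/N\le m_\ell/(pN)=:M$, using $\M_{\ell i}\le p^{-1}m_\ell$ from (\ref{def_uag}). For the variance, $\EE[U_{ir}^2]=N^{-2}\M_{\ell i}^2\M_{ji}(1-\M_{ji})\le N^{-2}\M_{\ell i}^2\M_{ji}$, and summing,
\[
\sum_{i=1}^n\sum_{r=1}^N\EE[U_{ir}^2]\;\le\;\frac1N\sum_{i=1}^n\M_{\ell i}^2\M_{ji}\;\le\;\frac{m_\ell}{pN}\sum_{i=1}^n\M_{\ell i}\M_{ji}\;=\;\frac{n\,m_\ell\,\Theta_{j\ell}}{pN}\;=:\;v,
\]
since $\Theta_{j\ell}=n^{-1}\sum_{i=1}^n\M_{ji}\M_{\ell i}$ by definition of $\Theta=n^{-1}\Pi\Pi^\top$. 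Bernstein's inequality then yields $\PP(|\sum_{i,r}U_{ir}|>t)\le 2\exp(-t^2/(2v+2Mt/3))$ for every $t>0$; choosing $t=\sqrt{6v\log d}+2M\log d$ makes the exponent at most $-3\log d$ (one checks $t^2\ge 3\log d\,(2v+2Mt/3)$ directly), so this probability is at most $2d^{-3}$. Dividing by $n$ on the complementary event gives exactly $\tfrac1n|\sum_i\M_{\ell i}\eps_{ji}|\le\sqrt{6m_\ell\Theta_{j\ell}\log d/(npN)}+2m_\ell\log d/(npN)$, and a union bound over the at most $p^2\le d^2$ pairs $(j,\ell)\in[p]^2$ turns the per-pair probability $2d^{-3}$ into the advertised $2d^{-1}$, uniformly in $j,\ell$.

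There is no serious obstacle here; the one point that requires care is the very first step, namely splitting each document's contribution into its $N$ independent word draws. That is what produces $nN$ summands of size $O(m_\ell/(pN))$, and hence the $1/N$ inside the linear term $2m_\ell\log d/(npN)$: bounding each $\M_{\ell i}\eps_{ji}$ crudely as a single variable of magnitude $\le m_\ell/p$ would inflate the Bernstein term by a factor of $N$. A minor secondary point is simply observing that the weights $\M_{\ell i}$ are non-random, so no conditioning or chaining is needed and the whole argument is a one-shot Bernstein bound.
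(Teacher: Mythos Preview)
Your argument is correct. The paper itself does not prove this lemma: it is restated verbatim from \cite{Top} in Section~\ref{sec_proofs_Top} as one of the ``useful results'' imported without proof. Your decomposition of each $\eps_{ji}$ into $N$ independent word-draw summands is exactly the representation the paper invokes elsewhere (see \eqref{decomp_eps} in the proof of Lemma~\ref{lem_Rem_11}), and the subsequent Bernstein-plus-union-bound step with the choice $t=\sqrt{6v\log d}+2M\log d$ reproduces the stated constants precisely, so your proof is in line with the intended one.
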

	
	\begin{lemma}\label{lem_t4}
		If $\min_{1\le j\le p}\u_j/p \ge 2\log(d) / (3N)$, then with probability $1-4d^{-1}$,
		\[
		{1\over n}\left|\sum_{i =1}^n\left(\eps_{ji}\eps_{\ell i} - \EE[\eps_{ji}\eps_{\ell i}]\right) \right| \le 12\sqrt{6}\sqrt{\Theta_{j\ell}+{(\u_j + \u_{\ell})\log(d) \over pN}}\sqrt{\log^3(d) \over nN^2} + 4d^{-3},
		\]
		holds, uniformly in  $1\le j,\ell\le p$.
	\end{lemma}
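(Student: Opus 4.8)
\emph{Proof proposal.} The plan is to fix a pair $(j,\ell)$, write the centered sum $n^{-1}\sum_{i=1}^n(\eps_{ji}\eps_{\ell i}-\EE[\eps_{ji}\eps_{\ell i}])$ as a normalized sum of independent bounded random variables, apply Bernstein's inequality after a mild truncation that tames the almost-sure bound, and finish with a union bound over the at most $d^2$ pairs $(j,\ell)$. Set $Z_i := \eps_{ji}\eps_{\ell i} - \EE[\eps_{ji}\eps_{\ell i}]$. Because $X_1,\ldots,X_n$ are independent, the $Z_i$ are independent and mean-zero, and $|Z_i|\le 2$ since $|\eps_{ji}| = |X_{ji}-\M_{ji}|\le 1$.

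The heart of the argument is a sharp bound on $n^{-1}\sum_i\EE[Z_i^2]\le n^{-1}\sum_i\EE[\eps_{ji}^2\eps_{\ell i}^2]$. Since $NX_{ji}$ is $\text{Binomial}(N,\M_{ji})$ marginally, a Chernoff bound plus a union bound over the $np\le d^2$ indices $(i,j)$ produce an event $\mathcal E$ with $\PP(\mathcal E)\ge 1-O(d^{-1})$ on which $|\eps_{ji}|\le \delta_{ji} := c'(\sqrt{\M_{ji}\log(d)/N} + \log(d)/N)$ for all $i,j$, where $c'$ is large enough that the per-index exception probability is at most $d^{-3}$; note that the hypothesis $\min_j\u_j/p\ge 2\log(d)/(3N)$, combined with $\u_j/p = n^{-1}\sum_i\M_{ji}\le 1$, forces $\log d\lesssim N$, so $\delta_{ji}\lesssim \sqrt{\M_{ji}\log(d)/N}\vee(\log(d)/N)$. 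Writing $\mathcal E_i$ for the per-document event and $\tilde Z_i := Z_i\,\mathbf{1}\{\mathcal E_i\}$, on $\mathcal E_i$ we have $\eps_{ji}^2\eps_{\ell i}^2\le \delta_{\ell i}^2\,\eps_{ji}^2$, so $\EE[\tilde Z_i^2]\le \delta_{\ell i}^2\,\EE[\eps_{ji}^2]\le \delta_{\ell i}^2\,\M_{ji}/N$ (and symmetrically in $j,\ell$). Summing over $i$, dividing by $n$, and using $n^{-1}\sum_i\M_{ji}=\u_j/p$ together with $n^{-1}\sum_i\M_{ji}\M_{\ell i}=\Theta_{j\ell}$ yields $n^{-1}\sum_i\EE[\tilde Z_i^2]\lesssim N^{-2}\bigl(\Theta_{j\ell}+(\u_j+\u_\ell)\log(d)/(pN)\bigr)\log d$, while on $\mathcal E_i$ the summands obey $|\tilde Z_i-\EE\tilde Z_i|\lesssim \delta_{ji}\delta_{\ell i}\lesssim \log(d)/N$.

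It then remains to pass from $\tilde Z_i$ back to $Z_i$ and apply Bernstein. The truncation bias is negligible: $|\EE[Z_i]-\EE[\tilde Z_i]| = |\EE[Z_i\,\mathbf{1}\{\mathcal E_i^c\}]|\le 2\,\PP(\mathcal E_i^c)\le 4d^{-3}$, so $|n^{-1}\sum_i(\EE Z_i-\EE\tilde Z_i)|\le 4d^{-3}$; moreover on the global event $\bigcap_i\mathcal E_i$ one has $Z_i=\tilde Z_i$ for every $i$. Bernstein's inequality applied to $\sum_i(\tilde Z_i-\EE\tilde Z_i)$, with the variance and almost-sure bounds above and the deviation level tuned so the failure probability is at most $d^{-3}$, gives — on an event of probability at least $1-d^{-3}$ — a bound of order $\sqrt{6\log d}$ times the square root of the variance proxy, plus a lower-order term governed by the almost-sure bound. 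Intersecting with $\mathcal E$, taking a union bound over the $\le d^2$ pairs $(j,\ell)$, and absorbing the lower-order Bernstein term gives the stated inequality uniformly in $j,\ell$, with the residual $4d^{-3}$ coming from the truncation bias, on an event of probability $1-O(d^{-1})$ (recorded in the statement as $1-4d^{-1}$).

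The step I expect to be the main obstacle is the variance bound: obtaining the sharp dependence on $\Theta_{j\ell}$ — rather than a coarser $m_j$- or $\u_j/N$-type quantity that the crude bound $|\eps_{ji}|\le 1$ would force — hinges on keeping the product $\M_{ji}\M_{\ell i}$ intact throughout, which is exactly what truncating $\eps_{\ell i}$ at level $\delta_{\ell i}$ achieves, and then on bookkeeping the two pieces of $\delta_{ji}$ so that the logarithmic powers and numerical constants come out as in the statement. The remaining computations (the binomial Chernoff tail, the Bernstein step, the union bounds) are routine.
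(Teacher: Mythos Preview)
Your truncation strategy and the handling of the bias term are exactly right, and they match the approach the paper borrows from \cite{Top} (the same argument is spelled out in the paper's proof of Lemma~\ref{lem_Rem_13}): set per-coordinate thresholds $T_{ji}\asymp\sqrt{\M_{ji}\log(d)/N}+\log(d)/N$, work with the truncated variables $Y_{ji}=\eps_{ji}\,1\{|\eps_{ji}|\le T_{ji}\}$, control the truncation bias by $|\EE[\eps_{ji}\eps_{\ell i}]-\EE[Y_{ji}Y_{\ell i}]|\le 4d^{-3}$, and then apply a concentration inequality to the truncated sum.

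The gap is in your choice of concentration inequality. Bernstein produces a deviation $\sqrt{2vt}/n+Bt/(3n)$ with a \emph{uniform} almost-sure bound $B$; here $B\asymp\max_i\delta_{ji}\delta_{\ell i}\asymp\log(d)/N$ (since $\M_{ji}\le 1$), so the residual term is of order $\log^2(d)/(nN)$. For this to be dominated by the target $\sqrt{(\Theta_{j\ell}+(\u_j+\u_\ell)\log(d)/(pN))\,\log^3(d)/(nN^2)}$ one needs $\log(d)/n\lesssim\Theta_{j\ell}+(\u_j+\u_\ell)\log(d)/(pN)$; under the sole hypothesis $\min_j\u_j/p\ge 2\log(d)/(3N)$ the right side can be as small as $O(\log^2(d)/N^2)$, which forces $N^2\lesssim n\log(d)$ --- not an assumption of the lemma (take, e.g., $N=n$). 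So the ``lower-order Bernstein term'' cannot be absorbed in general.

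The paper's route avoids this by applying Hoeffding's inequality (Lemma~\ref{hoeff}) with the \emph{non-uniform} ranges $[-2T_{ji}T_{\ell i},\,2T_{ji}T_{\ell i}]$. The point is that Hoeffding's bound depends on $n^{-1}\sum_i T_{ji}^2T_{\ell i}^2$ rather than on $(\max_i T_{ji}T_{\ell i})^2$, and a direct expansion gives
\[
{1\over n}\sum_{i=1}^n T_{ji}^2T_{\ell i}^2\ \lesssim\ {\log^2(d)\over N^2}\,\Theta_{j\ell}\;+\;{\log^3(d)\over N^3}\,{\u_j+\u_\ell\over p}\;+\;{\log^4(d)\over N^4},
\]
with the last term absorbed by the second under $\u_j/p\ge 2\log(d)/(3N)$. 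Taking $t=6\log(d)$ in Hoeffding and a union bound over the $\le d^2$ pairs yields the stated inequality in one stroke, with $4d^{-3}$ coming solely from the truncation bias. Your proof becomes correct upon swapping Bernstein for Hoeffding at the final step.
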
	
	
	\begin{lemma}\label{lem_delta}
		Assume model (\ref{model}) and 
		\begin{equation}\label{ass_signal_weak}
		\min_{1\le j\le p} {1\over n}\sum_{i = 1}^n \M_{ji} \ge {c\log(d) \over nN}
		\end{equation}
		for some sufficiently large constant $c>0$. With probability greater than $1- O(d^{-1})$,
		\[
		|\wh \Theta_{j\ell} - \Theta_{j\ell}| \le c_0\eta_{j\ell},\qquad |\wh R_{j\ell} - R_{j\ell}| \le c_1\delta_{j\ell},\quad \text{	for all $1\le  j,\ell \le p$}
		\]
		for some constant $c_0, c_1>0$, where
		\begin{align}\label{def_eta_wt}\nonumber
		\eta_{j\ell} & = \sqrt{\Theta_{j\ell}\log(d)\over nN}\sqrt{{m_j+m_\ell \over p} \vee {\log^2(d) \over N} }+{(m_j+m_\ell)\over p}{ \log(d) \over nN}\\
		&\qquad + \sqrt{\log^4(d) \over nN^3}\sqrt{{\u_j + \u_{\ell}\over p} \vee {\log(d) \over N}}
		\end{align}
		and 
		\begin{equation}\label{delta_wt}
		\delta_{j\ell} := {p^2\eta_{j\ell} \over \u_j \u_{\ell}} +{p^2\Theta_{j\ell}\over \u_j\u_{\ell}}\left(\sqrt{p \over \mu_j} + \sqrt{p\over \mu_\ell}\right) \sqrt{\log(d) \over nN}.
		\end{equation}
	\end{lemma}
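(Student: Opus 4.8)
This lemma is taken from \cite{Top}; the plan is to derive both bounds by assembling the one-term deviation estimates of Lemmas~\ref{lem_t1}--\ref{lem_t4}. For the bound on $\wh\Theta-\Theta$, write $\eps_{ji}=X_{ji}-\M_{ji}$, so that $X_{ji}X_{\ell i}=\M_{ji}\M_{\ell i}+\M_{ji}\eps_{\ell i}+\M_{\ell i}\eps_{ji}+\eps_{ji}\eps_{\ell i}$. Substituting this into the unbiased estimator (\ref{est_Theta}) and using $\EE\wh\Theta=\Theta$, the error $\wh\Theta_{j\ell}-\Theta_{j\ell}$ splits into (i) the two cross terms $n^{-1}\sum_i\M_{\ell i}\eps_{ji}$ and $n^{-1}\sum_i\M_{ji}\eps_{\ell i}$; (ii) the centered quadratic term $n^{-1}\sum_i(\eps_{ji}\eps_{\ell i}-\EE[\eps_{ji}\eps_{\ell i}])$; and (iii) lower-order corrections coming from the $N/(N-1)$ rescaling and, on the diagonal $j=\ell$, the term $-(N-1)^{-1}X_{ji}$.

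Next I would bound (i) by Lemma~\ref{lem_t2}: symmetrizing in $j,\ell$ and using $m_j\vee m_\ell\le m_j+m_\ell$, this piece is $\lesssim\sqrt{\Theta_{j\ell}\log(d)/(nN)}\,\sqrt{(m_j+m_\ell)/p}+(m_j+m_\ell)\log(d)/(pnN)$; and bound (ii) by Lemma~\ref{lem_t4}, which after $\sqrt{a+b}\le\sqrt a+\sqrt b$ contributes $\lesssim\sqrt{\Theta_{j\ell}\log(d)/(nN)}\,\sqrt{\log^2(d)/N}+\sqrt{\log^4(d)/(nN^3)}\,\sqrt{(\u_j+\u_\ell)/p}$, with the $O(d^{-3})$ remainder absorbed. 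Adding the two $\sqrt{\Theta_{j\ell}\log(d)/(nN)}$-type contributions and bounding a sum of two nonnegative quantities by their maximum yields the first term of $\eta_{j\ell}$ in (\ref{def_eta_wt}); the second and third terms of $\eta_{j\ell}$ are the remaining pieces, the $\vee\,\log^2(d)/N$ and $\vee\,\log(d)/N$ branches together with the signal condition (\ref{ass_signal_weak}) covering the low-frequency regime, which is also where the corrections in (iii) are dominated by $\eta_{j\ell}$. Since Lemmas~\ref{lem_t2}--\ref{lem_t4} are already uniform over all pairs $(j,\ell)$, a union bound gives $|\wh\Theta_{j\ell}-\Theta_{j\ell}|\le c_0\eta_{j\ell}$ simultaneously, with probability $1-O(d^{-1})$.

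For $\wh R-R$, recall that $(D_X)_{jj}=\wh\u_j/p=n^{-1}\sum_i X_{ji}$ with $\wh\u_j$ as in (\ref{mu_hat}), so that $\wh R_{j\ell}=p^2\wh\Theta_{j\ell}/(\wh\u_j\wh\u_\ell)$ while $R_{j\ell}=p^2\Theta_{j\ell}/(\u_j\u_\ell)$. I would split
\[
\wh R_{j\ell}-R_{j\ell}=\frac{p^2(\wh\Theta_{j\ell}-\Theta_{j\ell})}{\wh\u_j\wh\u_\ell}+p^2\Theta_{j\ell}\,\frac{\u_j\u_\ell-\wh\u_j\wh\u_\ell}{\u_j\u_\ell\,\wh\u_j\wh\u_\ell}
\]
and control the denominators with the second part of Lemma~\ref{lem_t1}: under (\ref{ass_signal_weak}), $|\wh\u_j-\u_j|=p\,|n^{-1}\sum_i\eps_{ji}|\lesssim\sqrt{p\,\u_j\log(d)/(nN)}\ll\u_j$ uniformly in $j$, hence $\wh\u_j\asymp\u_j$. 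Combined with the bound just established on $\wh\Theta_{j\ell}-\Theta_{j\ell}$, the first summand is $\lesssim p^2\eta_{j\ell}/(\u_j\u_\ell)$; writing $\u_j\u_\ell-\wh\u_j\wh\u_\ell=\u_j(\u_\ell-\wh\u_\ell)+\wh\u_\ell(\u_j-\wh\u_j)$ and inserting the bound on $|\wh\u_j-\u_j|$ makes the second summand $\lesssim\frac{p^2\Theta_{j\ell}}{\u_j\u_\ell}\big(\sqrt{p/\u_j}+\sqrt{p/\u_\ell}\big)\sqrt{\log(d)/(nN)}$. These are precisely the two terms of $\delta_{j\ell}$ in (\ref{delta_wt}), and a final union bound gives $|\wh R_{j\ell}-R_{j\ell}|\le c_1\delta_{j\ell}$ for all $j,\ell$. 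I expect the main obstacle to be the bookkeeping: matching the exact $\vee$-structure of $\eta_{j\ell}$ while verifying that the $N/(N-1)$ and diagonal corrections in (iii), and the reciprocal-of-random-denominator perturbation of $D_X$, are genuinely dominated by $\eta_{j\ell}$ and $\delta_{j\ell}$ --- which is exactly where the ``sufficiently large constant'' in (\ref{ass_signal_weak}) is used.
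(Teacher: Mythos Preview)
The paper does not give its own proof of this lemma: it is listed in Section~\ref{sec_proofs_Top} under ``Useful results from \cite{Top}'' and is merely restated for the reader's convenience. Your outline is exactly the natural assembly of the restated Lemmas~\ref{lem_t1}--\ref{lem_t4}, and is presumably what the original reference does: the decomposition of $\wh\Theta_{j\ell}-\Theta_{j\ell}$ into cross, centered-quadratic and diagonal pieces is correct, the identification of the two pieces of $\delta_{j\ell}$ via the splitting of $\wh R_{j\ell}-R_{j\ell}$ and the ratio control $\wh\mu_j\asymp\mu_j$ under (\ref{ass_signal_weak}) is correct, and the bookkeeping you flag is indeed the only place where care is needed.

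One point worth tightening: Lemma~\ref{lem_t4} as restated carries the hypothesis $\min_j\mu_j/p\ge 2\log(d)/(3N)$, which is strictly stronger than (\ref{ass_signal_weak}). Your remark that ``the $\vee\,\log^2(d)/N$ and $\vee\,\log(d)/N$ branches \dots covering the low-frequency regime'' is the right intuition, but as written you would need either a version of Lemma~\ref{lem_t4} valid under (\ref{ass_signal_weak}) (with the $\vee$ already built in), or a separate crude bound for the quadratic term when $\mu_j/p$ is small. This is not a gap in the strategy, only in the citation; the $\vee$ structure of $\eta_{j\ell}$ is precisely designed to absorb whichever regime obtains.
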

	
	\subsection{Proof of Theorem \ref{thm_lb} in Section \ref{sec_lower_bound}}
	We first choose $\{I_1, \ldots, I_K\}$ such that $||I_k| - |I_{k'}|| \le 1$ for any $k, k' \in [K]$. This also implies $|I_k| \le 2|I|/ K$. Further choose the integer set $\{g_1, \ldots, g_K\}$ such that $\sum_{k = 1}^Kg_k = s_J$ and $|g_k - g_{k'}| \le 1$ for any $k, k' \in [K]$, further implying $g_k \le 2s_J / K$. We first choose $A^{(0)}$. 
	Let 
		\begin{equation}\label{def_A0}
		\wt A^{(0)} = 
		\begin{bmatrix}
		\1_{|I_1|} &  & & \\
		& \bm{1}_{|I_2|} &  & \\
		& & \ddots & \\
		& & & \bm{1}_{|I_K|}\\
		\wt \1_{g_1} & \wt \1_{g_2} & \cdots & \wt \1_{g_K}
		\end{bmatrix}
		\end{equation}
		where, for any $k\in [K]$, $\wt \1_{g_k} = \1_{g_k}$ if $g_k = |J|$ and $\wt \1_{g_k} = (\1_{g_k}^\top , 0^\top )^\top $ otherwise. We then set 
		\[
				A^{(0)} = \wt A^{(0)} \begin{bmatrix}
					{1\over |I_1| + g_1} &  & & \\
					& {1\over |I_2| + g_2}  &  & \\
					& & \ddots & \\
					& & & {1\over |I_K| + g_K} \\
				\end{bmatrix}.
		\] 
		We start by constructing a set of ``hypotheses'' of $A$. Assume $|I_k| + g_k$ is even for $1\le k\le K$. Let 
		\[
		\mathcal{M} := \{0,1\}^{(|I|+s_J)/2}.
		\]
		Following the Varshamov-Gilbert bound in Lemma 2.9 in \cite{np_sasha}, there exists $w^{(j)}\in \mathcal{M}$ for $j=0,1,\ldots, T$, such that 
		\begin{equation}\label{eq_w}
		\left\|w^{(i)} - w^{(j)}\right\|_1 \ge {|I|+s_J\over 16},\quad \text{for any } 0\le i\ne j\le T,
		\end{equation}
		with $w^{(0)} = 0$ and	
		\begin{equation}\label{eq_T}
		\log (T) \ge {\log (2) \over 16}(|I|+s_J).
		\end{equation}
		For each $w^{(j)} \in \mathcal{M} $, we divide it into $K$ chunks as
		$
		w^{(j)} = \left(w^{(j)}_1,w^{(j)}_2, \ldots, w^{(j)}_K \right)
		$ 
		with $w^{(j)}_k \in \R^{(|I_k| + g_k)/2}$. 
		For each $w^{(j)}_k$, we write $\wt w^{(j)}_k \in \R^{p}$ as its augumented counterpart such that 
		$[\wt w^{(j)}_k]_{S_k} = [w^{(j)}_k, - w^{(j)}_k]$ and $[\wt w^{(j)}_k]_{\ell} = 0$ for any $\ell \notin S_k$, where $S_k := \textrm{supp}(A^{(0)}_k)$.
		For $1\le j\le T$, we choose $A^{(j)}$ as 
		\begin{equation}\label{eq_Aj}
		A^{(j)} = A^{(0)}+\gamma\begin{bmatrix}
		\wt w^{(j)}_1 & \cdots & \wt w^{(j)}_K
		\end{bmatrix}
		\end{equation}
		with 
		\begin{equation}\label{eq_gamma}
		\gamma = \sqrt{\log (2)\over 4^5(1+c_0)}\sqrt{ K^2 \over nN(|I|+ s_J)}
		\end{equation}
		for some constant $c_0>0$. Under $|I| + s_J \le nN$, it is easy to verify that $A^{(j)} \in \A(|I|, s_J)$ for all $0\le j\le T$.
		
		In order to apply Theorem 2.5 in \cite{np_sasha},  we need to check the following  conditions:
		\begin{itemize}[topsep = 3mm, itemsep = 0ex]
			\item[(a)]  $\KL(\PP_{A^{(j)}},\PP_{A^{(0)}}) \le \log(T)/16$, for each $i= 1,\ldots, T$.
			\item[(b)] $\|A^{(i)}- A^{(j)}\|_1\ge c_1K\sqrt{(|I|+s_J)/(nN)}$, for $0\le i<j\le T$ and some constant $c_1>0$.
		\end{itemize}
		We first show part (a). Fix $1\le j\le T$ and choose $D^{(j)} = A^{(j)}W^{0}$
		where $W^0$ is defined in (\ref{def_W0}). Let $m_k$ be the set such that $|m_k| = n_k$ and
		$
				W_i^0 = e_k,
		$
		for all $i \in m_k$ and $k\in [K]$. 
		Since $|I_k| + g_k \le 2(|I| + s_J)/K$, it follows that
		\begin{equation}\label{eq_d0}
		D^{(0)}_{\ell i} = \sum_{k =1}^KA^{(0)}_{\ell k} W_{ki}^0 = \left\{\begin{array}{ll}
		1 / (|I_k|+g_k) \ge  2^{-1}K/ (|I|+s_J),& \text{ if }\ell \in S_k, i\in m_k, k\in [K]\\
		0, & \text{ otherwise}
		\end{array}\right..
		\end{equation}
		for any $i\in[n]$ and $\ell \in [p]$. Similarly, we have 
		\begin{equation}\label{eq_djd0}
		\left|D^{(j)}_{\ell i}- D^{(0)}_{\ell i}\right| =   \gamma\left|\sum_{k =1}^K [\wt w_k^{(j)}]_{\ell}W_{ki}^{0}\right| \le 
		\left\{\begin{array}{ll}
		\gamma,& \text{ if }\ell \in S_k, i\in m_k, k\in [K]\\
		0,& \text{ otherwise}
		\end{array}\right..
		\end{equation}
		Thus, by $|I| + s_J \le nN$, we have
		$$
		\max_{(\ell,i)\in \mathcal{T}^c}{|D^{(j)}_{\ell i} - D_{\ell i}^{(0)}| \over D_{\ell i}^{(0)}} \le 2\gamma{|I|+s_J \over K} < 1, \qquad \text{for any $1\le j\le T$}
		$$
		where $\mathcal{T} := \{ (\ell , i) \in [p]\times [n]: D_{\ell i}^{(0)} = 0\}$ and $\mathcal{T}^c := [p]\times [n]\setminus \mathcal{T}$.
		Observe that $D^{(j)}_{\ell i} = 0$ for any $(\ell , i)\in \mathcal{T}$ and $1\le j\le T$, and invoke Lemma \ref{lem_KL} to get 
		\begin{align*}
		\KL(\PP_{A^{(j)}}, \PP_{A^{(0)}}) &~\le~ \left(1+c_0\right)N\sum_{(\ell,i)\in \mathcal{T}}{|D^{(j)}_{\ell i} - D^{(0)}_{\ell i}|^2 \over D^{(0)}_{\ell i}}\\
		&~\le ~
		\left(1+c_0\right)N\sum_{k=1}^K\sum_{i \in m_k}\sum_{\ell\in S_k}\g^2 (|I_k| + g_k)\\
		&~=~ \left(1+c_0\right)N\sum_{k=1}^K\sum_{i \in m_k}\g^2 (|I_k| + g_k)^2 \qquad (\text{by }|S_k| = |I_k| + g_k)\\
		&~\le ~ 4\left(1+c_0\right)Nn\g^2 {(|I| + s_J)^2 \over K^2}\\
		&\overset{(\ref{eq_T})}{\le} {1\over 16}\log T.
		\end{align*}
		The second inequality uses (\ref{eq_d0}) and (\ref{eq_djd0}) and the fourth line uses $|I_k| + g_k \le 2(|I|+s_J)/K$. This verifies (a). \\
		
		\noindent 
		To show (b),  (\ref{eq_Aj}) yields
		\begin{align*}
		\|A^{(j)}, A^{(\ell)}\|_1&= \sum_{k =1}^K\left\|A^{(j)}_{\cdot k}-A^{(\ell)}_{\cdot k}\right\|_1\\
		&= 2\gamma \sum_{k =1}^K\left\|w^{(j)}_k-w^{(\ell)}_k\right\|_1\\
		&= 2\gamma \left\|w^{(j)}-w^{(\ell)}\right\|_1\\
		&\overset{(\ref{eq_w})}{\ge} {\gamma \over 8}(|I| + s_J).
		\end{align*}
		After we plug this into the expression of $\gamma$, we obtain (b). Invoking  \cite[Theorem 2.5]{np_sasha} concludes the proof when  $|I_k| + g_k$ is even for all $k\in [K]$. The complementary case is easy to derive with slight modifications. Specifically, denote by $\S_{odd} := \{1\le k\le K: |I_k| + g_k\text{ is odd}\}$. Then we change 
		$
		\mathcal{M}:= \{0, 1\}^{Card}
		$
		with 
		$$
		Card = \sum_{k \in \S_{odd}} {|I_k| + g_k - 1\over 2} + \sum_{k \in \S_{odd}^c} {|I_k| + g_k \over 2}.
		$$
		For each $w^{(j)}$, we write it as $w^{(j)} = (w^{(j)}_1, \ldots, w^{(j)}_K)$ and each $w^{(j)}_k$ has length $(|I_k|+g_k- 1) / 2$ if $k\in \S_{odd}$ and $(|I_k| + g_k) / 2$ otherwise. 	We then construct $
		A^{(j)}_k = A_k^{(0)} + \gamma \wt w^{(j)}_k
		$
		where $\wt w^{(j)}_k\in \R^{p}$ is the same augumented ccounterpart of $w^{(j)}_k$. The result follows from the same arguments and the proof is complete.
		\qed 
	\\
	
	The upper bound of Kullback-Leibler divergence between two multinomial distributions is studied in  \cite[Lemma 6.7]{Tracy}. We   use the following modification of their bound.
	\begin{lemma}\label{lem_KL}
		Let $D$ and $D'$ be two $p\times n$ matrices such that each column of them is a weight vector. Under model (\ref{model}), let $\PP$ and $\PP'$ be the probability measures associated with $D$ and $D'$, respectively. Let $\mathcal{T}$ be the set such that 
		\[
		\mathcal{T} := \left\{(j, i) \in [p] \times [n]: D_{ji} = D'_{ji} =  0\right\}
		\] 
		Let  $\mathcal{T}^c := ([p]\times [n]) \setminus \mathcal{T}$ and 
		$$\eta = \max_{(j, i) \in \mathcal{T}^c}{|D'_{ji} - D_{ji}| \over D_{ji}}$$ and assume $\eta <1$. There exists a universal constant $c_0>0$ such that 
		\[
		\KL(\PP',\PP) \le (1+c_0\eta)N\sum_{(j, i)\in \mathcal{T}^c}{|D'_{ji} - D_{ji}|^2 \over D_{ji}}.
		\]
	\end{lemma}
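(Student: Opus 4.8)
The plan is to reduce the $n$-document divergence to a sum of single-document multinomial KL divergences, evaluate each of those exactly, and then control each single-document contribution by an elementary second-order inequality after subtracting off a first-order term that vanishes by the sum-to-one constraints.

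First I would use that the documents are independent under both $\PP$ and $\PP'$, together with the fact that $X_i \mapsto N X_i$ is a measurable bijection (so the $1/N$ rescaling in the model does not change any KL divergence), to write
\[
\KL(\PP',\PP) = \sum_{i=1}^n \KL\!\bigl(\mathrm{Multinomial}_p(N,D'_{\cdot i})\,,\, \mathrm{Multinomial}_p(N,D_{\cdot i})\bigr).
\]
Since the multinomial coefficient cancels in the log-likelihood ratio, the single-document term is, with the convention $0\log 0 = 0$,
\[
\KL\!\bigl(\mathrm{Multinomial}_p(N,q),\, \mathrm{Multinomial}_p(N,p)\bigr) = N\sum_{j=1}^p q_j\log(q_j/p_j),
\]
applied with $q = D'_{\cdot i}$ and $p = D_{\cdot i}$. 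I would note here that the hypothesis $\eta<1$ forces $D_{ji}>0$ whenever $(j,i)\in\mathcal{T}^c$ (otherwise the ratio defining $\eta$ is infinite), so every term appearing has a positive denominator; the coordinates with $(j,i)\in\mathcal{T}$ and those with $D'_{ji}=0$ contribute $0$.

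The main step is to bound each single-document term. Writing $\delta_{ji} := (D'_{ji}-D_{ji})/D_{ji}\in[-1,\eta]$, the $i$-th term equals $\sum_j D_{ji}(1+\delta_{ji})\log(1+\delta_{ji})$. Because $D_{\cdot i}$ and $D'_{\cdot i}$ are both probability vectors, $\sum_j D_{ji}\delta_{ji} = \sum_j (D'_{ji}-D_{ji}) = 0$, so it suffices to bound $\sum_j D_{ji}\bigl[(1+\delta_{ji})\log(1+\delta_{ji})-\delta_{ji}\bigr]$. Using $\log(1+x)\le x$ for $x\ge -1$ and multiplying through by $1+x\ge 0$ gives the elementary bound $(1+x)\log(1+x)-x\le x^2$, whence each single-document term is at most $\sum_j D_{ji}\delta_{ji}^2 = \sum_j (D'_{ji}-D_{ji})^2/D_{ji}$. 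Summing over $i$, multiplying by $N$, and using $(1+c_0\eta)\ge 1$ for any $c_0>0$ yields the stated bound; if one wants a quantitatively sharper constant one can instead use $(1+x)\log(1+x)-x\le \tfrac{x^2}{2(1-\eta)}$ on $[-1,\eta]$ and absorb $\tfrac{1}{2(1-\eta)}$ into $\tfrac{1}{2}(1+c_0\eta)$ for $\eta$ bounded away from $1$.

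I do not anticipate a serious obstacle: the only delicate points are bookkeeping ones. One must make sure the zero-probability coordinates are handled via the $\mathcal{T}$/$\mathcal{T}^c$ split so that no infinite term appears once $\eta<1$, and one must remember that the centering identity $\sum_j D_{ji}\delta_{ji}=0$ is exactly what converts the raw $(1+\delta)\log(1+\delta)$ sum into a genuinely second-order quantity. The exact evaluation of the multinomial KL and the inequality $(1+x)\log(1+x)-x\le x^2$ are both standard.
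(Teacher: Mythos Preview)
Your proposal is correct and follows essentially the same route as the paper: write the KL divergence as $N\sum_{(j,i)\in\mathcal{T}^c} D'_{ji}\log(1+\eta_{ji})$ with $\eta_{ji}=(D'_{ji}-D_{ji})/D_{ji}$, then apply a second-order bound on $(1+x)\log(1+x)$; the paper simply writes out this formula and defers the remaining step to \cite{Tracy}, whereas you carry it out explicitly via the centering identity $\sum_j D_{ji}\delta_{ji}=0$ and the inequality $(1+x)\log(1+x)-x\le x^2$. In fact your argument yields the bound with leading constant $1$, which is slightly sharper than the stated $(1+c_0\eta)$.
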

	\begin{proof}
		With the convention that $0 / 0 = 1$, we have 
		\[
		KL(\PP', \PP) = N\sum_{i = 1}^n\sum_{j =1}^p D'_{ji}\log\left( D'_{ji} \over D_{ji} \right) = N\sum_{(j, i)\in \mathcal{T}^c} D'_{ji}\log\left(1+\eta_{ji} \right).
		\]
		Then the proof follows by the same arugments in \cite{Tracy}.
	\end{proof}

	\subsection{Proofs of Section \ref{sec_upper_bound}}\label{app_sec_A}
	We first give the proof of Lemma \ref{lem_M_hat} and then prove our main Theorem \ref{thm_rate_Ahat}. 
	\subsubsection{Proof of Lemma \ref{lem_M_hat}}
	From (\ref{est_M_h}), we have 
	\[
		\wh M_{ab} = {1\over |L_a||L_b|}\sum_{i\in L_a, j\in L_b}\wh R_{ij}. 
	\]
	Further notice that 
	\[
		{1\over |L_a||L_b|}\sum_{i\in L_a, j\in L_b} R_{ij} = M_{ab}.
	\]
	Using the fact that $\|Q\|_{\rm{\rm{\rm{op}}}} \le \|Q\|_{\r}$  for any symmetric matrix $Q$, yields
	\begin{align*}
			\|\wh M - M\|_{\rm{\rm{\rm{op}}}} &\le \|\wh M - M\|_\r\\
			& = \max_{1\le k\le K}\sum_{a=1}^K \left|{1\over |L_a||L_b|}\sum_{i\in L_a, j\in L_b}(\wh R_{ij} - R_{ij})\right|\\
			&\le \max_{1\le k\le K}\max_{i\in L_k}\sum_{a=1}^K\max_{j\in L_a}|\wh R_{ij} - R_{ij}|.
	\end{align*}
	Invoking Lemma \ref{lem_delta}  for all $i, j\in L$ under condition (\ref{cond_Pi_I}), with probability $1-O(d^{-1})$, we have 
	\begin{align*}
	\|\wh M - M\|_{\rm{\rm{\rm{op}}}} &\le \max_{1\le k\le K}\max_{i\in L_k}\sum_{a=1}^K\max_{j\in L_a}\delta_{ij}.
	\end{align*}
	The result follows by invoking Lemma \ref{lem_delta_II}. \qed

	\subsubsection{Proof of Theorem \ref{thm_rate_Ahat}}
	As our estimation procedure uses a thresholding step in (\ref{est_BJ_1}), we first define 
	\begin{align}\label{def_T}
		T:= \left\{
			j\in L^c:  {1\over n}\sum_{i = 1}^n \M_{ji} <  {\log(d) \over nN}
		\right\},\quad \wh T:= \left\{
		j\in L^c:  {1\over n}\sum_{i = 1}^n X_{ji}<  {7\log(d) \over nN}
		\right\}
	\end{align}
	and write $T^c := [p] \setminus T$
	and $\wh T^c := [p] \setminus \wh T$. 
	\\
	\noindent 
	Recall that our final estimator $\wh A$ is obtained by normalizing $\wh{\bar B} = D_X\wh B$ to unit column sums with 
	$
		D_X = \diag\left({\wh u_1/ p}, \ldots, {\wh u_p/ p}\right)
	$ 
	where $\wh u_j / p$ is defined in (\ref{mu_hat}) for $1\le j\le p$. 
	For any $j\in [p]$ and $k\in [K]$, we have 
	\[
			\wh A_{jk} - A_{jk} = {\wh {\bar B}_{jk} \over \|\wh {\bar B}_k\|_1} - {\bar B_{jk} \over \| \bar B_k\|_1}
	\] 
	where $\bar B = D_\M B$.
	Summing over $1\le j\le p$ yields 
	\begin{align*}
		\|\wh A_k - A_k\|_1 &=  \sum_{j=1}^p \left|
		{\wh {\bar B}_{jk} \over \|\wh {\bar B}_k\|_1} - {\wh {\bar B}_{jk} \over \| \bar B_k\|_1} + {\wh {\bar B}_{jk} - \bar B_{jk} \over \| \bar B_k\|_1}
		\right|\\
		&\le  {|\|\bar B_k\|_1 - \|\wh {\bar B}_k\|_1| \over \|\bar B_k\|_1} + {\|\wh {\bar B}_k - \bar B_k\|_1 \over \|\bar B_k\|_1}\\
		&\le {2 \|\wh {\bar B}_k - \bar B_k\|_1\over\|\bar B_k\|_1} \\
		&= {2K\over \g_k}\|\wh {\bar B}_k - \bar B_k\|_1.
	\end{align*}
In the last equality, we use 
	\[
		\|\bar B_k\|_1 =\sum_{j =1}^p A_{jk} {1\over n}\sum_{t = 1}^n W_{kt} = {\g_k \over K},
	\]
	by observing that $\bar B = D_\M B = AD_W$.
	Further recall that $\wh {\bar B}_{jk} = \wh \u_j \wh B_{jk}/p$ for $j\in [p]$ and  $\wh B_{j\cdot}= 0$ for any $j\in \wh T$. We have 
	\begin{align*}
		\|\wh A_k - A_k\|_1 &={2K \over \g_k}\sum_{j =1}^p \left|
		{\wh \u_j \over p}\wh B_{jk} - {\u_j \over p}B_{jk}
		\right|\\
		&\le {2K \over \g_k}\sum_{j =1}^p \left\{
			\wh B_{jk}{|\wh \u_j - \u_j| \over p} + {\u_j \over p}|\wh B_{jk}-B_{jk}|
		\right\}\\
		&= {2K \over \g_k}\left\{\sum_{j\in \wh T^c} \left(
		\wh B_{jk}{|\wh \u_j - \u_j| \over p} + {\u_j \over p}|\wh B_{jk}-B_{jk}|
		\right) + \sum_{j\in \wh T}{\u_j B_{jk}\over p}\right\}\\
		& = {2K \over \g_k}\sum_{j\in \wh T^c} \left(
		\wh B_{jk}{|\wh \u_j - \u_j| \over p} + {\u_j \over p}|\wh B_{jk}-B_{jk}|
		\right) + 2\sum_{j\in \wh T}A_{jk}.
	\end{align*}
	We use $A_{jk}=(\u_j / p)B_{jk}(K/\g_k)$ in the last line.
	Summing over $1\le k\le K$ gives
	\begin{align*}
		\|\wh A - A\|_1 &\le {2K \over \ug}\sum_{j\in \wh T^c} \left(
		{|\wh \u_j - \u_j| \over p} + {\u_j \over p}\|\wh B_{j\cdot}-B_{j\cdot}\|_1
		\right) + 2\sum_{j\in \wh T} \|A_{j\cdot}\|_1\\
		&= {2K \over \ug} \left\{
		\sum_{j\in \wh T^c} {|\wh \u_j - \u_j| \over p} +\sum_{j\in \wh T^c\setminus L}  {\u_j \over p}\|\wh B_{j\cdot}-B_{j\cdot}\|_1\right\}
		 +2 \sum_{j\in \wh T}\|A_{j\cdot}\|_1.
	\end{align*}
	We use $\|\wh B_{jk}\|_1 = 1$   in the first line and the fact $\wh B_{j\cdot} = B_{j\cdot}$ for all $j\in L$ in the second line. 
	\\
	
	Next, we study the three terms on the right hand side. 
	To bound the first term, we observe that $$\PP\{T\subseteq \wh T\}  = \PP\{\wh T^c \subseteq T^c\}= 1-2d^{-1},$$
	by
 Lemma \ref{lem_mu}. This fact,   the second part of Lemma \ref{lem_t1} and the inequality
	\begin{align}\label{lb_u_T}
		   \min_{j\in T^c} {\u_j \over p} \ge {\log(d)\over nN}.
	\end{align}
	yield
	\[
		\PP\left\{\sum_{j\in \wh T^c} {|\wh \u_j - \u_j| \over p} \le \sum_{j\in T^c} {|\wh \u_j - \u_j| \over p} \le \sum_{j\in T^c}6\sqrt{\u_j \log(d)\over npN} \right\} \ge 1-4d^{-1}.
	\]
	Further, the Cauchy-Schwarz inequality and 
	$\sum_{j\in T^c}\u_j / p \le \sum_{j =1}^p\u_j / p = 1$ yield
	\begin{align}\label{bd_T1}
		\PP\left\{	\sum_{j\in \wh T^c} {|\wh \u_j - \u_j| \over p}   \le 6\sqrt{|T^c|\log(d)\over nN} \le6\sqrt{p\log(d)\over nN}  \right\} \ge 1-4d^{-1}.
	\end{align}
	To bound the third term, Lemma \ref{lem_mu} yields 
	\begin{align}\label{bd_T3}
		 \PP\left\{\sum_{j\in \wh T}\| A_{j\cdot}\|_1\le {20 K|\wh T| \log(d) \over \ug nN} \le {20 Kp\log(d) \over \ug nN} \right\} \ge 1-2d^{-1}.
	\end{align}
	The proof of the upper bound for the second term is more involved. We work on the intersection of the event $\{\wh T^c \subseteq T^c\}$   with 
	\[
			\E_{M}:=\left\{\lambda_{\min}(\wh M +\lambda \bI_K) \ge \lambda_{\min}(M) + \lambda - \|\wh M - M\|_{\rm{\rm{op}}}\ge \lambda_{\min}(M)\right\}
	\]
	to establish an upper bound for
	\[
			\sum_{j\in T^c\setminus L}{\u_j \over p}\|\wh B_{j\cdot} - B_{j\cdot}\|_1.
	\]
	 Lemma \ref{lem_M_hat} and the choice of $\lambda$ guarantee $\PP(\E_{M}) = 1-O(d^{-1})$.
	Pick any $j\in T^c\setminus L$ and recall that $\wh B_{j\cdot}$ is estimated via (\ref{est_BJ_2}). Starting with 
	\[
	\wh B_{j\cdot}^\top (\wh M + \lambda \bI_K)\wh B_{j\cdot} - 2\wh B_{j\cdot}^\top \wh h^{(j)} \le  B_{j\cdot}^\top (\wh M + \lambda \bI_K)^{-1} B_{j\cdot} - 2B_{j\cdot}^\top \wh h^{(j)},
	\]
	standard arguments yield 
	\begin{align*}
		(\Delta^{(j)})^\top (\wh M + \lambda \bI_K)\Delta^{(j)} &\le 2\left|
		(\Delta^{(j)})^\top (\wh h^{(j)} - \wh M B_{j\cdot} - \lambda B_{j\cdot})
		\right|\\
		&\le 2\left\{| (\Delta^{(j)}) ^\top  (\wh h^{(j)} - h^{(j)})| +  |(\Delta^{(j)})^\top (h^{(j)} - \wh M B_{j\cdot})| + \lambda \|\Delta^{(j)}\| \|B_{j\cdot}\|\right\}
	\end{align*}
	by writing  $\Delta^{(j)} := \wh B_{j\cdot} - B_{j\cdot}$.  Hence, on the event $\E_{M}$, we have
	\begin{align}\label{disp_quad}
	\|\Delta^{(j)}\|
	&\le {2 \over \lambda_{\min}(M)}\left\{{| (\Delta^{(j)}) ^\top  (\wh h^{(j)} - h^{(j)})| \over \|\Delta^{(j)}\|}+  {|(\Delta^{(j)})^\top (h^{(j)} - \wh M B_{j\cdot})| \over \|\Delta^{(j)}\|}+ \lambda \|B_{j\cdot}\|\right\}.
	\end{align}
	Let $s_j = \|B_{j\cdot}\|_0$ and $S_j = \textrm{supp}(B_{j\cdot})$. Since
	\[
	0 = \|B_{j\cdot}\|_1 - \|\wh B_{j\cdot}\|_1 = \|B_{jS_j}\|_1 - \|\wh B_{jS_j}\|_1 - \|\wh B_{jS_j^c}\|_1 \le \|\Delta^{(j)}_{S_j}\|_1 - \|\Delta^{(j)}_{S_j^c}\|_1, 
	\]
	we have 
	\begin{equation}\label{eq_Delta}
	\|\Delta^{(j)}\|_1 \le 2\|\Delta^{(j)}_{S_j}\|_1\le 2\sqrt{s_j}\|\Delta^{(j)}_{S_j}\| \le 2\sqrt{s_j}\|\Delta^{(j)}\|.
	\end{equation}
	Combination of (\ref{eq_Delta}) with (\ref{disp_quad})  gives
	\begin{align}\label{disp_quad_2}\nonumber
		&\sum_{j\in T^c\setminus L}{\u_j \over p}\|\wh B_{j\cdot} - B_{j\cdot}\|_1\\\nonumber
		&\le 2\sum_{j\in T^c\setminus L}{\u_j \over p}\sqrt{s_j}\|\Delta^{(j)}\|\\
		& \le {4\over \lambda_{\min}(M)}\sum_{j\in T^c\setminus L}\sqrt{s_j}\cdot {\u_j \over p}\left\{{| (\Delta^{(j)}) ^\top  (\wh h^{(j)} - h^{(j)})| \over \|\Delta^{(j)}\|}+  {|(\Delta^{(j)})^\top (h^{(j)} - \wh MB_{j\cdot})| \over \|\Delta^{(j)}\|}+ \lambda \|B_{j\cdot}\|\right\}
	\end{align}
	 The results of Lemmas \ref{lem_h_hat} and \ref{lem_M_beta}  and the inequality  $\lambda_{\min}(M) \ge \lambda_{\min} K^2 / \og^2$ give 
	\begin{align}\label{bd_T2}\nonumber
	&\sum_{j\in T^c\setminus L}{\u_j \over p}\|\wh B_{j\cdot} - B_{j\cdot}\|_1\\\nonumber
	&\quad \lesssim {\og^2 \over K^2\lambda_{\min}} \Biggl\{
   \max\left\{s_J+|I|-|L|, \wt s_J\right\}\left({K\log(d)\over \ug n N} + \sqrt{p\log^4(d) \over \uu_LnN^3}\right)\\
	& \quad + K\sqrt{\max\left\{s_J+|I|-|L|, \wt s_J\right\}{\log(d) \over \ug nN}}+ \lambda \sum_{j\in T^c\setminus L}\sqrt{s_j}{\u_j \over p}\|B_{j\cdot}\|
	\Biggr\}.
	\end{align}
	Finally, (\ref{bd_T1}), (\ref{bd_T3}) and (\ref{bd_T2}) together imply that
	\begin{align}\label{raw_rate_A}\nonumber
	\|\wh A -A\|_1 & \lesssim
	{K\over \ug}\sqrt{p\log(d) \over nN} + {pK\log(d)\over \ug nN} \\\nonumber
	&\quad + {\og^2 \over \ug K\lambda_{\min}} \Biggl\{
	\max\left\{s_J+|I|-|L|, \wt s_J\right\}\left({K\log(d)\over \ug n N} + \sqrt{p\log^4(d) \over \uu_LnN^3}\right)\\
	& \quad + K\sqrt{\max\left\{s_J+|I|-|L|, \wt s_J\right\}{\log(d) \over \ug nN}}+ \lambda \sum_{j\in T^c\setminus L}\sqrt{s_j}{\u_j \over p}\|B_{j\cdot}\|
	\Biggr\}.
	\end{align} holds
	with probability $1-O(d^{-1})$. After we invoke
	 the result of Lemma \ref{lem_lbd_beta},   the proof of the first result follows. The second result follows by setting $\lambda = 0$ in (\ref{raw_rate_A}) as
	\[
	\PP\left\{\lambda_{\min}(\wh M) \ge \lambda_{\min}(M) - \|\wh M-M\|_{\rm{\rm{op}}} \ge c\lambda_{\min}(M)\right\} \ge 1-O(d^{-1}).
	\] 
	
	\bigskip
	
	\subsection{Lemmas used in the proof of Theorem \ref{thm_rate_Ahat}}

	\begin{lemma}\label{lem_mu}
		Let $T$ and $\wh T$ be defined in (\ref{def_T}). 
		With probability $1-2d^{-1}$, we have $T\subseteq \wh T$ and, for any $1\le j\le p$, if 
		\[
		{1\over n}\sum_{i = 1}^nX_{ji} < {7\log(d)\over nN},
		\]
		we further have
		\[
		\|A_{j\cdot}\|_1 \le  {19 K\log(d) \over \ug nN}.
		\]
	\end{lemma}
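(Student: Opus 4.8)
The plan is to condition on the single high-probability event supplied by the first bound of Lemma \ref{lem_t1} and then argue deterministically. Writing $\eps_{ji} := X_{ji} - \M_{ji}$ and recalling from (\ref{mu_hat}) and (\ref{def_uag}) that $\tfrac1n\sum_{i=1}^n X_{ji} = \wh\u_j/p$ and $\tfrac1n\sum_{i=1}^n \M_{ji} = \u_j/p$, the first bound of Lemma \ref{lem_t1} provides an event $\mathcal{E}$ with $\PP(\mathcal{E}) \ge 1 - 2d^{-1}$ (with $d = n\vee p$) on which $|\wh\u_j/p - \u_j/p| \le 2\sqrt{\u_j\log d/(npN)} + 4\log d/(nN)$ holds uniformly in $1 \le j \le p$. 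This is the only stochastic ingredient; the remainder is elementary algebra carried out on $\mathcal{E}$.

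To show $T \subseteq \wh T$, take $j \in T$; then $j \in L^c$ and $\u_j/p < \log d/(nN)$, so $\sqrt{\u_j/p} < \sqrt{\log d/(nN)}$ and hence the Gaussian-type term obeys $2\sqrt{\u_j\log d/(npN)} = 2\sqrt{\u_j/p}\,\sqrt{\log d/(nN)} < 2\log d/(nN)$. Adding the three contributions, $\wh\u_j/p \le \u_j/p + 2\sqrt{\u_j\log d/(npN)} + 4\log d/(nN) < (1+2+4)\log d/(nN) = 7\log d/(nN)$, so $j \in \wh T$. This is precisely why the constant $7$ is built into the thresholds defining $\wh T$ in (\ref{def_T}) and the hard-thresholding step (\ref{est_BJ_1}).

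For the second assertion, fix any $j$ with $\wh\u_j/p < 7\log d/(nN)$, put $x := \sqrt{\u_j/p}$ and $b := \log d/(nN)$, and substitute the frequency hypothesis into the deviation bound on $\mathcal{E}$: since $2\sqrt{\u_j\log d/(npN)} = 2\sqrt b\,x$, we obtain $x^2 = \u_j/p \le \wh\u_j/p + 2\sqrt b\,x + 4b < 2\sqrt b\,x + 11b$. The one point requiring care — and essentially the only nontrivial step — is that $\u_j$ occurs on both sides (inside the fluctuation term), so this is a genuine quadratic inequality in $x$; solving $x^2 - 2\sqrt b\,x - 11b < 0$ gives $x < \sqrt b + \sqrt{12b}$, whence $\u_j/p = x^2 < (1+\sqrt{12})^2 b = (13 + 4\sqrt3)\,b < 20\,b$. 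Finally I would invoke the first inequality in (\ref{eq_mu}), namely $\u_j \ge (p\,\ug/K)\,\|A_{j\cdot}\|_1$, to convert this into $\|A_{j\cdot}\|_1 \le (K/\ug)(\u_j/p) < 20\,K\log d/(\ug nN)$, which is the asserted bound (the precise constant is immaterial and agrees with the constant $20$ used later in (\ref{bd_T3})). Since no property of $j$ beyond the frequency constraint was used, the second claim holds for all $1 \le j \le p$, and the whole statement holds on $\mathcal{E}$, i.e.\ with probability $1 - 2d^{-1}$.
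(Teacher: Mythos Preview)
Your proof is correct and follows essentially the same approach as the paper: condition on the event $\E_1$ from Lemma~\ref{lem_t1}, then argue deterministically via the triangle inequality for $T\subseteq\wh T$ and by solving the resulting quadratic in $\sqrt{\u_j/p}$ for the second claim. Your explicit treatment of the quadratic is in fact cleaner than the paper's one-line jump; the constant you obtain, $(1+\sqrt{12})^2=13+4\sqrt3\approx 19.93<20$, is the correct one and matches the constant $20$ the paper actually uses in (\ref{bd_T3}), whereas the $19$ in the lemma statement is a slight overstatement.
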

	\begin{proof}
		Recall that $X_{ji} = \M_{ji} + \eps_{ji}$ such that $\wh \u_j/p = \u_j/p + n^{-1}\sum_{i = 1}^n\eps_{ji}$. We work on the event 
		\[
		\E_1 := \bigcap_{j=1}^p\left\{ {1\over n}\left|
		\sum_{i = 1}^n\eps_{ji}\right| < 2\sqrt{\u_j\log(d) \over npN} + {4\log(d) \over nN}
		\right\}
		\]
		which holds with probability $1-2d^{-1}$ from Lemma \ref{lem_t1}. Since, for any $j\in T$, 
		\[
		{\wh u_j \over p}\le {\u_j \over p} + {|\wh \u_j - \u_j| \over p}\overset{\E_1}{<} {\log(d) \over nN} + 2\sqrt{\u_j \log(d) \over npN} + {4\log(d) \over nN}< {7\log(d) \over nN},
		\]
		we have $j \in \wh T$, hence $T\subseteq \wh T$. 
		
		To prove the second statement, for any $j$ such that $\wh u_j / p \le 7\log (d) / (nN)$, we have
		\begin{align*}
		{\u_j \over p} \le {\wh \u_j \over p} + {1\over n}\left|\sum_{i= 1}^n\eps_{ji}\right| < {7\log (d) \over nN}+ {1\over n}\left|\sum_{i= 1}^n\eps_{ji}\right|.
		\end{align*}
		For this $j$, since 
		\[
		\PP\left\{
		{1\over n}\left|\sum_{i= 1}^n\eps_{ji}\right|< 2\sqrt{\u_j \log(d)\over npN} + {4\log(d)\over nN}
		\right\}\ge \PP(\E_1) = 1-2d^{-1}, 
		\]
		we have, with probability $1-2d^{-1}$, 
		\[
		{\u_j \over p} < 2\sqrt{\u_j \log(d) \over npN} + {11\log(d) \over nN},
		\]
		which implies $\u_j / p \le 19\log(d) / (nN)$. The result then follows by using (\ref{eq_mu}).
	\end{proof}
	
	\begin{lemma}\label{lem_delta_II}
		Let $\delta_{ij}$ be defined in (\ref{delta_wt}) for any $i,j \in [p]$. 
	Let $	\psi_{jk} $ be defined in (\ref{def_psi}) for any $j\in [p]$, $k\in [K]$.
		Under condition (\ref{cond_Pi_I}),  we have 
		\begin{align*}
		\max_{1\le k\le K}\max_{i\in L_k}\sum_{a=1}^K\max_{j\in L_a}\delta_{ij} &\lesssim {K \over \ug}\sqrt{pK\log(d)\over \uu_L nN}
		\end{align*}
		and, for any $k\in [K]$ and $j\in [p]$,
		\begin{equation*}
		\sqrt{p\log(d)\over \uu_LnN}\max_{i\in L_k}\sum_{a=1}^K{A_{ja}\g_a \over K}\max_{\ell \in L_a}\delta_{i\ell } \lesssim  K\sqrt{\rho_j \psi_{jk} \log(d) \over \ug \g_knN} + \sqrt{\|A_{j\cdot}\|_1}\sqrt{\rho_j \log(d) \over \g_k nN}.
		\end{equation*}
		For any $j\in [p]$, if 
		\[
		{1\over n}\sum_{t=1}^n \M_{jt} \ge {c\log(d)\over nN},
		\]
		for some constant  $c>0$,
		we further have
		\[
		{\u_j \over p} \max_{i\in L_k}\delta_{ij} \lesssim \left(1 + \rho_j\right){K\log(d) \over \g_k nN}+\sqrt{p\log^4(d) \over \uu_L n N^3}+ {K \over \g_k }
		\sqrt{\left(1+ \rho_j\right){\psi_{jk}\log(d) \over nN}} +\sqrt{{\u_j \over p} {K\log(d) \over \g_k n N}}.
		\]
		
	\end{lemma}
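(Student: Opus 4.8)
The plan is a direct, if lengthy, computation that exploits the anchor structure to reduce $\delta_{i\ell}$ to explicit algebra in $\alpha$, $\g$, $C$ and $\psi$. I would begin by substituting into the definition (\ref{delta_wt}) of $\delta_{i\ell}$ (together with (\ref{def_eta_wt})) the closed forms that hold when $i\in L_k\subseteq I_k$: there $A_{ia}=0$ for $a\ne k$ and $A_{ik}=\alpha_i/p$, so $\u_i=\alpha_i\g_k/K$ by (\ref{eq_mu_I}), $m_i=\alpha_i\max_{t}W_{kt}\le\alpha_i$ by (\ref{eq_m}), and $\Theta_{i\ell}=A_{ik}\sum_b C_{kb}A_{\ell b}=(\alpha_i/p)\,\psi_{\ell k}$ with $\psi$ as in (\ref{def_psi}); when $\ell\in L_a$ as well, $\psi_{\ell k}=(\alpha_\ell/p)C_{ak}$, so $\Theta_{i\ell}=(\alpha_i\alpha_\ell/p^2)C_{ak}$. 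After these substitutions each $\delta_{i\ell}$ is a sum of four terms (three from $\eta_{i\ell}$, one the extra term in (\ref{delta_wt})), every one an explicit monomial in $\alpha_i,\alpha_\ell,\g_k,\g_a,C_{ak},\psi_{\ell k}$ times a power of $\log(d)/(nN)$, which I would bound using $\alpha_i\ge\ua_L$, $\u_i\ge\uu_L$, $\ug\le\g_k\le\og$, $C_{ak}\le\sqrt{C_{kk}C_{aa}}$ with $C_{kk}\le\g_k/K$, and $\sum_{a=1}^K C_{ak}=\g_k/K$ (the middle identity in (\ref{eq_psi_jk})).

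For the first inequality both $i$ and $\ell$ are anchor words, so condition (\ref{cond_Pi_I}) disposes of the two maxima in (\ref{def_eta_wt}): its first part gives $\u_i/p\gtrsim\log(d)/N$, hence $\sqrt{p/\u_i}\,\sqrt{\log(d)/(nN)}=o(1)$ and $(\u_i+\u_\ell)/p$ dominates $\log(d)/N$, while its second part gives $m_i/p\gtrsim\log^2(d)/N$, hence $(m_i+m_\ell)/p$ dominates $\log^2(d)/N$; both $\vee$'s may therefore be dropped up to constants. The remaining double maximum $\max_k\max_i\sum_a\max_\ell$ then collapses by Cauchy--Schwarz on the $a$-sum, $\sum_{a}\sqrt{C_{ak}}\le\sqrt{K\sum_a C_{ak}}=\sqrt{\g_k}$ and $\sum_a\g_a^{-1}\sqrt{C_{ak}}\le\ug^{-1}\sqrt{\g_k}$, producing exactly the claimed $K\ug^{-1}\sqrt{pK\log(d)/(\uu_L nN)}$.

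The second inequality is the same bound on $\delta_{i\ell}$ (with $i\in L_k$, $\ell\in L_a$ still both anchor), except the crude $\sum_a\max_\ell$ is now the weighted sum $\sum_a(A_{ja}\g_a/K)\max_\ell\delta_{i\ell}$; the two advertised terms arise by splitting the $a$-sum with Cauchy--Schwarz, $\sum_a A_{ja}(\cdot)\sqrt{C_{ak}}\le(\sum_a A_{ja}C_{ak})^{1/2}(\sum_a A_{ja}(\cdot)^2)^{1/2}=\psi_{jk}^{1/2}(\cdot)$ for the $\Theta$-driven pieces and $\sum_a A_{ja}=\|A_{j\cdot}\|_1$ for the $m$- and $\u$-driven pieces, after which the prefactor $\sqrt{p\log(d)/(\uu_L nN)}$, the bound $\alpha_\ell/\alpha_i\le\alpha_\ell/\ua_L$, and (\ref{cond_Pi_I}) deliver the factors $\rho_j$ and the powers of $\g_k$. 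For the third inequality only $i$ is anchor: I keep $m_j,\u_j$ generic, use the hypothesis $n^{-1}\sum_t\M_{jt}\ge c\log(d)/(nN)$ (i.e.\ $\u_j/p\gtrsim\log(d)/(nN)$) to get $\sqrt{p/\u_j}\,\sqrt{\log(d)/(nN)}=O(1)$ and to simplify the $\vee$'s involving $\u_j$, bound $m_i\le\alpha_i$ and $\u_j\le m_j\le\alpha_j$ by (\ref{eq_m}), and use $\alpha_j/\alpha_i\le\rho_j$; matching the four resulting groups of terms to the four terms of the claim is then mechanical. The main obstacle throughout is bookkeeping rather than any single idea: with four additive pieces in each $\delta_{i\ell}$, each splitting again under $\sqrt{a\vee b}\asymp\sqrt a+\sqrt b$, one must verify that after all substitutions the exponents of $K,p,\ug,\og,\ua_L,\uu_L$ land precisely where claimed; the only non-routine choice is which factor should absorb $C_{ak}$ in the Cauchy--Schwarz step of the second inequality.
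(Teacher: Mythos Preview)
Your proposal is correct and follows essentially the same route as the paper: substitute the anchor identities $\u_i=\alpha_i\g_k/K$, $m_i\le\alpha_i$, $\Theta_{i\ell}=(\alpha_i\alpha_\ell/p^2)C_{ak}$ (and $\Theta_{ij}=(\alpha_i/p)\psi_{jk}$ when only $i$ is anchor) into (\ref{def_eta_wt})--(\ref{delta_wt}), use (\ref{cond_Pi_I}) to drop the $\vee$'s, and finish with Cauchy--Schwarz and $\sum_a C_{ka}=\g_k/K$. The only places you should be slightly careful are ones the paper handles explicitly: in the first display the paper shows, via the consequences (\ref{lb_uL_mL}) of (\ref{cond_Pi_I}), that the leading $\sqrt{C_{ka}p\log(d)/(\ua_L nN)}$ piece dominates the other three after summing in $a$, and in the second and third parts it invokes the extra inequalities $\psi_{jk}\le\u_j/p$ and $\psi_{jk}\le\alpha_j\g_k/(pK)$ to absorb the ``extra'' factors of $\sqrt{p/\u_j}$ and $\sqrt{pK/(\ua_L\g_k)}$ into the displayed terms --- your outline implicitly needs these, so just make them explicit when you write it out.
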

	\begin{proof}
		For any $i\in L_k$ and $j\in L_a$ with $a, k \in [K]$, we start with the expressions in (\ref{def_eta_wt}) and (\ref{delta_wt}). Note that (\ref{eq_mu}), (\ref{eq_m}) and (\ref{lb_uL_mL}) imply
		\begin{equation}\label{lb_m_mu}
		{m_i + m_j \over p}\ge { 2c_1\log^2(d)\over p},\quad {\u_i + \u_j \over p}  = {1\over n}\sum_{t = 1}^n (\M_{it} + \M_{jt}) \ge {2c_0 \log(d) \over N}.
		\end{equation}
		Also,   using $m_i \le \alpha_i$ from (\ref{eq_m}) and $\u_i = \alpha_i \g_k / K$ from (\ref{eq_mu_I}), together with  
		\begin{equation}\label{eq_Theta_II}
		\Theta_{ij} = {1\over n}\sum_{t = 1}^nA_{ik}W_{kt}W_{at}A_{ja} =  A_{ik}A_{ja}C_{ka} \overset{(\ref{def_alpha_gamma})}{=} {\alpha_i\alpha_jC_{ka} \over  p^2},
		\end{equation}
		we obtain
		\begin{align}\label{disp_delta_II}\nonumber
		\delta_{ij} &\lesssim {K^2 \over \g_k \g_a}\Biggl\{
		\sqrt{C_{ka}\left({1\over \alpha_i} + {1\over \a_j}\right){p\log(d) \over nN}} + \left({1\over \alpha_i} + {1\over \a_j}\right){p\log(d) \over nN}\\
		&\qquad +\sqrt{\a_i\g_k + \a_j \g_a \over \a_i^2\a_j^2}\sqrt{p^3\log^4(d)\over KnN^3} + C_{ka}\left(\sqrt{1\over \a_i\g_k} + \sqrt{1\over \a_j \g_k}\right)\sqrt{pK\log(d) \over nN}
		\Biggr\}.
		\end{align}
		Using the Cauchy-Schwarz inequality and the fact that 
		\begin{equation}\label{disp_sum_C}
		\sum_{a=1}^KC_{ka} = {1\over n}\sum_{t = 1}^n\sum_{a=1}^KW_{kt}W_{at} = {1\over n}\sum_{t = 1}^nW_{kt} \overset{(\ref{def_alpha_gamma})}{= }{\g_k \over K},
		\end{equation}
		we further have, after a bit of algebra, 
		\[
		\max_{i\in L_k}\sum_{a=1}^K\max_{j\in L_a}\delta_{ij} \lesssim {K^2 \over \ug}\Biggl\{
		\sqrt{p\log(d) \over \ua_L\ug nN} + {pK\log(d) \over \ua_L \ug nN} + \sqrt{p\log(d) \over \ua_L \ug Kn N} + \sqrt{p^3K\log^4(d)\over \ua_L^3\ug^2 nN^3}
		\Biggr\}
		\] 
		where we also use $\a_i \ge \ua_L$, $\g_a \ge \ug$.
		Note that the first term on the right-hand side dominates the other three as 
		\[
		{pK^2\log(d) \over \ua_L\ug nN} \le {1\over c_0},\quad {p^2 K\log^3(d) \over \ua_L^2\ug N^2}\le {p\log^2(d) \over c_0\ua_L N} \le {1\over c_0c_1}
		\]
		by using $K<n$ and the following observation from (\ref{cond_Pi_I}),
		\begin{equation}\label{lb_uL_mL}
				{\ua_L \over p} \overset{(\ref{eq_m})}{\ge} \min_{i\in L}{m_i \over p} \ge {c_1\log^2(d) \over N},\qquad{\ua_L  \over pK} \ge  {\ua_L \ug \over pK} \overset{(\ref{eq_mu_I})}{=} {\uu_L \over p} \ge {c_0\log(d) \over N}.
		\end{equation}
		The first result then follows by using $\uu_L = \ua_L\ug / K$ from (\ref{eq_mu_I}). \\
		
		\noindent 
		To prove the second result, we argue
		\begin{align*}
		&\sum_{a=1}^K {A_{ja}\g_a \over K}\max_{i\in L_k, \ell\in L_a}\delta_{i\ell} \\
		&\lesssim {K\over \g_k}\sum_{a=1}^K A_{ja}\Biggl\{
		\sqrt{C_{ka}p\log(d) \over \ua_L nN} + \sqrt{ p^3\og\log^4(d) \over K\ua_L^3 nN^3} + C_{ka}\sqrt{pK\log(d) \over \ua_L \ug nN} + {p\log(d) \over \ua_L nN}
		\Biggr\}\\
		&\lesssim {K\over \g_k}\sum_{a=1}^K A_{ja}\Biggl\{
		\sqrt{C_{ka}p\log(d) \over \ua_L nN} +  C_{ka}\sqrt{pK\log(d) \over \ua_L \ug nN} + {p\log(d) \over \ua_L nN}
		\Biggr\}\\
		&\le {K\over \g_k}
		\sqrt{\rho_j \psi_{jk}K\log(d) \over  nN} +  K\sqrt{\rho_j \psi_{jk} K\log(d) \over  \ug \g_k nN} + \|A_{j\cdot}\|_1{pK\log(d) \over \ua_L\g_k nN}
		\\
		&\le 2K\sqrt{\rho_j \psi_{jk} K\log(d) \over  \ug \g_k nN} + \|A_{j\cdot}\|_1{pK\log(d) \over \ua_L\g_k nN}
		\end{align*}
		The second line follows   from (\ref{disp_delta_II}),  the third line uses 
		\[	
		{p^2\log^2(d) \over \ua_L^2 n^2N^2} \bigg/  { p^3\og\log^4(d) \over K\ua_L^3 nN^3} = {\ua_L KN \over \og p \log^2(d)} \overset{(\ref{lb_uL_mL})}{\ge} c_1 {K\over \og} \ge c_1,
		\]
		and the fourth line uses the Cauchy-Schwarz inequality together with $\rho_j = \a_j / \ua_L$, $C_{ka}\le \g_k /K$ and (\ref{def_psi}). Since 
		\[
		\sqrt{p\log(d)\over \uu_LnN} \overset{(\ref{lb_uL_mL})}{\ge} \sqrt{1 \over c_0 n} \ge \sqrt{1 \over c_0 K}, 
		\]
		we have 
		\[
		\sqrt{p\log(d)\over \uu_LnN}K\sqrt{\rho_j \psi_{jk} K\log(d) \over  \ug \g_k nN}  \le K\sqrt{\rho_j \psi_{jk}\log(d) \over  c_0\ug \g_k nN}.
		\]
		The result now follows after observing 
 that		\begin{align*}
		\|A_{j\cdot}\|_1{pK\log(d) \over \ua_L\g_k nN}	\sqrt{p\log(d)\over \uu_LnN} &\le 	\|A_{j\cdot}\|_1{pK\log(d) \over \ua_L\ug nN}	\sqrt{pK\log(d)\over \ua_L\g_k nN}\\
		&\le \sqrt{\|A_{j\cdot}\|_1{K\a_j \over p}}{p\log(d) \over \uu_L nN}	\sqrt{pK\log(d)\over \ua_L\g_k nN}\\
		&\overset{(\ref{lb_uL_mL})}{\le} \sqrt{\|A_{j\cdot}\|_1{K\a_j \over p}}{1\over c_0 n}	\sqrt{pK\log(d)\over \ua_L\g_k nN}\\
		&\le \sqrt{\|A_{j\cdot}\|_1}\sqrt{\rho_j \log(d) \over \g_k nN} \quad (\text{by }K<n).
		\end{align*}
		
		\noindent 
		We proceed to prove the third result. Fix any $j\in [p]$ and $i\in L_k$ with $k\in [K]$ and note that (\ref{lb_m_mu}) still holds by replacing the constants $2$ by $1$. Since 
		\begin{equation}\label{eq_Theta_Ij}
		\Theta_{ij} = A_{ik}{1\over n}\sum_{t =1}^n W_{kt}\sum_{a=1}^K A_{ja}W_{at} \overset{(\ref{def_psi})}{=}A_{ik}\psi_{jk},
		\end{equation}
		and   $m_j \le \a_j$ (\ref{eq_m}), $\u_i = \a_i \g_k / K$ 
		from (\ref{eq_mu_I}) and $\rho_j = \a_j / \ua_L$, the
		expressions of (\ref{def_eta_wt}) and (\ref{delta_wt}) yield
		\begin{align}\nonumber
		{\u_j \over p}\delta_{ij} &\lesssim {K \over \g_k }
		\sqrt{\left(1+ \rho_j\right){\psi_{jk}\log(d) \over nN}} + \left(1 + \rho_j\right){K\log(d) \over \g_k nN}+\sqrt{p\log^4(d) \over \uu_L n N^3}\\\nonumber
		&\quad  + {K\over \g_k}\sqrt{{\u_j \over p} {p^2\log^4(d) \over \ua_L^2 n N^3}} + {K\psi_{jk}\over \g_k}\sqrt{pK\log(d)\over \ua_L \g_k n N}+ {K\psi_{jk} \over \g_k}\sqrt{p\log(d)\over \u_j n N}.
		\end{align}
		We now simplify the three terms in the second line. Since \[
		\psi_{jk} = {1\over n}\sum_{t = 1}^n\sum_{a=1}^K A_{ja}W_{at}W_{kt} \le {1\over n}\sum_{t = 1}^n\sum_{a=1}^K A_{ja}W_{at} ={1\over n}\sum_{t = 1}^n\M_{jt}= {\u_j \over p},
		\]
		we have 
		\[
		{K\psi_{jk} \over \g_k}\sqrt{p\log(d)\over \u_j n N} \le {K\over \g_k}\sqrt{\psi_{jk}\log(d)\over nN}.
		\]
		Also note that (\ref{ubd_psi})
		yields 
		\[
		{K\psi_{jk}\over \g_k}\sqrt{pK\log(d)\over \ua_L \g_k n N} \le {K\over \g_k}\sqrt{\a_j \psi_{jk} \log(d) \over \ua_LnN} = {K\over \g_k}\sqrt{\rho_j \psi_{jk} \log(d) \over nN}.
		\]
		Finally, by using
		\[
		{p^2\log^4(d) \over \ua_L^2 n N^3}  \le {p\log^2(d) \over c_1\ua_L nN^2} \le {\g_k \log(d) \over c_0c_1 KnN}
		\]
		from (\ref{lb_uL_mL}) and $\g_k \ge \ug$, we can upper bound $\max_{i\in L_k}(\u_j / p)\delta_{ij} $ by 
		\begin{align}
		\left(1 + \rho_j\right){K\log(d) \over \g_k nN}+\sqrt{p\log^4(d) \over \uu_L n N^3}+ {K \over \g_k }
		\sqrt{\left(1+ \rho_j\right){\psi_{jk}\log(d) \over nN}} +\sqrt{{\u_j \over p} {K\log(d) \over \g_k n N}},
		\end{align}
		which completes the proof. 
	\end{proof}

	The following three lemmas provide upper bounds for the three terms on the right-hand-side of (\ref{disp_quad_2}).  Recall that $\rho_j = \a_j / \ua_L$, $\wt s_J = \sum_{j\in L^c}\rho_j s_j$ and $\psi_{jk} = \sum_{a=1}^KA_{ja}C_{ak}$ for any $j\in [p]$ and $k\in [K]$.
	\begin{lemma}\label{lem_h_hat}
		Under conditions of Theorem \ref{thm_rate_Ahat}, with probability $1-O(d^{-1})$,
		\begin{align*}
		&\sum_{j\in T^c\setminus L}\sqrt{s_j}\cdot {\u_j \over p}{| (\Delta^{(j)}) ^\top  (\wh h^{(j)} - h^{(j)})| \over \|\Delta^{(j)}\|}\\ 
		&\quad \lesssim \max\left\{s_J+|I|-|L|, \wt s_J\right\}\left\{{K\log(d)\over \ug n N} + \sqrt{p\log^4(d) \over \uu_LnN^3}\right\}\\
		& \quad + K\sqrt{\max\left\{s_J+|I|-|L|, \wt s_J\right\}{\log(d) \over \ug nN}}.
		\end{align*}
	\end{lemma}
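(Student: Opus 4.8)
\textbf{Proof proposal for Lemma~\ref{lem_h_hat}.}

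The plan is to reduce the left-hand side to entrywise control of $\wh R-R$ over the anchor rows. Since the estimate $\wh B_L$ in (\ref{est_BI}) coincides exactly with $B_L$ (both assign $e_k$ to every row indexed by $L_k$), we have $(\wh B_L^\top\wh B_L)^{-1}\wh B_L^\top=(B_L^\top B_L)^{-1}B_L^\top$, so for each $j\in L^c$ the $k$th coordinate of $\wh h^{(j)}-h^{(j)}$ is the plain average of $\wh R_{ij}-R_{ij}$ over $i\in L_k$. First I would fix the event of probability $1-O(d^{-1})$ on which Lemma~\ref{lem_delta} holds (it does under (\ref{cond_Pi_I})), so that
$$
\bigl|(\wh h^{(j)}-h^{(j)})_k\bigr|=\Bigl|\tfrac{1}{|L_k|}\textstyle\sum_{i\in L_k}(\wh R_{ij}-R_{ij})\Bigr|\le\max_{i\in L_k}|\wh R_{ij}-R_{ij}|\le c_1\max_{i\in L_k}\delta_{ij}
$$
for all $j\in L^c$, $k\in[K]$, and intersect it with the event $\wh T^c\subseteq T^c$ of Lemma~\ref{lem_mu}; the latter lets us replace the summation over $\wh T^c\setminus L$ (which is where this term enters the proof of Theorem~\ref{thm_rate_Ahat}) by the summation over $T^c\setminus L$, on which every $j$ satisfies $n^{-1}\sum_i\M_{ji}\ge\log(d)/(nN)$, so the third bound of Lemma~\ref{lem_delta_II} is available.

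For a fixed such $j$, with $S_j=\mathrm{supp}(B_{j\cdot})$ and $|S_j|=s_j$, I would bound $|(\Delta^{(j)})^\top(\wh h^{(j)}-h^{(j)})|/\|\Delta^{(j)}\|$ by splitting the inner product over $S_j$ and $S_j^c$: the $S_j$ part is at most $\|\Delta^{(j)}\|$ times $\bigl(\sum_{k\in S_j}(\max_{i\in L_k}\delta_{ij})^2\bigr)^{1/2}$ by Cauchy--Schwarz, while the $S_j^c$ part is at most $\|\Delta^{(j)}_{S_j^c}\|_1\cdot c_1\max_k\max_{i\in L_k}\delta_{ij}$, where the cone inequality (\ref{eq_Delta}), $\|\Delta^{(j)}_{S_j^c}\|_1\le\|\Delta^{(j)}_{S_j}\|_1\le\sqrt{s_j}\|\Delta^{(j)}\|$, supplies the crucial factor $\sqrt{s_j}$. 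After dividing by $\|\Delta^{(j)}\|$ and restoring the prefactor $\sqrt{s_j}\,\u_j/p$, the target sum is controlled by a constant multiple of $\sum_{j\in T^c\setminus L}\bigl(\sqrt{s_j}\,[\sum_{k\in S_j}(\tfrac{\u_j}{p}\max_{i\in L_k}\delta_{ij})^2]^{1/2}+s_j\max_k\tfrac{\u_j}{p}\max_{i\in L_k}\delta_{ij}\bigr)$, into which I would feed the third estimate of Lemma~\ref{lem_delta_II}, bounding $\tfrac{\u_j}{p}\max_{i\in L_k}\delta_{ij}$ by a sum of four pieces built from $(1+\rho_j)\tfrac{K\log d}{\g_k nN}$, $\sqrt{\tfrac{p\log^4 d}{\uu_L nN^3}}$, $\tfrac{K}{\g_k}\sqrt{(1+\rho_j)\tfrac{\psi_{jk}\log d}{nN}}$ and $\sqrt{\tfrac{\u_j}{p}\tfrac{K\log d}{\g_k nN}}$. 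Aggregating the first piece over $j$ (and $k\in S_j$) with $\g_k\ge\ug$ and $\sum_{j\in L^c}s_j(1+\rho_j)=(s_J+|I|-|L|)+\wt s_J$ gives the $\max\{\cdot\}\,K\log d/(\ug nN)$ term; the second piece gives the $\max\{\cdot\}\sqrt{p\log^4 d/(\uu_L nN^3)}$ term; and the fourth piece, via Cauchy--Schwarz over $j$, $\sum_j\u_j/p\le1$ and $s_j\le K$, gives the $K\sqrt{\max\{\cdot\}\log d/(\ug nN)}$ term.

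The main obstacle is the third piece, the one carrying $\sqrt{\psi_{jk}}$: the weight $1/\g_k$, the extra $\sqrt{s_j}$ inherited from the cone bound, and the double summation over $j$ and $k$ conspire to threaten spurious extra powers of $K$ or $1/\ug$. To avoid them one has to spend the mass $\psi_{jk}$ judiciously --- against $\g_k$ through $K\psi_{jk}/\g_k\le1$ (which follows from $\psi_{jk}\le\g_k/K$, i.e. $C_{ak}\le\g_k/K$ together with $\sum_a A_{ja}=1$) for the coordinates $k\in S_j$, and against the total mass $\sum_k\psi_{jk}=\u_j/p$ before invoking Cauchy--Schwarz across $j$ --- so that this piece, too, collapses into $K\sqrt{\max\{s_J+|I|-|L|,\wt s_J\}\,\log d/(\ug nN)}$. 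Getting that bookkeeping exactly right, rather than ending up loose by a $\sqrt K$ or $\sqrt{1/\ug}$ factor, is the delicate point of the argument.
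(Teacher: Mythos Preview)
Your setup is right up to the point where you split the inner product over $S_j$ and $S_j^c$; from there the argument runs into a genuine gap. The paper does \emph{not} split coordinates by the support of $B_{j\cdot}$. Instead it splits the bound from Lemma~\ref{lem_delta_II} into two groups of terms,
\[
T_1^{(jk)}=(1+\rho_j)\tfrac{K\log d}{\g_k nN}+\sqrt{\tfrac{p\log^4 d}{\uu_L nN^3}},\qquad
T_2^{(jk)}=\tfrac{K}{\g_k}\sqrt{(1+\rho_j)\tfrac{\psi_{jk}\log d}{nN}}+\sqrt{\tfrac{\u_j}{p}\tfrac{K\log d}{\g_k nN}},
\]
and then pairs $T_1$ with $\|\Delta^{(j)}\|_1\le 2\sqrt{s_j}\|\Delta^{(j)}\|$ (giving the $s_j\max_kT_1^{(jk)}$ contribution) and $T_2$ with Cauchy--Schwarz over \emph{all} $k\in[K]$ (giving $\|\Delta^{(j)}\|[\sum_k(T_2^{(jk)})^2]^{1/2}$). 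The final step is a single Cauchy--Schwarz over $j$ using $\sum_{j}\sum_k K^2\psi_{jk}/\g_k^2\le K^2/\ug$, which relies on $\sum_{j}\psi_{jk}=\g_k/K$ and crucially requires summing, not maximizing, in $k$.

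Your support split forces the $\psi_{jk}$ piece through the $\ell_1/\ell_\infty$ bound on $S_j^c$, producing $\sum_j s_j\max_k\tfrac{K}{\g_k}\sqrt{(1+\rho_j)\psi_{jk}\log d/(nN)}$. Your proposed remedy, $K\psi_{jk}/\g_k\le 1$, is based on the false premise ``$\sum_a A_{ja}=1$'': it is the \emph{columns} of $A$ that sum to one, not the rows, so in general $K\psi_{jk}/\g_k\le\|A_{j\cdot}\|_1$ only. With either $\psi_{jk}\le\u_j/p$ or $\psi_{jk}\le\alpha_j\g_k/(pK)$ the resulting bound on that term is loose by exactly the $\sqrt{K/\ug}$ you were worried about, and there is no sparsity structure on $\psi_{jk}$ for $k\notin S_j$ to exploit. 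The fix is not better bookkeeping within your decomposition but a different decomposition: route the $\sqrt{\psi_{jk}}$ and $\sqrt{\u_j/p}$ pieces through Cauchy--Schwarz in $k$ (over all of $[K]$) and reserve the cone inequality for the $k$-constant pieces.
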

	\begin{proof}
		Pick any $j\in T^c\setminus L$.  From the definition of $\wh h^{(j)}$ in (\ref{est_M_h}), we have
		\begin{align*}
		{\u_j \over p}| (\Delta^{(j)}) ^\top  (\wh h^{(j)} - h^{(j)})| &\le \sum_{k = 1}^K |\Delta^{(j)}_k|\cdot {\u_j \over p}\left|\wh h^{(j)}_k - h^{(j)}_k\right|\\
		&\le \sum_{k = 1}^K |\Delta^{(j)}_k|\cdot {\u_j \over p}\left|
		{1\over |L_k|}\sum_{i\in L_k}\left(\wh R_{ij} - R_{ij}\right)\right|\\
		&\le c_1\sum_{k = 1}^K |\Delta^{(j)}_k|\cdot {\u_j \over p} \max_{i\in L_k}\delta_{j\ell},
		\end{align*}
		with probability $1-O(d^{-1})$,  invoking Lemma \ref{lem_delta} and  inequality (\ref{lb_u_T}). Application of  the third part of  Lemma \ref{lem_delta_II} further gives
		\begin{align*}
		{\u_j \over p}| (\Delta^{(j)}) ^\top  (\wh h^{(j)} - h^{(j)})| &\le c_1 \|\Delta^{(j)}\| \left[\sum_{k = 1}^K \left(T_2^{(jk)}\right)^2\right]^{1/2} + c_1 \|\Delta^{(j)}\|_1 \max_{1\le k\le K} T_1^{(jk)}\\
		&\overset{(\ref{eq_Delta})}{\le} c_1 \|\Delta^{(j)}\| \left[\sum_{k = 1}^K \left(T_2^{(jk)}\right)^2\right]^{1/2} +2c_1 \sqrt{s_j}\|\Delta^{(j)}\| \max_{1\le k\le K} T_1^{(jk)},
		\end{align*}
		where 
		\begin{align}\label{def_T_jk_1}
		T_1^{(jk)} &=\left(1 + \rho_j\right){K\log(d) \over \g_k nN}+\sqrt{p\log^4(d) \over \uu_L n N^3}\\\label{def_T_jk_2}
		T_2^{(jk)} &=  {K \over \g_k }
		\sqrt{\left(1+ \rho_j\right){\psi_{jk}\log(d) \over nN}} +\sqrt{{\u_j \over p} {K\log(d) \over \g_k n N}}.
		\end{align}
		Hence, by the Cauchy-Schwarz inequality,
		\begin{align*}
		&\sum_{j\in T^c\setminus L}\sqrt{s_j}\cdot {\u_j \over p}{| (\Delta^{(j)}) ^\top  (\wh h^{(j)} - h^{(j)})| \over \|\Delta^{(j)}\|}\\
		&  \lesssim \sqrt{\sum_{j\in T^c\setminus L} (1+\rho_j)s_j} \left\{\sum_{j\in T^c\setminus L}\sum_{k = 1}^K\left({K^2\psi_{jk} \log(d) \over \g_k^2 nN}+ {\u_j K\log(d) \over \g_k npN}\right)\right\}^{1\over 2}\\
		&\quad  + \sum_{j\in T^c\setminus L}s_j \max_{k\in [K]}T_1^{(jk)}.
		\end{align*}
		We conclude our proof by observing that 
		\begin{align*}
		\sum_{j\in T^c\setminus L}s_j &\le s_J+|I|-|L|\\
		\sum_{j\in T^c\setminus L}\sum_{k = 1}^K {K^2\psi_{jk} \over \g_k^2}
		&\le 
		\sum_{k = 1}^K{K^2\over \g_k^2}\sum_{j=1}^p{\psi_{jk}}\overset{(\ref{eq_psi_jk})}{\le} {K^2\over \ug },\\
		\sum_{j\in T^c\setminus L}\sum_{k = 1}^K{\u_j \over p}{K\log(d)\over \g_k nN} &\le {K^2 \log(d) \over \ug nN}
		\end{align*}
		by $\sum_{j=1}^p \u_j = p$. 
	\end{proof}
	
	\begin{lemma}\label{lem_M_beta}
			Under conditions of Theorem \ref{thm_rate_Ahat}, with probability $1-O(d^{-1})$,
			\begin{align*}
			&\sum_{j\in T^c\setminus L}\sqrt{s_j}\cdot {\u_j \over p}{| (\Delta^{(j)}) ^\top  (h^{(j)} - \wh MB_{j\cdot}| \over \|\Delta^{(j)}\|}\\ 
			&\quad \lesssim \wt s_J \sqrt{p\log^4(d)\over \uu_L nN^3} +{K \wt s_J  \log(d) \over  \ug nN}+ K\sqrt{\wt s_J \log(d) \over \ug nN}.
			\end{align*}
	\end{lemma}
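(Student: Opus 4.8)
The plan is to recast this quantity as a controlled bilinear form in $\wh M-M$ and then imitate the argument used for Lemma \ref{lem_h_hat}. The key structural observation is that, by (\ref{twee}), $h^{(j)}$ is the column of $H=(B_L^\top B_L)^{-1}B_L^\top R_{LL^c}=MB_{L^c}^\top$ attached to the row $B_{j\cdot}$, so $h^{(j)}=MB_{j\cdot}$ and hence $h^{(j)}-\wh MB_{j\cdot}=(M-\wh M)B_{j\cdot}$. Expanding coordinatewise and using $\bar B=D_\M B=AD_W$ from (\ref{def_B}), which yields $(\u_j/p)\,B_{jb}=A_{jb}\g_b/K$, I would bound
\[
\frac{\u_j}{p}\bigl|(\Delta^{(j)})^\top(h^{(j)}-\wh MB_{j\cdot})\bigr|\ \le\ \sum_{a=1}^K|\Delta^{(j)}_a|\sum_{b=1}^K\frac{A_{jb}\g_b}{K}\,|(\wh M-M)_{ab}|,
\]
and, since $(\wh M-M)_{ab}=\frac{1}{|L_a||L_b|}\sum_{i\in L_a,\ell\in L_b}(\wh R_{i\ell}-R_{i\ell})$ by (\ref{est_M_h}), I would invoke Lemma \ref{lem_delta} on the anchor block $L\times L$ — valid under (\ref{cond_Pi_I}), exactly as in the proof of Lemma \ref{lem_M_hat} — to obtain $|(\wh M-M)_{ab}|\lesssim\max_{i\in L_a,\ell\in L_b}\delta_{i\ell}$ with probability $1-O(d^{-1})$.

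The heart of the argument is then the control of the inner weighted sum $\sum_b(A_{jb}\g_b/K)\max_{i\in L_a,\ell\in L_b}\delta_{i\ell}$, for which the second display of Lemma \ref{lem_delta_II} is tailor-made: reading its index $k$ as our $a$ and its index $a$ as our $b$, and noting that the outer $\max_{i\in L_a}$ costs nothing because the bound (\ref{disp_delta_II}) on $\delta_{i\ell}$ is non-increasing in $\alpha_i$ while $\alpha_i\ge\ua_L$ on $L$, one obtains a coordinatewise estimate on $c_{ja}:=(\u_j/p)\,|((\wh M-M)B_{j\cdot})_a|$ whose leading piece carries a factor $\psi_{ja}$ and the weight $\rho_j=\alpha_j/\ua_L$, together with lower-order pieces proportional to $\|A_{j\cdot}\|_1$ and to the $N^{-3/2}$-scale remainder of (\ref{disp_delta_II}). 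I would then use Cauchy--Schwarz over $a$ — $\sum_a|\Delta^{(j)}_a|c_{ja}\le\|\Delta^{(j)}\|\,(\sum_a c_{ja}^2)^{1/2}$ for the leading piece and $\sum_a|\Delta^{(j)}_a|c_{ja}\le\|\Delta^{(j)}\|_1\max_a c_{ja}$ for the remainder, with $\|\Delta^{(j)}\|_1\le2\sqrt{s_j}\|\Delta^{(j)}\|$ from (\ref{eq_Delta}) — so that after dividing by $\|\Delta^{(j)}\|$ the per-word weights become $\sqrt{s_j\rho_j}$ and $s_j\rho_j$, respectively.

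Finally I would sum over $j\in T^c\setminus L$. For the leading term, Cauchy--Schwarz in the form $\sum_j\sqrt{s_j\rho_j}\,(\sum_a c_{ja}^2/\rho_j)^{1/2}\le(\sum_j\rho_js_j)^{1/2}(\sum_j\sum_a c_{ja}^2/\rho_j)^{1/2}=\sqrt{\wt s_J}\,(\sum_j\sum_a c_{ja}^2/\rho_j)^{1/2}$, and the double sum collapses through the telescoping identities $\sum_j\psi_{ja}=\g_a/K$ (eq.\ (\ref{eq_psi_jk})), $\sum_j\u_j=p$ and $\sum_j\|A_{j\cdot}\|_1=K$, producing the term $K\sqrt{\wt s_J\log(d)/(\ug nN)}$; for the remainder, $\sum_{j\in L^c}s_j\rho_j=\wt s_J$ directly gives the two terms linear in $\wt s_J$, namely $K\wt s_J\log(d)/(\ug nN)$ and $\wt s_J\sqrt{p\log^4(d)/(\uu_L nN^3)}$. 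I expect the main obstacle to be precisely this last stage of bookkeeping: the $\rho_j$-reweighting has to be threaded through every Cauchy--Schwarz so that the sums close to exactly $\wt s_J=\sum_{j\in L^c}\rho_js_j$ and not a coarser quantity, while all the anchor-word parameters end up in the same configuration as in the claim — which is where the analysis genuinely departs from, and refines, the corresponding estimates in \cite{Top}.
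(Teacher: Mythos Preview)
Your overall architecture is exactly that of the paper: use $h^{(j)}=MB_{j\cdot}$, split each coordinate of $(\u_j/p)(\wh M-M)B_{j\cdot}$ into a ``leading'' piece carrying $\psi_{ja}$ and a ``remainder'' piece, pass through the Cauchy--Schwarz/$\|\Delta^{(j)}\|_1\le 2\sqrt{s_j}\|\Delta^{(j)}\|$ dichotomy, and close the sums over $j$ with $\wt s_J=\sum_{j\in L^c}\rho_js_j$ and the identities $\sum_j\psi_{ja}=\g_a/K$, $\sum_j\|A_{j\cdot}\|_1=K$, $\sum_j\u_j=p$.

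The gap is in how you control the coordinatewise error $(\u_j/p)|((\wh M-M)B_{j\cdot})_a|$. You propose to bound $|(\wh M-M)_{ab}|\lesssim\max_{i\in L_a,\ell\in L_b}\delta_{i\ell}$ via Lemma \ref{lem_delta} and then feed this into the second display of Lemma \ref{lem_delta_II}. But that display (or its intermediate form in the proof) yields a leading term $K\sqrt{K\rho_j\psi_{ja}\log(d)/(\ug\g_a nN)}$, with an extra $\sqrt{K}$ compared to what is needed; the $\sqrt{K}$ enters through the Cauchy--Schwarz step $\sum_b A_{jb}\sqrt{C_{ab}}\le\sqrt{\|A_{j\cdot}\|_1\,\psi_{ja}}$ and $\|A_{j\cdot}\|_1\le K\a_j/p$. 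After your final Cauchy--Schwarz in $j$, this produces $K^{3/2}\sqrt{\wt s_J\log(d)/(\ug nN)}$ rather than the claimed $K\sqrt{\wt s_J\log(d)/(\ug nN)}$, and the remainder term similarly inflates from $K\wt s_J\log(d)/(\ug nN)$ to $K^2\wt s_J\log(d)/(\ug nN)$. So the lemma as stated does not follow.

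The paper recovers this factor by \emph{not} applying the entrywise bound $|\wh R_{i\ell}-R_{i\ell}|\le c_1\delta_{i\ell}$ to the whole expression. It splits $\wh R_{i\ell}-R_{i\ell}$ into a normalization part $(\u_i\u_\ell-\wh\u_i\wh\u_\ell)\wh R_{i\ell}/(\u_i\u_\ell)$ (called $\text{Rem}_2$), where your $\delta$-based route is indeed used, and a signal part $p^2(\wh\Theta_{i\ell}-\Theta_{i\ell})/(\u_i\u_\ell)$ (called $\text{Rem}_1$). For $\text{Rem}_1$ it opens $\wh\Theta_{i\ell}-\Theta_{i\ell}$ into its four multinomial pieces and then applies concentration \emph{directly to the weighted combination} $\zeta_{rt}=\sum_{b}\sum_{\ell\in L_b}\tfrac{A_{jb}}{A_{\ell b}|L_b|}Z_{rt}^{(\ell)}$ across $\ell\in L$ (Lemmas \ref{lem_Rem_11} and \ref{lem_Rem_13}). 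Because at most one coordinate of the centered multinomial $Z_{rt}$ is nonzero, $|\zeta_{rt}|\le 2\rho_j$ and $\mathrm{Var}(\sum_r\zeta_{rt})\le N\rho_j\M_{jt}$, so Bernstein gives the $\psi_{ja}$ term \emph{without} the extra $\sqrt{K}$; likewise the cross term collapses via $\sum_b A_{jb}W_{bt}=\M_{jt}$ before invoking Lemma \ref{lem_t2}. This is precisely the ``refined analysis'' the paper flags, and it is what your proposal is missing.
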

	\begin{proof}
      We work on the event 
		\begin{equation}
			\E := \bigcap_{i\in L}\left\{
				 {1\over n}\left|\sum_{t = 1}^n \eps_{it}\right| \le 6\sqrt{\u_i \log(d) \over npN}
			\right\} \bigcap \left\{
			\bigcap_{i, \ell \in L}	\left\{|\wh R_{i\ell} - R_{i\ell}| \le c_1\delta_{i\ell}\right\}
			\right\}.
		\end{equation}
		Lemmas \ref{lem_t1}, \ref{lem_delta} and (\ref{cond_Pi_I}) guarantee that $\PP(\E) \ge 1-O(d^{-1})$. The event $\E$ and (\ref{cond_Pi_I}) further imply 
		\begin{equation}\label{bd_u_hat}
				c {\u_i \over p} \le {\wh u_i \over p} \le c'{\u_i \over p},\qquad \text{for all}\quad i\in L,
		\end{equation}
		for some constants $c, c'>0$ and (\ref{lb_uL_mL}).
		Pick any $j\in T^c\setminus L$ and $k\in [K]$. Observe that $h^{(j)} = MB_{j\cdot}$ and 
		\begin{equation}\label{N1}
		B_{ja} =  {p\over \u_j}A_{ja}{\g_a \over K}.
		\end{equation}
		From (\ref{def_R_hat}) and (\ref{est_M_h}), we have 
		\begin{align*}
			{\u_j \over p}\left|
				(\wh M_{k\cdot} - M_{k\cdot})^\top B_{j\cdot}
			\right| &= {1\over K}\left|
			\sum_{a=1}^K  {A_{ja}\g_a\over |L_k||L_a|}\sum_{i\in L_k, \ell\in L_a} \left(\wh R_{i\ell} -R_{i\ell}\right)
			\right|\\
			&={1\over K}\left|
			\sum_{a=1}^K  {A_{ja}\g_a\over |L_k||L_a|}\sum_{i\in L_k, \ell\in L_a} \left({p^2\wh \Theta_{i\ell} \over \wh u_i \wh u_\ell}- {p^2\Theta_{i\ell} \over \u_i \u_\ell}\right)
			\right|\\
			&\le {1\over K}\left|
			\sum_{a=1}^K  {A_{ja}\g_a\over |L_k||L_a|}\sum_{i\in L_k, \ell\in L_a} {p^2(\wh \Theta_{i\ell} - \Theta_{i\ell})\over  \u_i \u_\ell}\right|\\
			&\quad + 
			 {1\over K}\left|
			\sum_{a=1}^K  {A_{ja}\g_a\over |L_k||L_a|}\sum_{i\in L_k, \ell\in L_a} {(\u_i\u_\ell - \wh \u_i \wh \u_\ell)\over  \u_i \u_\ell}\wh R_{i\ell}\right|\\
			& := {\rm Rem_1}^{(jk)} + {\rm Rem_2}^{(jk)}.
		\end{align*}
	 For ${\rm Rem_2}^{(jk)}$,  we find
	 \begin{align*}
	 	{\rm Rem_2}^{(jk)}&\le \left|
	 	\sum_{a=1}^K  {A_{ja}\g_a\over K}{1 \over  |L_k||L_a|}\sum_{i\in L_k, \ell\in L_a} {[\u_i(\u_\ell- \wh \u_\ell) + (\u_i - \wh \u_i)\wh \u_\ell ]\over  \u_i \u_\ell}\wh R_{i\ell}\right|\\
	 	&\lesssim   
	 	\sum_{a=1}^K  {A_{ja}\g_a\over K}{1 \over  |L_k||L_a|}\sum_{i\in L_k, \ell\in L_a}\wh R_{i\ell}\max_{i\in L_k, \ell\in L_a}\left({|\u_\ell - \wh \u_\ell| \over \u_\ell} + {|\wh \u_i - \u_i| \over \u_i}\right) \\
	 	&\lesssim \sum_{a=1}^K  {A_{ja}\g_a\over K}\left(\sqrt{pK\log(d)\over \ua_L\g_k nN} + \sqrt{pK\log(d)\over \ua_L\g_a nN}\right){1 \over  |L_k||L_a|}\sum_{i\in L_k, \ell\in L_a}(R_{i\ell} + c_1 \delta_{i\ell})\\
	 	&\le 
	 	2\sum_{a=1}^K  {A_{ja}C_{ka} K\over \g_k}\sqrt{pK\log(d)\over \ua_L\ug nN}+ \sqrt{p\log(d)\over \uu_LnN}\sum_{a=1}^K  {A_{ja}\g_a\over K}\max_{i\in L_k, \ell\in L_a}\delta_{i\ell}.
		 \end{align*}
We use (\ref{bd_u_hat}) in the second line, the definition of the event $\E$ together with (\ref{eq_mu_I}) in the third line and
	 \[
	 		R_{i\ell} = {p^2\Theta_{i\ell} \over \u_i \u_\ell} = {K^2C_{ka} \over \g_k \g_a}
	 \]
	(follows  from (\ref{eq_mu_I}) and (\ref{eq_Theta_II})) in the fourth line. We bound the first term on the right as
	 \begin{align*}
	 \sum_{a=1}^K  {A_{ja}C_{ka} K\over \g_k}\sqrt{pK\log(d)\over \ua_L\ug nN} =   { \psi_{jk} K\over \g_k}\sqrt{pK\log(d)\over \ua_L\ug nN} \le K\sqrt{\rho_j\psi_{jk}\log(d)\over \ug\g_k  nN} 
	 \end{align*}
	 by using
	 \begin{equation}\label{ubd_psi}
	 \psi_{jk} = {1\over n}\sum_{t = 1}^n\sum_{a=1}^K A_{ja}W_{at}W_{kt} \le \|A_{j\cdot}\|_\i {1\over n}\sum_{t = 1}^n\sum_{a=1}^K W_{at} W_{kt}={\a_j \g_k\over pK}.
	 \end{equation}
	 Invoking the second result of Lemma \ref{lem_delta_II} gives
	 \begin{equation}\label{bd_Rem_2}
	 	{\rm Rem_2}^{(jk)} \lesssim
	 		K\sqrt{\rho_j\psi_{jk}\log(d)\over \ug\g_k  nN} + \sqrt{\rho_j\|A_{j\cdot}\|_1\log(d) \over \g_k n N}. 
	 \end{equation}
	 
	 \noindent 
	 We proceed to bound ${\rm Rem_1}^{(jk)}$. Recalling (\ref{N1}) and  $\u_{\ell} / p = A_{\ell a}\g_a / K$ from (\ref{eq_mu_I}), we find
	 \begin{align*}
	 	{\rm Rem_1}^{(jk)} & = \left|
	 		\sum_{a=1}^K  {A_{ja}\over |L_k||L_a|}\sum_{i\in L_k, \ell\in L_a} {p(\wh \Theta_{i\ell} - \Theta_{i\ell})\over  \u_i A_{\ell a}}\right|\\
	 		&\le \max_{i\in L_k}{p\over \u_i}\left|
	 		\sum_{a=1}^K  {1 \over |L_a|}\sum_{\ell\in L_a} {A_{ja}\over A_{\ell a}}(\wh \Theta_{i\ell} - \Theta_{i\ell})\right|.
	 \end{align*}
	 Since, for any $i\in L_k$, $j\in L_a$, 
	 \begin{align*}
	 	\wh \Theta_{i\ell} - \Theta_{i\ell} &= {N\over N-1}\left({1\over n}A_{ik}W_{k}^\top \eps_\ell + {1\over n}A_{\ell a}W_a^\top \eps_i\right) + {N\over N-1}\left({1\over n}\eps_i^\top \eps_\ell - {1\over n}\EE\left[\eps_i^\top \eps_\ell\right] \right)\\
	 	&\quad  - {1\over N-1}\diag\left({1\over n}\sum_{t=1}^{n}\eps_{it} \right)1_{\{i = \ell\}},
 	 \end{align*}
 	 cf.  \cite[page 11 in the Supplement]{Top}, we obtain
 	  \begin{align}\label{def_Rem_1}\nonumber
 	{\rm Rem_1}^{(jk)} &\lesssim \max_{i\in L_k}{p\over \u_i}\Biggl\{
 	 		 A_{ik}\left|
 	 		 	\sum_{a=1}^K  {1 \over |L_a|}\sum_{\ell\in L_a} {A_{ja}\over A_{\ell a}}{1\over n}\sum_{t=1}^nW_{kt}\eps_{\ell t}
 	 		 \right|+\left|
 	 		 \sum_{a=1}^K  A_{ja}{1\over n}\sum_{t=1}^{n}W_{at}\eps_{it}
 	 		 \right|\\\nonumber
 	 		 &\quad\qquad +
 	 		 \left|
 	 		 	\sum_{a=1}^K  {1 \over |L_a|}\sum_{\ell\in L_a} {A_{ja}\over A_{\ell a}}\left({1\over n}\eps_i^\top \eps_\ell - {1\over n}\EE\left[\eps_i^\top \eps_\ell\right] \right)
 	 		 \right|+ {A_{jk}\over NA_{ik}}\left|{1\over n}\sum_{t = 1}^n \eps_{it}\right|
 	 \Biggr\}\\
 	 & := {\rm Rem_{11}}^{(jk)} + {\rm Rem_{12}}^{(jk)} + {\rm Rem_{13}}^{(jk)} + {\rm Rem_{14}}^{(jk)}.
 	 \end{align}
 	 In the sequel, we provides separate   bounds the each of the four terms. We start with the last term and obtain on the event $\E$  
 	 \begin{equation}\label{bd_Rem_14} 	 
 	 		{\rm Rem_{14}}^{(jk)} \le \max_{i\in L_k}{p\over \u_i}{A_{jk}\over NA_{ik}}\left|{1\over n}\sum_{t = 1}^n \eps_{it}\right| \le {6\rho_j} \sqrt{p\log(d)\over \uu_L nN^3}.
 	 \end{equation}
 	 by recalling that $\rho_j = \a_j / \ua_L$.
 	 Observing that $\sum_a A_{ja}W_{at} = \M_{jt}$, with probability $1-O(d^{-1})$, the second term can be upper bounded by using Lemma \ref{lem_t2} as 
 	 \begin{align}\label{bd_Rem_12} \nonumber	 
 	 			{\rm Rem_{12}}^{(jk)}  & = \max_{i\in L_k}{p\over \u_i}\left|{1\over n}\sum_{t = 1}^n\M_{jt}\eps_{it} \right|\\\nonumber
 	 			&\le \max_{i\in L_k}{p\over \u_i}\left({\sqrt{6m_j\Theta_{ji}\log(d)\over npN}}+{2m_j \log(d) \over  npN}\right)\\\nonumber
 	 			&\le \max_{i\in L_k}\left({K\over \g_k}{\sqrt{6\a_j \psi_{jk}\log(d)\over \a_i nN}}+{2\a_j \log(d) \over  \u_i nN}\right)\\
 	 			&\le {K\over \g_k}{\sqrt{6\rho_j \psi_{jk}\log(d)\over  nN}}+{2\rho_j K \log(d) \over  \g_k nN}
 	 \end{align}
 	 where we also use(\ref{eq_m}) and (\ref{eq_Theta_Ij}) to derive the third line and use (\ref{eq_mu_I}) to arrive at the last line. 
 	 The upper bounds of  ${\rm Rem_{11}}^{(jk)}$ and ${\rm Rem_{13}}^{(jk)}$ are proved in Lemmas \ref{lem_Rem_11} and \ref{lem_Rem_13}. Combination of (\ref{bd_Rem_2}), (\ref{bd_Rem_14}), (\ref{bd_Rem_12}), (\ref{bd_ Rem_11}) and (\ref{bd_ Rem_13}) yields
 	\begin{align*}
 				{\u_j \over p}\left|
 				(\wh M_{k\cdot} - M_{k\cdot})^\top B_{j\cdot}
 				\right| &\lesssim 
 				  {\rho_j} \sqrt{p\log^4(d)\over \uu_L nN^3} +{2\rho_j K \log(d) \over  \g_k nN}\\
 				&\quad +K\sqrt{\rho_j\psi_{jk}\log(d)\over \ug\g_k  nN} +\sqrt{\rho_j\|A_{j\cdot}\|_1\log(d) \over \g_k n N} + \sqrt{{\u_j \over p}{\rho_jK \log(d)\over \g_k n N}} 
 		\end{align*} 
 	with probability $1-O(d^{-1})$. Next we use similar arguments as in the proof of Lemma \ref{lem_h_hat}. Analogous to (\ref{def_T_jk_1}) -- (\ref{def_T_jk_2}), we define 
 	\begin{align*}
 		\rho_j R^{(k)}_{1} &=   \sqrt{p\log^4(d)\over \uu_L nN^3} +{2 K \log(d) \over  \g_k nN}\\
 		\sqrt{\rho_j}R^{(jk)}_{2} &=  K\sqrt{\psi_{jk}\log(d)\over \ug\g_k  nN} +\sqrt{\|A_{j\cdot}\|_1\log(d) \over \g_k n N} + \sqrt{{\u_j \over p}{K \log(d)\over \g_k n N}}.
 	\end{align*}
 	We can obtain 
 	\[
 			{\u_j \over p}\left|(\Delta^{(j)})^\top 
 			(\wh M - M)B_{j\cdot}
 			\right| \lesssim \|\Delta^{(j)}\|\sqrt{\rho_j} \left[\sum_{k = 1}^K \left(R_2^{(jk)}\right)^2\right]^{1/2} + \rho_j\sqrt{s_j}\|\Delta^{(j)}\| \max_{1\le k\le K} R_1^{(k)}
 	\]
 	which, by the Cauchy-Schwarz inequality, further gives 
 	\begin{align*}
 		&\sum_{j\in T^c\setminus L}\sqrt{s_j}\cdot {\u_j \over p}{| (\Delta^{(j)}) ^\top  (h^{(j)} - \wh MB_{j\cdot}| \over \|\Delta^{(j)}\|}\lesssim \sqrt{\wt s_J} \left[\sum_{j\in T^c\setminus L}\sum_{k = 1}^K \left(R_2^{(jk)}\right)^2\right]^{1\over 2}+ \wt s_J\max_{k\in [K]}R_1^{(k)}.
 	\end{align*}
 	Finally, we calculate $\sum_{j\in T^c\setminus L}\sum_{k = 1}^K (R_2^{(jk)})^2$ as 
 	 \begin{align*}
 	 		&\sum_{j\in T^c\setminus L}\sum_{k = 1}^K (R_2^{(jk)})^2\\
 	 		&\le \sum_{j\in T^c\setminus L}\sum_{k = 1}^K \left\{{K^2\psi_{jk}\log(d)\over \ug\g_k  nN} +{\|A_{j\cdot}\|_1\log(d) \over \g_k n N} + {{\u_j \over p}{\rho_jK \log(d)\over \g_k n N}}\right\}\\
 	 		&\le {3K^2\log(d)\over \ug  nN} .
 	 \end{align*}
 	 We use (\ref{sum_to_1}), (\ref{eq_psi_jk}) and $\sum_{j=1}^p\|A_{j\cdot}\|_1 = K$ to arrive at the last line.
	\end{proof}

	\begin{lemma}\label{lem_lbd_beta}
		     Let $\lambda$ be chosen as in (\ref{rate_lambda_thm}). With probability $1-O(d^{-1})$, 
		     \[
		     	\lambda \sum_{j\in T^c\setminus L}{\u_j \over p}\sqrt{s_j}\|B_{j\cdot}\| \le 	cK \sqrt{{\og \over \ug}\cdot {K\wt s_J\log(d) \over \ug nN}}. 
		     \]
	\end{lemma}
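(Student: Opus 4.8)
\noindent\emph{Proof plan.} Once $\lambda$ is fixed as in (\ref{rate_lambda_thm}) the left-hand side is deterministic, so I would derive the bound as a chain of elementary inequalities (the ``$1-O(d^{-1})$'' only records that (\ref{rate_lambda_thm}) is the operative choice of $\lambda$). I would proceed in three steps: (i) use the explicit form of $B$ to control each $\tfrac{\u_j}{p}\|B_{j\cdot}\|$ in terms of $\a_j$ and $\u_j$; (ii) sum over $j\in T^c\setminus L$ via a Cauchy--Schwarz split designed to expose the normalization $\sum_j\u_j=p$; (iii) substitute the formula for $\lambda$ and simplify.

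For step (i), I would recall from (\ref{N1}) that $B_{ja}=\tfrac{p}{\u_jK}A_{ja}\g_a$, hence $\tfrac{\u_j}{p}B_{ja}=\tfrac1K A_{ja}\g_a$, and from (\ref{prop_B}) that $\|B_{j\cdot}\|_1=1$. Combining the interpolation $\|B_{j\cdot}\|^2\le\|B_{j\cdot}\|_1\|B_{j\cdot}\|_\i=\|B_{j\cdot}\|_\i$ with
\[
\|B_{j\cdot}\|_\i=\frac{p\max_{1\le a\le K}A_{ja}\g_a}{\u_jK}\le\frac{\og}{\u_jK}\,p\max_{1\le a\le K}A_{ja}=\frac{\og\,\a_j}{\u_jK},
\]
and multiplying through by $\tfrac{\u_j}{p}\sqrt{s_j}$, gives the pointwise estimate $\tfrac{\u_j}{p}\sqrt{s_j}\,\|B_{j\cdot}\|\le\sqrt{\og\,s_j\,\u_j\,\a_j/(p^2K)}$.

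For step (ii), I would write the summand as $\sqrt{\u_j}\cdot\sqrt{\og s_j\a_j/(p^2K)}$ and apply Cauchy--Schwarz; since $\sum_j\u_j=p$ by (\ref{sum_to_1}) and, using $\a_j=\ua_L\rho_j$ together with $T^c\setminus L\subseteq L^c$, $\sum_{j\in T^c\setminus L}s_j\a_j\le\ua_L\sum_{j\in L^c}\rho_js_j=\ua_L\wt s_J$, this yields
\[
\sum_{j\in T^c\setminus L}\frac{\u_j}{p}\sqrt{s_j}\,\|B_{j\cdot}\|\;\le\;\sqrt{\frac{\og}{p^2K}}\Bigl(\sum_{j\in T^c\setminus L}\u_j\Bigr)^{1/2}\Bigl(\sum_{j\in T^c\setminus L}s_j\a_j\Bigr)^{1/2}\;\le\;\sqrt{\frac{\og\,\ua_L\,\wt s_J}{pK}}.
\]
Then, for step (iii), I would substitute $\lambda=cK\ug^{-1}\sqrt{pK\log d/(\uu_LnN)}$ and use the anchor-word identity $\u_i=\a_i\g_k/K$ for $i\in L_k$ from (\ref{eq_mu_I}), which gives $\uu_L\ge\ua_L\ug/K$, i.e. $\ua_L/\uu_L\le K/\ug$; after cancellation this leaves $\lambda\sum_{j\in T^c\setminus L}\tfrac{\u_j}{p}\sqrt{s_j}\|B_{j\cdot}\|\lesssim\tfrac{K}{\ug}\sqrt{\og K\wt s_J\log d/(\ug nN)}$, which is the asserted bound.

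The main obstacle I anticipate is step (ii): the Cauchy--Schwarz split must retain $\u_j$ rather than use the cruder $\u_j\le\a_j$ (which would cost a factor $\sqrt K$), so that $\sum_j\u_j=p$ can be exploited, and the passage from $\ua_L$ to $\uu_L$ must go through the anchor-word identity (\ref{eq_mu_I}); a careless handling of either of these loses a power of $K$ or of the topic-imbalance ratio $\og/\ug$. Everything else is bookkeeping.
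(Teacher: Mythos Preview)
Your proof is correct and follows essentially the same route as the paper. The paper writes $\tfrac{\u_j}{p}\|B_{j\cdot}\|=\tfrac1K\bigl[\sum_k A_{jk}^2\g_k^2\bigr]^{1/2}$ and bounds the bracket by $\bigl(\sum_k A_{jk}\g_k\bigr)\cdot\tfrac{\a_j\og}{p}$, which is exactly your H\"older step $\|B_{j\cdot}\|^2\le\|B_{j\cdot}\|_1\|B_{j\cdot}\|_\infty$ written in coordinates; it then applies the identical Cauchy--Schwarz split $\sqrt{\rho_j s_j}\cdot\sqrt{\sum_k A_{jk}\g_k/K}$ and closes with $\sum_j\u_j/p\le1$ and $\uu_L=\ua_L\ug/K$, arriving at the same final bound $cK\sqrt{\og K\wt s_J\log(d)/(\ug^3 nN)}$.
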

	\begin{proof}
		    Recall that $B_{jk}\u_j / p = A_{jk}\g_k / K$. We have 
		    \begin{align*}
		    		\sum_{j\in T^c\setminus L}{\u_j \over p}\sqrt{s_j}\|B_{j\cdot}\|&= 	{1\over K}\sum_{j\in T^c\setminus L}\sqrt{s_j}\left[
		    			\sum_{k = 1}^K A_{jk}^2\g_k^2 
		    		\right]^{1/2}\\
		    		&\le  {1\over K}\sum_{j\in T^c\setminus L}\sqrt{s_j}\left[
		    		\sum_{k = 1}^K A_{jk}\g_k
		    		\right]^{1/2}\sqrt{{\a_j \og \over p}}.
		    \end{align*}
		    From	$\uu_L = \ua_L\ug / K$ and the choice of $\lambda$, it follows that, with   probability $1-O(d^{-1})$,  
		    \begin{align*}	
		    			\lambda \sum_{j\in T^c\setminus L}{\u_j \over p}\sqrt{s_j}\|B_{j\cdot}\| 
		    		&\le  {c\over \ug}\sqrt{pK^2\log(d) \over \ua_L \ug nN}  \sum_{j\in T^c\setminus L}\sqrt{s_j}\left[
		    		\sum_{k = 1}^K A_{jk}\g_k
		    		\right]^{1/2}\sqrt{{\a_j \og \over p}}\\
		    		& = c\sqrt{\og K^2\log(d) \over  \ug^3nN}  \sum_{j\in T^c\setminus L}\sqrt{\rho_js_j}\left[
		    		\sum_{k = 1}^K A_{jk}\g_k
		    		\right]^{1/2}\\
		    		&\le cK\sqrt{\og K\log(d) \over  \ug^3nN} \sqrt{\wt s_J}\left[ \sum_{j\in T^c\setminus L}\sum_{k = 1}^K{A_{jk}\g_k \over K}
		    		\right]^{1/2}\\
		    		&=  cK\sqrt{\og \wt s_J K\log(d) \over  \ug^3nN} \left[ \sum_{j\in T^c\setminus L}{\u_j \over p}
		    		\right]^{1/2}\\
		    		&\le cK\sqrt{\og \wt s_J K\log(d) \over  \ug^3nN} .
		    \end{align*}
			Here we use the Cauchy-Schwarz inequality in the third line and the identity $\sum_{j =1}^p \u_j = p$ in the last line. 
			This completes the proof.
	\end{proof}
	
	\subsection{Lemmas used in the proof of Lemma \ref{lem_M_beta}}
	
	Let ${\rm Rem_{11}}^{(jk)}$ and ${\rm Rem_{13}}^{(jk)}$, $j\in [p]$, $k\in [K]$,  be defined as (\ref{def_Rem_1}). 
	
	\begin{lemma}\label{lem_Rem_11}
		Under conditions of Theorem \ref{thm_rate_Ahat}, with probability $1-2d^{-1}$, 
		\begin{equation}\label{bd_ Rem_11}
		{\rm Rem_{11}}^{(jk)} \le {K\over \g_k}\sqrt{6\rho_j\psi_{jk} \log(d)\over nN} + {4\rho_j K\log(d) \over \g_k nN}
		\end{equation}
		uniformly for any $j\in [p]$ and $k\in [K]$.
	\end{lemma}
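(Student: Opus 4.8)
The plan is to reduce the bound to a single application of Lemma~\ref{lem_t2} via the anchor-word structure, after which everything is deterministic manipulation on the corresponding event. First I would simplify the shape of ${\rm Rem_{11}}^{(jk)}$. Every $i\in L_k\subseteq I_k$ is an anchor word, so $\alpha_i=pA_{ik}$ and, by (\ref{eq_mu_I}), $\u_i=\alpha_i\g_k/K$; hence $pA_{ik}/\u_i=K/\g_k$ is \emph{independent} of $i\in L_k$. Pulling the absolute value through the finite sums therefore gives
\[
{\rm Rem_{11}}^{(jk)}\ \le\ \frac{K}{\g_k}\sum_{a=1}^K\frac{A_{ja}}{|L_a|}\sum_{\ell\in L_a}\frac{1}{A_{\ell a}}\left|\frac1n\sum_{t=1}^n W_{kt}\eps_{\ell t}\right|,
\]
so the remaining task is to control the inner quantity $n^{-1}|\sum_t W_{kt}\eps_{\ell t}|$ and then sum.

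Next I would bound that inner quantity. For any $i\in L_k$ we have $\M_{it}=A_{ik}W_{kt}$, so $n^{-1}\sum_t W_{kt}\eps_{\ell t}=A_{ik}^{-1}\,n^{-1}\sum_t\M_{it}\eps_{\ell t}$. I would then invoke Lemma~\ref{lem_t2} with $\M_i$ in the role of ``$\M_\ell$'' and $\eps_\ell$ in the role of ``$\eps_j$''; this holds uniformly over all index pairs on a single event of probability $1-2d^{-1}$, which is exactly why the ``uniformly in $j,k$'' conclusion comes for free and why the stated probability is $1-2d^{-1}$. Using $m_i\le\alpha_i=pA_{ik}$ from (\ref{eq_m}) and the anchor identity $\Theta_{i\ell}=A_{ik}A_{\ell a}C_{ka}$ for $\ell\in L_a$ (from (\ref{eq_Theta_II})), the factor $A_{ik}$ cancels and one obtains
\[
\left|\frac1n\sum_{t=1}^n W_{kt}\eps_{\ell t}\right|\ \lesssim\ \sqrt{\frac{A_{\ell a}C_{ka}\log d}{nN}}+\frac{\log d}{nN},\qquad \ell\in L_a .
\]

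Finally I would substitute this back and evaluate the two resulting sums. For the square-root term the weight $1/A_{\ell a}$ partially cancels: I would write $A_{ja}/\sqrt{A_{\ell a}}\le\sqrt{\rho_j A_{ja}}$ using the anchor lower bound $A_{\ell a}\ge\ua_L/p$ and $\rho_j=\alpha_j/\ua_L$, so the average $|L_a|^{-1}\sum_{\ell\in L_a}$ collapses and what remains is $\sqrt{6\rho_j\log d/(nN)}\sum_a\sqrt{A_{ja}C_{ka}}$, to which Cauchy--Schwarz over $a$ together with $\psi_{jk}=\sum_a A_{ja}C_{ka}$ (definition (\ref{def_psi}), and $\sum_a C_{ka}=\g_k/K$ from (\ref{disp_sum_C})) is applied to extract $\sqrt{\rho_j\psi_{jk}}$. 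For the linear term, each average $|L_a|^{-1}\sum_{\ell\in L_a}(A_{ja}/A_{\ell a})$ is at most $\rho_j$, and summing over the topics $a$ with $A_{ja}\ne 0$ and multiplying through by $K/\g_k$ produces the claimed $\rho_j K\log d/(\g_k nN)$ term.

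The main obstacle is the bookkeeping in this last summation step. One must choose the Cauchy--Schwarz split so that the $C_{ka}$'s recombine \emph{exactly} into $\psi_{jk}$ via (\ref{def_psi})--(\ref{disp_sum_C}) rather than into a cruder $\|A_{j\cdot}\|_1$-type quantity, and one must track carefully the powers of $K$, $\g_k$ and $\rho_j$ and the anchor-count normalizations $1/|L_a|$ as they pass through the nested averages over anchor words --- these are the only delicate cancellations, since all the probabilistic content is contained in the single use of Lemma~\ref{lem_t2}.
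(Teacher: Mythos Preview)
Your reduction is clean up to the point where you obtain
\[
\frac{K}{\g_k}\sqrt{\frac{6\rho_j\log d}{nN}}\ \sum_{a=1}^K \sqrt{A_{ja}C_{ka}},
\]
but the last step fails: you cannot extract $\sqrt{\psi_{jk}}=\sqrt{\sum_a A_{ja}C_{ka}}$ from $\sum_a\sqrt{A_{ja}C_{ka}}$ by Cauchy--Schwarz (in either split). Cauchy--Schwarz over the support gives $\sum_a\sqrt{A_{ja}C_{ka}}\le\sqrt{s_j}\,\sqrt{\psi_{jk}}$, while the other split gives $\sqrt{\|A_{j\cdot}\|_1}\sqrt{\g_k/K}$, and since $\psi_{jk}\le\|A_{j\cdot}\|_1\,\g_k/K$ this second bound is never smaller than the target $\frac{K}{\g_k}\sqrt{\psi_{jk}}$ term either. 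In fact $\sum_a\sqrt{x_a}\ge\sqrt{\sum_a x_a}$ for nonnegative $x_a$, so no manipulation at this stage can remove the extra $\sqrt{s_j}$ (up to $\sqrt{K}$) factor. The loss is intrinsic to the moment you pushed the absolute value through the sum over $a$ and $\ell$: you are adding up $K$ separate standard-deviation bounds instead of computing the standard deviation of the aggregate.

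The paper avoids this by \emph{not} splitting: it keeps
\[
\frac{K}{\g_k}\left|\frac{1}{n}\sum_{t=1}^n W_{kt}\sum_{a=1}^K\frac{1}{|L_a|}\sum_{\ell\in L_a}\frac{A_{ja}}{A_{\ell a}}\eps_{\ell t}\right|
\]
intact and applies Bernstein's inequality directly to the linear combination $\zeta_{rt}=\eta^\top Z_{rt}^L$ (with $\eta$ built from the weights $A_{ja}/(A_{\ell a}|L_a|)$). The point is that the \emph{variance} of this aggregate, $N\sum_t W_{kt}^2\,\eta^\top\diag(\M_{Lt})\eta$, collapses exactly to $\rho_j nN\psi_{jk}$, with no $s_j$ factor, because $\eta^\top\diag(\M_{Lt})\eta\le\rho_j\M_{jt}$ termwise. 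Lemma~\ref{lem_t2} applied componentwise cannot see this cancellation. So your probabilistic input (a single event, probability $1-2d^{-1}$) is convenient, but to recover the stated constant-free bound you must apply a fresh Bernstein bound to the combined sum rather than reuse Lemma~\ref{lem_t2}.
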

	
	\begin{proof}
		We upper bound ${\rm Rem_{11}}^{(jk)}$ by studying 
		\[
		\max_{i\in L_k}{p\over \u_i}
		A_{ik}\left|
		\sum_{a=1}^K  {1 \over |L_a|}\sum_{\ell\in L_a} {A_{ja}\over A_{\ell a}}{1\over n}\sum_{t=1}^nW_{kt}\eps_{\ell t}
		\right| \overset{(\ref{eq_mu_I})}{=} {K\over \g_k}\left|
		{1\over n}\sum_{t=1}^nW_{kt}\sum_{a=1}^K\sum_{\ell\in L_a} {1 \over |L_a|} {A_{ja}\over A_{\ell a}}\eps_{\ell t}
		\right|.
		\]
		Recall that 
		\begin{equation}\label{decomp_eps}
		\eps_{\ell t} = {1\over N}\sum_{r = 1}^N Z_{rt}^{(\ell)}
		\end{equation}
		where $Z_{rt}^{(\ell)}$ denotes the $\ell$th element of $Z_{rt}$ and $Z_{rt}$ has a centered $ \text{Multinomial}_p(1; \M_t)$ (subtracted its mean  $M_t$). Next we will   use   Bernstein's inequality to   bound 
		\begin{equation}\label{def_zeta}
		\left|
		{1\over n}\sum_{t=1}^n\sum_{r=1}^NW_{kt}\left(
		\sum_{a=1}^K\sum_{\ell\in L_a} {1 \over |L_a|} {A_{ja}\over A_{\ell a}}Z_{rt}^{(\ell)}
		\right)
		\right| := \left|
		{1\over n}\sum_{t=1}^n\sum_{r=1}^NW_{kt}\zeta_{rt}
		\right| 
		\end{equation} from above.
		Note that $\EE[W_{kt}\zeta_{rt}] = 0$ and
		\begin{equation}\label{bd_zeta}
		\left|
		W_{kt}\zeta_{rt}
		\right| \le \rho_j 	\sum_{a=1}^K\max_{\ell \in L_a}\left|Z_{rt}^{(\ell)} \right| \le 2\rho_j.
		\end{equation}
		To calculate the variance of $\sum_{t = 1}^n\sum_{r=1}^{N} W_{kt}\zeta_{rt}$, observe that 
		\[
		\zeta_{rt} = \eta^\top  Z_{rt}^{L}
		\]
		with $Z_{rt}^{L}$ denoting the sub-vector of $Z_{rt}$ corresponding to $L$ and 
		\begin{equation}\label{def_eta}
		\eta  = 
		D_L\begin{bmatrix}
		\1_{|L_1|} & & \\
		& \ddots & \\
		& & \1_{|L_K|}
		\end{bmatrix}\begin{bmatrix}
		{A_{j1} / |L_1|} \\ \vdots \\ {A_{jK} / |L_K|}
		\end{bmatrix}\in \R^{|L|}
		\end{equation}
		where $(D_L)_{\ell\ell} = 1/A_{\ell a}$ for any $\ell \in L_a$ and $a\in [K]$. We thus have
		\begin{align}\nonumber
		{\rm Var}\left(
		\sum_{t = 1}^n\sum_{r=1}^{N} W_{kt}\zeta_{rt}
		\right) &\le N\sum_{t = 1}^n W_{kt}^2 \eta^\top \diag(\M_{Lt})\eta\\\nonumber
		& =  nN {1\over n}\sum_{t = 1}^n W_{kt}^2 \sum_{a=1}^K\sum_{\ell\in L_a} \M_{\ell t}\left({A_{ja} \over A_{\ell a}} {1\over |L_a|}\right)^2\\\nonumber
		&\le nN {1\over n}\sum_{t = 1}^n W_{kt} \sum_{a=1}^K{1\over |L_a|}\sum_{\ell\in L_a} W_{a t}{A_{ja}^2 \over A_{\ell a}} \\
		&\le \rho_jnN\psi_{jk},
		\end{align}
		using $W_{kt}\le 1$ and $|L_a| \ge 1$ in the third line and  (\ref{def_psi}) in the last line. Invoke Lemma \ref{lem_bernstein} with $B = 2\rho_j$ and $v = \rho_jnN\psi_{jk}$ to obtain, for any $t>0$,
		\begin{align*}
		\PP\left\{
		{1\over nN}\left|
		\sum_{t = 1}^n\sum_{r = 1}^N W_{kt}\zeta_{rt}
		\right| > t
		\right\} \le 2\exp\left(
		-{n^2N^2 t^2 / 2 \over \rho_jnN\psi_{jk}+ 2nN\rho_j t / 3}
		\right),
		\end{align*}
		which further implies
		\begin{align*}
		\PP\left\{
		{1\over nN}\left|
		\sum_{t = 1}^n\sum_{r = 1}^N W_{kt}\zeta_{rt}
		\right| > \sqrt{\rho_j\psi_{jk} t\over nN} + {2\rho_j t \over 3nN}
		\right\} \le 2e^{-t/2}
		,\quad \text{for any }t>0.
		\end{align*}
		Choosing $t = 6\log(d)$ yields
		\begin{equation*}
		{\rm Rem_{11}}^{(jk)} \le {K\over \g_k}\sqrt{6\rho_j\psi_{jk} \log(d)\over nN} + {4\rho_j K\log(d) \over \g_k nN}
		\end{equation*}
		with probability $1-2d^{-3}$. Taking the union bound for probabilities completes the proof. 
	\end{proof}
	
	\begin{lemma}\label{lem_Rem_13}
		Under conditions of Theorem \ref{thm_rate_Ahat}, with probability $1-6d^{-1}$, we have
		\begin{equation}\label{bd_ Rem_13}
		{\rm Rem_{13}}^{(jk)} \lesssim  {K\over \g_k}\sqrt{\rho_j\psi_{jk} \log(d)\over nN} + \sqrt{{\u_j \over p}{\rho_jK \log(d)\over \g_k n N}} + \rho_j\sqrt{p\log^4(d)\over \uu_L nN^3}
		\end{equation}
		uniformly for $j\in [p]$ and $k\in [K]$. 
	\end{lemma}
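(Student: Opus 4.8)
The plan is to follow the template of the proof of Lemma~\ref{lem_Rem_11}, now for a quadratic rather than linear functional of the multinomial noise. First I would use $\u_i=\alpha_i\g_k/K$ for $i\in L_k$, from (\ref{eq_mu_I}), to rewrite
\[
{\rm Rem_{13}}^{(jk)}=\frac{K}{\g_k}\max_{i\in L_k}\frac{1}{A_{ik}}\left|\frac1n\sum_{t=1}^n\left(\eps_{it}\,\eta^\top\eps_{Lt}-\EE\left[\eps_{it}\,\eta^\top\eps_{Lt}\right]\right)\right|,
\]
where $\eps_{Lt}\in\R^{|L|}$ is the restriction of the $t$-th noise column to $L$ and $\eta$ is exactly the vector of (\ref{def_eta}). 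Two facts I would record up front are $\|\eta\|_\infty\le\rho_j$ (from $A_{\ell a}\ge\ua_L/p$, $A_{ja}\le\alpha_j/p$ and $|L_a|\ge 1$) and the telescoping identity $\eta^\top\M_{Lt}=\M_{jt}$ (each anchor word $\ell\in I_a$ contributes $A_{ja}W_{at}$). This turns the claim into a tail bound, uniform over $i\in L_k$, $j\in[p]$, $k\in[K]$, for a centered quadratic form.

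Next I would substitute the single-draw representation $\eps_{\cdot t}=N^{-1}\sum_{r=1}^N Z_{rt}$ of (\ref{decomp_eps}), with the $Z_{rt}$ i.i.d.\ in $r$ given $W$ and equal to a centered $\text{Multinomial}_p(1;\M_t)$, and split the centered product at fixed $t$ into a \emph{diagonal} part $N^{-2}\sum_{r}(Z_{rt}^{(i)}\eta^\top Z_{rt}^L-\EE[\cdot\mid W_t])$ and an \emph{off-diagonal} part $N^{-2}\sum_{r\ne r'}Z_{rt}^{(i)}\eta^\top Z_{r't}^L$. Summed over $t$ and $r$, the diagonal part is a centered sum of $nN$ conditionally independent terms, each of size $O(\rho_j)$; its conditional variance I would bound by $\EE[(Z_{1t}^{(i)}\eta^\top Z_{1t}^L)^2\mid W_t]\le 2\M_{it}(\eta_i^2+\M_{jt}^2)+\rho_j\M_{it}^2\M_{jt}$ and, after summing in $t$ via $\M_{it}=A_{ik}W_{kt}$, the identity $\psi_{jk}=n^{-1}\sum_t W_{kt}\M_{jt}$, the inequality $\psi_{jk}\ge A_{jk}C_{kk}$, and $C_{kk}\le\g_k/K$ (from (\ref{disp_sum_C})), simplify to a bound of order $\rho_j\,nN\,\psi_{jk}\,A_{ik}^{-2}$ plus a $\u_j/p$-type remainder. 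Bernstein's inequality (Lemma~\ref{lem_bernstein}) with $t=6\log d$ then yields, after multiplying back by $K/(A_{ik}\g_k)$ and using $A_{ik}\ge\ua_L/p$ and the lower bounds (\ref{lb_uL_mL}), the first and second terms of (\ref{bd_ Rem_13}).

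For the off-diagonal part I would apply a standard decoupling inequality, replacing $\{Z_{r't}\}_{r'}$ in the second factor by an independent copy so that, conditionally on the first factor, the inner sum is again a sum of conditionally independent terms; one Bernstein bound controls the inner sum with conditional variance governed by $\eta^\top\diag(\M_{Lt})\eta\le\rho_j\M_{jt}$ (hence by $\psi_{jk}$ after summing), and a second Bernstein bound over the outer randomness --- whose variance carries the extra factor $N^{-1}$ from $\mathrm{Var}(\eps_{it}\mid W)\le\M_{it}/N$ --- produces the third term $\rho_j\sqrt{p\log^4(d)/(\uu_LnN^3)}$, using $\uu_L=\ua_L\ug/K$ and (\ref{lb_uL_mL}). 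A union bound over the $O(d^2K)$ triples $(i,j,k)$ is absorbed by the $d^{-3}$ per-coordinate tails, giving the stated $1-6d^{-1}$. An alternative that avoids the from-scratch computation is to apply Lemma~\ref{lem_t4}, in the form valid for $i,\ell\in L$ under (\ref{cond_Pi_I}), to each $|n^{-1}\sum_t(\eps_{it}\eps_{\ell t}-\EE[\eps_{it}\eps_{\ell t}])|$ and then collapse the weighted sum using $\Theta_{i\ell}=A_{ik}A_{\ell a}C_{ka}$ from (\ref{eq_Theta_II}), the Cauchy--Schwarz step $\sum_a A_{ja}\sqrt{C_{ka}}\le\sqrt{\|A_{j\cdot}\|_1\,\psi_{jk}}$, $\sum_a C_{ka}=\g_k/K$, and (\ref{lb_uL_mL}); the $(\u_i+\u_\ell)\log d/(pN)$ term of Lemma~\ref{lem_t4} then routes into the $\rho_j\sqrt{p\log^4(d)/(\uu_LnN^3)}$ term.

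I expect the main obstacle to be the bookkeeping for the off-diagonal (U-statistic) piece: making the decoupling plus two-stage Bernstein argument land on \emph{exactly} the three-term bound requires simultaneously tracking the weights $A_{ja}/A_{\ell a}$, the averaging factors $1/|L_a|$, and the several conditional variances, and repeatedly invoking the anchor-word lower bound $\alpha_\ell\ge\ua_L$ together with $\psi_{jk}\le\alpha_j\g_k/(pK)$ (inequality (\ref{ubd_psi})) to collapse the sums over $a$ without losing superfluous factors of $K/\g_k$; by contrast the diagonal part and the initial reduction are routine once the $\eta$-representation of (\ref{def_eta}) is in place.
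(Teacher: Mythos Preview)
Your approach is plausible but takes a genuinely different route from the paper, and the paper's argument is considerably simpler.

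The paper does \emph{not} split $\eps_{it}\xi_t$ into diagonal and off-diagonal pieces at the single-draw level, and it does not invoke any decoupling inequality. Instead it treats the weighted sum $\xi_t=\eta^\top\eps_{Lt}$ as a single scalar and truncates each factor separately: a Bernstein bound on $\eps_{it}$ gives $|\eps_{it}|\le T_t$ with high probability (this is the same device used in the proof of Lemma~\ref{lem_t4}), and a Bernstein bound on $\sum_r\zeta_{rt}$ --- exactly your $\zeta_{rt}$ with the variance control $\mathrm{Var}(\sum_r\zeta_{rt})\le N\rho_j\M_{jt}$ from $\eta^\top\diag(\M_{Lt})\eta\le\rho_j\M_{jt}$ --- gives $|\xi_t|\le T_t'$. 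On the intersection of these events the product $Y_{it}Y_t'$ is bounded by $T_tT_t'$, so a single application of Hoeffding's inequality to the sum $\sum_t(Y_{it}Y_t'-\EE[Y_{it}Y_t'])$ controls $R_1$; the remaining discrepancy $R_2=n^{-1}|\sum_t(\EE[\eps_{it}\xi_t]-\EE[Y_{it}Y_t'])|$ is bounded trivially by $O(\rho_j d^{-3})$ via the truncation-failure probabilities. The three terms of (\ref{bd_ Rem_13}) then fall out of the computation of $n^{-1}\sum_tT_t^2(T_t')^2$ and two applications of (\ref{lb_uL_mL}).

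What this buys over your proposal: the truncation-then-Hoeffding trick sidesteps the U-statistic structure entirely, so there is no two-stage Bernstein and no decoupling constant to track. Your primary route would likely succeed --- the variance bound you wrote for the diagonal piece is correct, and the off-diagonal piece is a degenerate second-order U-statistic amenable to decoupling --- but the bookkeeping you correctly flag as the main obstacle is substantial, and the attribution of which target term comes from which piece is not quite as you describe (for instance, in the paper's argument the $\sqrt{(\u_j/p)\rho_jK\log(d)/(\g_knN)}$ term arises from the $\rho_j\M_{jt}\log^3(d)/N^3$ cross term in $T_t^2(T_t')^2$, not from a diagonal variance). Your alternative via Lemma~\ref{lem_t4} applied pairwise and summed is closer in spirit, but the paper gains precision by truncating the \emph{weighted sum} $\xi_t$ directly rather than each $\eps_{\ell t}$; the pairwise route introduces an extra Cauchy--Schwarz over $a$ that tends to cost a factor of $\sqrt{\|A_{j\cdot}\|_1}$.
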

	\begin{proof}
		Recall that 
		\begin{align*}
			{\rm Rem_{13}}^{(jk)}
			&= \max_{i\in L_k}{p\over \u_i}\left|
			{1\over n}\sum_{t = 1}^n\left(\eps_{it}\xi_t- \EE \left[\eps_{it}\xi_t\right]\right)
			\right|.
		\end{align*}
		Using (\ref{decomp_eps}) and (\ref{def_zeta}), we have  
		\[
			\xi_t := \sum_{a=1}^{K}\sum_{\ell\in L_a}{A_{ja} \over A_{\ell a} |L_a|}\eps_{\ell t} ={1\over N}\sum_{r=1}^N\zeta_{rt}.
		\]
	    We will use  similar truncation arguments in tandem with Hoeffding's inequality as   in \cite[proof of Lemma 15]{Top}. This implies that, for any $i\in L$, 
	    \[
	     \PP\left\{  \left| \eps_{it} \right| \le {\sqrt{6\M_{it}\log(d)\over N}} + {2\log(d)\over N} := T_t\right\} \ge 1-2d^{-3}.
	    \]
	    To truncate $\zeta_{rt}$, recall that $\EE[\zeta_{rt}] = 0$, $|\zeta_{rt}| \le 2\rho_j$ from (\ref{bd_zeta}) and 
	    \begin{align*}
	      	{\rm Var}\left(\sum_{r = 1}^N\zeta_{rt}\right) &= N 	 \eta^\top \diag(\M_{Lt})\eta\\\nonumber
	      	& =  N  \sum_{a=1}^K\sum_{\ell\in L_a} \M_{\ell t}\left({A_{ja} \over A_{\ell a}} {1\over |L_a|}\right)^2\\\nonumber
	      	&\le N  \sum_{a=1}^K{1\over |L_a|}\sum_{\ell\in L_a} W_{a t}{A_{ja}^2 \over A_{\ell a}} \\
	      	&\le N\rho_j  \M_{jt}
	    \end{align*}
	    where $\eta$ is defined in (\ref{def_eta}). Invoking Lemma \ref{lem_bernstein} with $B = 2\rho_j$ and $v = N\rho_j  \M_{jt}$ yields
	    \begin{align*}
	    \PP\left\{
	    {1\over N}\left|
	   \sum_{r = 1}^N \zeta_{rt}
	    \right| \le \sqrt{6\rho_j \M_{jt} \log(d)\over N} + {4\rho_j \log(d) \over N} := T'_{t}
	    \right\} \ge 1-2d^{-3}.
	    \end{align*}
	     We define 
			$
			Y_{it} = \eps_{it}\1_{\S_{t}}
			$ with
			$
			\S_{t} := \left\{|\eps_{it}| \le T_t \right\}
			$
			and $Y'_{t} = \xi_t \1_{\S'_t}$ with $\S'_t := \left\{|\xi_t| \le T'_t\right\}$, 
			for each $i\in [p]$ and  $t\in [n]$, and set 
			$
			\S := \cap_{i=1}^p\cap_{t=1}^n\S_{t} \cap \S'_t.
			$
			It follows that 
			$\PP(\S ) \ge 1- 4d^{-1}$. 
			On the event $\S$, we have
			\[
			{1\over n}\left|\sum_{t =1}^n\left(\eps_{it}\xi_t- \EE[\eps_{it}\xi_t]\right) \right|\le 
			\underbrace{{1\over n}\left|\sum_{t =1}^n\left(Y_{it}Y'_t- \EE[Y_{it}Y'_t]\right) \right|}_{R_1}+ \underbrace{{1\over n}\left|\sum_{t =1}^n\left(\EE[\eps_{it}\xi_t]- \EE[Y_{it}Y'_t]\right) \right|}_{R_2} 
			\]
			Since
			\begin{align*}
			\EE[\eps_{it}\xi_t] &= \EE[Y_{it}Y'_t]+\EE\left[Y_{it}\xi_t\1_{(\S'_{t})^c}\right]+\EE\left[\eps_{it}\1_{\S_t^c}\xi_t\right],
			\end{align*}
			we have
			\begin{align}\nonumber
			R_2 ={1\over n}\left|\sum_{t =1}^n\left(\EE[\eps_{it}\xi_t]- \EE[Y_{it}Y'_t]\right) \right| & \le {1\over n}\left|\sum_{t =1}^n\left(\EE\left[Y_{it}\xi_t\1_{(\S'_t)^c}\right]+\EE\left[\eps_{it}\1_{\S_{t}^c}\xi_t\right]\right) \right|\\\label{eq_T2}
			&\le {1\over n}\sum_{t =1}^n2\rho_j\left(\PP(\S_{t}^c)+\PP((\S'_t)^c) \right)\\ &\le 8\rho_j d^{-3}\nonumber
			\end{align}
			by using $|Y_{it}| \le |\eps_{it}| \le 1$ and $|\xi_t| \le |\zeta_{rt}| \le 2\rho_j$ in the second inequality. 
			
			It remains to bound $R_1$. Since $|Y_{it}| \le T_t$, we know $-2T_tT'_t\le Y_{it}Y'_t - \EE[Y_{it}Y'_t]\le 2T_tT'_t$ for all $1\le t\le n$. Applying   Hoeffding's inequality (Lemma \ref{hoeff}) with $a_t = -2T_tT'_{t}$ and $b_t = 2T_{t}T'_{t}$ gives 
			\[
			\PP\left\{ \left|\sum_{t=1}^n\left(Y_{it}Y'_t- \EE[Y_{it}Y'_{t}]\right) \right| \ge t \right\} \le 2\exp\left(-{t^2 \over 8\sum_{t =1}^nT_{t}^2(T'_t)^2}\right).
			\] 
			Taking $t = \sqrt{24\sum_{t =1}^nT_{t}^2(T'_t)^2\log(d)}$ yields
			\begin{equation}\label{eq_T1}
			R_1 = {1\over n}\left|\sum_{t =1}^n\left(Y_{it}Y'_t- \EE[Y_{it}Y'_t]\right) \right| \le  2\sqrt{6}\left({1\over n}\sum_{t =1}^nT_t^2(T'_t)^2\cdot {\log(d)\over n}\right)^{1/2}
			\end{equation}
			with probability greater than $1- 2d^{-3}$. Finally, note that 
			\begin{align}\label{eq_TjiTelli}\nonumber
			{1\over 4n}\sum_{i =1}^nT_{t}^2(T'_t)^2 & \le  {1\over n}\sum_{t =1}^n\left\{
			{\M_{it}\rho_j\M_{jt}\log^2(d) \over N^2} + {\rho_j^2\log^4(d) \over N^4} + {\rho_j\M_{jt} \log^3(d) \over N^3} + {\M_{it}\rho_j^2\log^3(d) \over N^3}
			\right\}\\\nonumber
			&=
			{\rho_j\Theta_{ij}\log^2(d) \over N^2} + {\rho_j^2\log^4(d) \over N^4} + {\rho_j\u_j \log^3(d) \over pN^3} + {\rho_j^2\u_i\log^3(d) \over pN^3}\\
			&\lesssim 	{\rho_j\a_i \psi_{jk}\log^2(d) \over pN^2} + {\rho_j\u_j \log^3(d) \over pN^3} + {\rho_j^2\u_i\log^3(d) \over pN^3}
			\end{align}
			by using (\ref{def_uag}) in the second equality and (\ref{lb_uL_mL}) and (\ref{eq_Theta_Ij}) to obtain the last line.  Finally, combining (\ref{eq_T2}) - (\ref{eq_TjiTelli}) gives
			\begin{align*}
				{\rm Rem_{13}}^{(jk)} &\lesssim {K\over \g_k}	\sqrt{\rho_j \psi_{jk} p\log^3(d) \over \ua_L nN^2} +{K\over \g_k}\sqrt{\rho_j{\u_j \over p}{ p^2\log^4(d) \over \ua_L^2nN^3}} + \rho_j\sqrt{p\log^4(d) \over \uu_L nN^3} + {p\rho_j \over \uu_L d^3}\\
				&\lesssim  {K\over \g_k}	\sqrt{\rho_j \psi_{jk} \log(d) \over  nN} +\sqrt{\rho_j{\u_j \over p}{K\log(d) \over \g_k nN}} + \rho_j\sqrt{p\log^4(d) \over \uu_L nN^3},
			\end{align*}
			for all $j,k$, with probability $1-6d^{-1}$. We also use  (\ref{lb_uL_mL}) and 
			$
				\ua_L\g_k \ge \ua_L\ug \ge c_0pK\log(d)/N.
			$
			This completes the proof. 
	\end{proof}
	
	 \subsection{Auxilliary lemmas}
	 In this section, we state three lemmas which are used in the main paper. The following lemma gives the range of $\lambda_{\min}(\wt Q\wt Q^\top )$ where $\wt Q = D_Q^{-1}Q$ and $Q = CA^\top $.
	\begin{lemma}\label{lem_QQ}
		Let $C = n^{-1}WW^\top $ and $M = D_W^{-1}CD_W^{-1}$. We have
		\[
		\left( 
		\min_{k\in [K], i\in I_k}A_{ik}^2
		\right)\lambda_{\min}(C)\lambda_{\min}(M) \le \lambda_{\min}(\wt Q \wt Q^\top ) \le \lambda_{\min}(M).
		\]
	\end{lemma}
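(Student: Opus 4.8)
The plan is to sandwich $A^\top A$ between $a_0^2\bI_K$ and $C^{-1}$ and then conjugate by $D_W^{-1}C$. First I would record that $D_Q = D_W$: indeed $Q\1_p = CA^\top\1_p = C\1_K$ and $\sum_\ell C_{k\ell} = n^{-1}\sum_t W_{kt}\sum_\ell W_{\ell t} = n^{-1}\sum_t W_{kt} = (D_W)_{kk}$, so $\wt Q = D_W^{-1}CA^\top$ and
\[
\wt Q\wt Q^\top = (D_W^{-1}C)\,(A^\top A)\,(CD_W^{-1}),\qquad M = (D_W^{-1}C)\,C^{-1}\,(CD_W^{-1}).
\]
Under Assumption~\ref{ass_pd_W} the matrices $C$ and $D_W$ are positive definite, so $C^{-1}$, $C^{1/2}$ and $M$ are all well defined.

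For the upper bound I would prove $A^\top A \preceq C^{-1}$, which (conjugating by the invertible $C^{1/2}$) is equivalent to $C^{1/2}(A^\top A)C^{1/2}\preceq\bI_K$. Now $C^{1/2}(A^\top A)C^{1/2} = (AC^{1/2})^\top (AC^{1/2})$ shares its largest eigenvalue with $(AC^{1/2})(AC^{1/2})^\top = ACA^\top = \Theta$, and since $\Theta = n^{-1}\M\M^\top\succeq 0$ we get $\lambda_{\max}(\Theta)\le \mathrm{tr}(\Theta) = n^{-1}\sum_{i,j}\M_{ji}^2 \le n^{-1}\sum_i\sum_j\M_{ji} = 1$, using $0\le\M_{ji}\le 1$ and $\sum_j\M_{ji}=1$. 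Hence $C^{1/2}(A^\top A)C^{1/2}\preceq\bI_K$, i.e.\ $A^\top A\preceq C^{-1}$; conjugating the first display by $D_W^{-1}C$ then yields $\wt Q\wt Q^\top\preceq M$, so $\lambda_{\min}(\wt Q\wt Q^\top)\le\lambda_{\min}(M)$.

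For the lower bound I would use separability. Each anchor row $A_{i\cdot}$, $i\in I_k$, equals $A_{ik}e_k^\top$, so $A_I^\top A_I = \diag\!\big(\sum_{i\in I_1}A_{i1}^2,\dots,\sum_{i\in I_K}A_{iK}^2\big)\succeq a_0^2\,\bI_K$ with $a_0^2 := \min_{k\in[K],\,i\in I_k}A_{ik}^2$ (each $I_k\neq\emptyset$ by Assumption~\ref{ass_sep}). Dropping the remaining nonnegative rank-one terms, $A^\top A\succeq A_I^\top A_I\succeq a_0^2\bI_K$, hence $\wt Q\wt Q^\top\succeq a_0^2\,D_W^{-1}C^2 D_W^{-1}$. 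For any unit vector $v$, setting $u = D_W^{-1}v$ gives $v^\top D_W^{-1}C^2 D_W^{-1} v = u^\top C^2 u\ge\lambda_{\min}(C)\,u^\top C u = \lambda_{\min}(C)\,v^\top M v\ge\lambda_{\min}(C)\lambda_{\min}(M)$, using $C^2\succeq\lambda_{\min}(C)\,C$. This yields $\lambda_{\min}(\wt Q\wt Q^\top)\ge a_0^2\,\lambda_{\min}(C)\,\lambda_{\min}(M)$, completing the two-sided bound.

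The step I expect to be the crux is the inequality $A^\top A\preceq C^{-1}$: at face value it looks too strong, since $\lambda_{\max}(A^\top A)$ can be of order $K$ whereas $\lambda_{\min}(C^{-1})\ge1$. It is rescued by the observation that it is exactly equivalent to $\Theta=ACA^\top\preceq\bI$, which is a one-line trace bound because the columns of $\M=AW$ are probability vectors — the large eigenvalues of $A^\top A$ are forced to lie in directions that $C$ contracts. Everything else is routine symmetric-matrix algebra.
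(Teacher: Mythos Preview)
Your proof is correct and follows essentially the same route as the paper: both establish $D_Q=D_W$, rewrite $\wt Q\wt Q^\top = D_W^{-1}C(A^\top A)CD_W^{-1}$, and for the lower bound use $A^\top A\succeq A_I^\top A_I\succeq a_0^2\bI_K$ together with $C^2\succeq\lambda_{\min}(C)\,C$. The only real difference is in how you bound $\|ACA^\top\|_{\rm op}\le 1$ for the upper bound: you use the trace bound $\lambda_{\max}(\Theta)\le\mathrm{tr}(\Theta)=n^{-1}\sum_{i,j}\M_{ji}^2\le 1$, whereas the paper uses the row-sum bound $\|ACA^\top\|_{\rm op}\le\|ACA^\top\|_{\infty,1}=\|AC\1_K\|_\infty=n^{-1}\|\M\1_n\|_\infty\le 1$; both are one-line consequences of the columns of $\M$ being probability vectors.
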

	\begin{proof}
	Observe  that
		\[
			 D_Q = Q\1_p = CA^\top \1_p = C\1_K \overset{(\ref{disp_sum_C})}{=} D_W,
		\]
	whence
		\begin{equation}\label{lbd_min_QQ}
			\lambda_{\min}(\wt Q\wt Q^\top ) = \inf_{v\in \S^{K-1}} v^\top  D_W^{-1}CA^\top ACD_W^{-1} v.
		\end{equation}
		On the one hand, 
		\begin{align*}
		\inf_{v\in \S^{K-1}} v^\top  D_W^{-1}CA^\top ACD_W^{-1} v &\le \|C^{1/2}A^\top AC^{1/2}\|_{\rm{\rm{op}}} \inf_{v\in \S^{K-1}} v^\top  D_W^{-1}CD_W^{-1} v\\
		&= \|ACA^\top \|_{\rm{\rm{op}}}\cdot  \lambda_{\min}(M).
		\end{align*}
		The upper bound now follows using (\ref{col_sum_one})  and $\|ACA^\top \|_{\rm{\rm{op}}} \le  1$. The latter follows from the string of inequalities 
		\[
			\|ACA^\top \|_{\rm{\rm{op}}} \le \|ACA^\top \|_{\r} = \|ACA^\top \1_p\|_\i = \|AC\1_K\|_\i = {1\over n}\|\M \1_n\|_\i \le 1.
		\]
		The lower bound follows immediately from
		\begin{align*}
			\lambda_{\min}(\wt Q\wt Q^\top )  &\ge \lambda_{\min}(A^\top A)\lambda_{\min}(C)\lambda_{\min}(M)\\ 
			&\ge \lambda_{\min}(A_I^\top A_I)\lambda_{\min}(C)\lambda_{\min}(M)\\
			&\ge \left(\min_{k\in [K], i\in I_k}A_{ik}^2\right)\lambda_{\min}(C)\lambda_{\min}(M).
	  \end{align*}
	  	
  \end{proof}

	  \begin{lemma}\label{lem_lbd_min_C}
	  	Let $C = n^{-1}WW^\top $. We have
	  		\[
	  			\lambda_{\min}(C) \le {1\over K}.
	  		\]
	  \end{lemma}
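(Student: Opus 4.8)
The plan is to bound $\lambda_{\min}(C)$ from above by evaluating the Rayleigh quotient of $C$ at a single, carefully chosen unit vector. Since $C = n^{-1}WW^\top$ is symmetric and positive semidefinite, the variational characterization gives $\lambda_{\min}(C) = \inf_{v\in \S^{K-1}} v^\top C v$, so it suffices to exhibit one $v_0\in \S^{K-1}$ with $v_0^\top C v_0 \le 1/K$.

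I would take $v_0 = K^{-1/2}\1_K$, which has unit $\ell_2$ norm. Then
\[
	v_0^\top C v_0 = {1\over nK}\,\1_K^\top WW^\top \1_K = {1\over nK}\bigl\|W^\top \1_K\bigr\|_2^2 .
\]
The only input needed is the column-normalization of $W$ recorded in (\ref{col_sum_one}): for each $i\in[n]$ we have $\sum_{k=1}^K W_{ki}=1$, i.e. $\1_K^\top W = \1_n^\top$, hence $W^\top \1_K = \1_n$ and $\|W^\top \1_K\|_2^2 = n$. Substituting yields $v_0^\top C v_0 = n/(nK) = 1/K$, which is exactly the asserted bound.

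There is essentially no obstacle here: the argument is two lines once the test vector is identified, and the all-ones vector is the natural choice precisely because the simplex constraint on the columns of $W$ forces $W^\top \1_K = \1_n$. If one prefers a trace-based variant, note that $\lambda_{\min}(C) \le K^{-1}\mathrm{tr}(C) = (nK)^{-1}\sum_{k\in[K],\,i\in[n]} W_{ki}^2 \le (nK)^{-1}\sum_{k\in[K],\,i\in[n]} W_{ki} = (nK)^{-1} n = 1/K$, using $W_{ki}\in[0,1]$ in the middle inequality together with $\sum_k W_{ki}=1$; either route establishes the claim.
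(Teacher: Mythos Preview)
Your proof is correct. Your primary argument differs from the paper's: the paper bounds $\lambda_{\min}(C)$ by the smallest diagonal entry $\min_k C_{kk} = \min_k n^{-1}\|W_{k\cdot}\|^2$, then uses $W_{ki}^2\le W_{ki}$ to get $n^{-1}\|W_{k\cdot}\|^2 \le n^{-1}\|W_{k\cdot}\|_1 = \g_k/K$, and finally $\min_k \g_k/K \le K^{-1}\sum_k \g_k/K = 1/K$. Your test-vector approach with $v_0=K^{-1/2}\1_K$ is more direct: it exploits the column normalization $W^\top\1_K=\1_n$ in one stroke and yields $v_0^\top C v_0 = 1/K$ exactly, bypassing the intermediate inequalities. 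Your trace-based alternative is essentially the paper's argument rewritten with the average of the diagonal entries in place of their minimum; both routes ultimately rest on $W_{ki}\in[0,1]$ and $\sum_k W_{ki}=1$. The test-vector version is the cleanest of the three.
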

	 \begin{proof}
	 		From the definition of the smallest eigenvalue, 
	 		\[
	 			\lambda_{\min}(C) = \inf_{v\in \S^{K-1}}v^\top Cv  \le \min_{1\le k\le K}C_{kk} = \min_{1\le k\le K}{1\over n}\|W_{k\cdot}\|^2.
	 		\]
	 		The result  follows from 
	 		\[
	 			{1\over n}\|W_{k\cdot}\|^2 \le {1\over n}\|W_{k\cdot}\|_1 \overset{(\ref{def_alpha_gamma})}{=} \g_k
	 		\] 
	 		and $$\min_k \g_k \le \sum_{k = 1}^K\g_k / K = 1/K.$$
	 \end{proof}

	\begin{lemma}\label{lem_mu_hat}
		Under condition (\ref{cond_Pi_I}), 
		\[
		\PP\left\{
		\left|  \min_{i\in L} (D_X)_{ii} - {\uu_L\over p} \right| \le 6\sqrt{\log(d) \over nN}
		\right\}\ge 1-2d^{-1}.
		\]
	\end{lemma}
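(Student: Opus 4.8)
The plan is to reduce the claim to a uniform deviation bound for the centered noise averages $n^{-1}\sum_{t=1}^n\eps_{it}$, $i\in L$, and then quote Lemma \ref{lem_t1}. First I would observe that, by (\ref{mu_hat}), $(D_X)_{ii}=n^{-1}\sum_{t=1}^n X_{it}=\wh\u_i/p$, while by definition $\uu_L/p=\min_{i\in L}\u_i/p$. Writing $X_{it}=\M_{it}+\eps_{it}$ and using the elementary inequality $|\min_i a_i-\min_i b_i|\le\max_i|a_i-b_i|$, one gets
\[
\left|\min_{i\in L}(D_X)_{ii}-{\uu_L\over p}\right|
=\left|\min_{i\in L}{\wh\u_i\over p}-\min_{i\in L}{\u_i\over p}\right|
\le\max_{i\in L}{|\wh\u_i-\u_i|\over p}
=\max_{i\in L}{1\over n}\left|\sum_{t=1}^n\eps_{it}\right|,
\]
so it suffices to show the right-hand side is at most $6\sqrt{\log(d)/(nN)}$ on an event of probability $1-2d^{-1}$.

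Next I would invoke the first (unconditional) bound of Lemma \ref{lem_t1}: with probability $1-2d^{-1}$, simultaneously over all $j\in[p]$,
\[
{1\over n}\left|\sum_{t=1}^n\eps_{jt}\right|\le 2\sqrt{\u_j\log(d)\over npN}+{4\log(d)\over nN}.
\]
On this event, two elementary facts finish the job for $i\in L$. Since $\u_i/p=n^{-1}\sum_{t=1}^n\M_{it}\le 1$, the first term is bounded by $2\sqrt{\log(d)/(nN)}$. And condition (\ref{cond_Pi_I}), combined with $\u_i/p\le 1$, forces $c_0\log(d)/N\le\u_i/p\le 1$ for $i\in L$, hence $\log(d)/(nN)\le\log(d)/N\le 1$ when $c_0\ge 1$; therefore $4\log(d)/(nN)\le 4\sqrt{\log(d)/(nN)}$. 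Adding the two bounds gives exactly $\max_{i\in L}n^{-1}|\sum_{t=1}^n\eps_{it}|\le 6\sqrt{\log(d)/(nN)}$, and hence the lemma, all on the single event furnished by Lemma \ref{lem_t1}.

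There is essentially no obstacle. The only point requiring a moment's thought is that the sharper second bound in Lemma \ref{lem_t1} is not directly available, since it needs $\min_{j\in[p]}\u_j/p\ge\log(d)/(nN)$ whereas (\ref{cond_Pi_I}) only lower-bounds $\u_j/p$ for $j\in L$; using the first bound and absorbing its additive $\log(d)/(nN)$ term via $\log(d)/(nN)\le 1$, as above, avoids this issue and still yields the stated constant $6$.
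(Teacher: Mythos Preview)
Your proof is correct and follows essentially the same route as the paper: reduce $|\min_{i\in L}(D_X)_{ii}-\uu_L/p|$ to $\max_{i\in L}n^{-1}|\sum_t\eps_{it}|$, invoke Lemma \ref{lem_t1}, and use $\u_j/p\le 1$ to arrive at the constant $6$. Your version is simply more explicit---in particular, your observation that the second bound of Lemma \ref{lem_t1} is stated for all $j\in[p]$ rather than only $j\in L$, and your workaround via the first bound together with $\log(d)/(nN)\le 1$, is a careful detail that the paper's terse proof leaves implicit.
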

	\begin{proof}
		For any $i\in L$, note that $(D_X)_{ii} - \u_i / p = n^{-1}\sum_{t = 1}^n\eps_{it}$. The  result follows from Lemma \ref{lem_t1} and the inequalities $\u_j / p \le 1$ for all $j\in[p]$ by (\ref{def_alpha_gamma}).
	\end{proof}

	For completeness, we state the well-known Bernstein  and Hoeffding inequalities for bounded random variables.
	\begin{lemma}[Bernstein's inequality for bounded random variables]\label{lem_bernstein}
		For independent random variables $Y_1, \ldots, Y_n$ with bounded ranges $[-B, B]$ and zero means, 
		\[
		\PP\left\{ {1\over n}\left|\sum_{i =1}^nY_i \right| > x \right\} \le 2\exp\left(-{n^2x^2 /2 \over v + nBx/3} \right),\qquad \text{for any }x\ge 0,
		\]
		where $v \ge var(Y_1 + \ldots + Y_n)$.
	\end{lemma}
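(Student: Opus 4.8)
The plan is to use the standard exponential-moment (Chernoff) method. Write $S := \sum_{i=1}^n Y_i$ and $t := nx$; then the claim is equivalent to $\PP\{|S| > t\} \le 2\exp\bigl(-t^2/(2v + 2Bt/3)\bigr)$, and by symmetry it suffices to bound $\PP\{S > t\}$. The first step is to control, for each $i$ and for $0 < \lambda < 3/B$, the moment generating function $\EE[e^{\lambda Y_i}]$. Since $\EE[Y_i] = 0$ and $|Y_i| \le B$, we have $\EE[Y_i^k] \le \EE[Y_i^2]\, B^{k-2}$ for every integer $k \ge 2$; expanding the exponential and using the elementary bound $k! \ge 2\cdot 3^{k-2}$ gives
\[
\EE[e^{\lambda Y_i}] \;=\; 1 + \sum_{k \ge 2} \frac{\lambda^k \EE[Y_i^k]}{k!} \;\le\; 1 + \frac{\lambda^2 \EE[Y_i^2]}{2}\sum_{k\ge 2}\Bigl(\frac{\lambda B}{3}\Bigr)^{k-2} \;=\; 1 + \frac{\lambda^2 \EE[Y_i^2]}{2(1 - \lambda B/3)} \;\le\; \exp\!\Bigl(\frac{\lambda^2 \EE[Y_i^2]}{2(1 - \lambda B/3)}\Bigr),
\]
where the last step uses $1 + u \le e^u$.

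Next, by independence and $\sum_{i=1}^n \EE[Y_i^2] = \mathrm{var}(S) \le v$,
\[
\EE[e^{\lambda S}] \;=\; \prod_{i=1}^n \EE[e^{\lambda Y_i}] \;\le\; \exp\!\Bigl(\frac{\lambda^2 v}{2(1 - \lambda B/3)}\Bigr),
\]
so Markov's inequality yields $\PP\{S > t\} \le \exp\bigl(-\lambda t + \lambda^2 v/(2(1-\lambda B/3))\bigr)$ for every $\lambda \in (0, 3/B)$. Choosing $\lambda = t/(v + Bt/3)$ — which lies in $(0,3/B)$ whenever $v>0$, the case $v=0$ being trivial — one checks that $1 - \lambda B/3 = v/(v + Bt/3)$, so the second term in the exponent equals $\lambda t/2$ and the exponent collapses to $-\lambda t/2 = -t^2/(2(v + Bt/3))$, giving $\PP\{S > t\} \le \exp\bigl(-t^2/(2v + 2Bt/3)\bigr)$. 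Applying the identical argument to $-Y_1,\dots,-Y_n$ (whose second moments are unchanged) bounds $\PP\{-S > t\}$ by the same quantity, and a union bound over the two tails produces the factor $2$; substituting back $t = nx$ recovers the stated inequality.

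This is a classical argument and I do not anticipate any genuine obstacle. The only points requiring a little care are the moment-generating-function estimate — in particular obtaining the constant $3$ in the denominator via $k! \ge 2\cdot 3^{k-2}$ — and verifying that the optimizing choice of $\lambda$ stays inside the admissible range $(0, 3/B)$; both are routine.
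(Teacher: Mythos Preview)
Your proof is correct and complete; it is the standard Cram\'er--Chernoff derivation of Bernstein's inequality. The paper itself does not prove this lemma at all --- it merely states it (together with Hoeffding's inequality) as a well-known auxiliary result for completeness --- so there is no comparison to make, and your argument would serve as a valid proof should one be desired.
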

	
	\begin{lemma}[Hoeffding's inequality] \label{hoeff}
		Let $Y_1,\ldots,Y_n$ be independent random variables with $\EE[Y_i]=0$ and $\PP\{ a_i\le Y_i \le b_i\}=1$. 
		For any $t\ge0$, we have
		\[ \PP\left\{ \left| \sum_{i=1}^n Y_i \right| > t \right\} \le 2\exp\left( -\frac{2t^2}{\sum_{i =1}^n(b_i-a_i)^2}  \right).\]
	\end{lemma}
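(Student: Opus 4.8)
This is a standard result; the plan is the classical Chernoff / exponential-moment argument. First I would reduce to a one-sided statement: since
\[
\PP\left\{\left|\sum_{i=1}^n Y_i\right| > t\right\} \le \PP\left\{\sum_{i=1}^n Y_i > t\right\} + \PP\left\{\sum_{i=1}^n (-Y_i) > t\right\}
\]
and the variables $-Y_i$ are independent, mean zero, and supported on $[-b_i,-a_i]$, which has the same width $b_i-a_i$, it suffices to prove $\PP\{\sum_i Y_i > t\}\le \exp(-2t^2/\sum_i(b_i-a_i)^2)$ and then double. For $t=0$ the right-hand side of the claim is at least $1$, so the bound is trivial; hence assume $t>0$. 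Fixing $s>0$ and applying Markov's inequality to $\exp(s\sum_i Y_i)$ together with independence gives
\[
\PP\left\{\sum_{i=1}^n Y_i > t\right\} \le e^{-st}\,\EE\!\left[e^{s\sum_{i=1}^n Y_i}\right] = e^{-st}\prod_{i=1}^n \EE\!\left[e^{sY_i}\right].
\]

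The key ingredient is Hoeffding's lemma: if $Y$ has mean zero and lies in $[a,b]$ almost surely, then $\EE[e^{sY}]\le \exp(s^2(b-a)^2/8)$ for all $s\in\R$. I would prove this from convexity of $y\mapsto e^{sy}$ on $[a,b]$, which yields the pointwise bound $e^{sY}\le \tfrac{b-Y}{b-a}e^{sa}+\tfrac{Y-a}{b-a}e^{sb}$; taking expectations and using $\EE[Y]=0$ gives $\EE[e^{sY}]\le (1-p)e^{sa}+p\,e^{sb}=e^{\phi(u)}$, where $p:=-a/(b-a)\in[0,1]$, $u:=s(b-a)$, and $\phi(u):=-pu+\log(1-p+pe^{u})$. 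One then checks $\phi(0)=\phi'(0)=0$ and, writing $q(u):=pe^{u}/(1-p+pe^{u})\in(0,1)$, that $\phi''(u)=q(u)(1-q(u))\le \tfrac14$ by the arithmetic--geometric mean inequality, so a second-order Taylor expansion gives $\phi(u)\le u^2/8 = s^2(b-a)^2/8$.

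Inserting Hoeffding's lemma into the product bound yields, for every $s>0$,
\[
\PP\left\{\sum_{i=1}^n Y_i > t\right\} \le \exp\!\left(-st+\frac{s^2}{8}\sum_{i=1}^n (b_i-a_i)^2\right),
\]
and minimizing this convex quadratic in $s$ at $s^\star = 4t/\sum_i(b_i-a_i)^2$ produces the exponent $-2t^2/\sum_i(b_i-a_i)^2$; combining with the reduction above gives the two-sided bound with the factor $2$. The only step that requires any care is Hoeffding's lemma, and within it the uniform bound $\phi''\le\tfrac14$; the rest is a routine application of Markov's inequality, independence, and the minimization of a one-dimensional quadratic.
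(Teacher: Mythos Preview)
Your argument is the standard and correct proof of Hoeffding's inequality via the Chernoff bound and Hoeffding's lemma; every step (the one-sided reduction, the convexity bound leading to $\phi(u)$, the second-derivative estimate $\phi''(u)=q(u)(1-q(u))\le 1/4$, and the quadratic optimization in $s$) is valid. The paper itself does not prove this lemma at all: it merely states it ``for completeness'' as a well-known result, so there is nothing to compare against beyond noting that you have supplied the textbook proof the authors omitted.
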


	\bibliography{ref}
	
\end{document}